\PassOptionsToPackage{table}{xcolor}
\documentclass[11pt,a4paper,logo]{lumia}

\usepackage[numbers,sort&compress]{natbib}
\usepackage{etoolbox}
\usepackage{algorithm}
\usepackage{algorithmic}
\usepackage{mathtools}
\usepackage{nicefrac}
\usepackage{bm}
\usepackage{multirow}
\usepackage{threeparttable}
\usepackage{subcaption}

\graphicspath{{./}}
\hypersetup{colorlinks=true, linkcolor=blue!55!black, citecolor=teal!60!black, urlcolor=blue!55!black}

\definecolor{abstrabg}{HTML}{F2D1C5}
\definecolor{black50}{gray}{0.5}

\definecolor{citecolor}{HTML}{0071bc}

\hypersetup{
    colorlinks=true,
    linkcolor=red,
    citecolor=citecolor,
    filecolor=magenta,
    urlcolor=magenta,
    pdftitle={Improving Generalization Robustness of Multimodal RLVR},
    pdfauthor={Pengfei Zhou, Zhiwei Tang, Xiaopeng Peng, Chenrui Zhou, Lama Moukheiber, Yixing Ma, Bin Xu, Jiajun Song, Zhenglin Wan, Wangbo Zhao, Jiasheng Tang, Bohan Zhuang, Fan Wang, Yang You}
}

\newcommand{\suppsec}[1]{Appendix~#1}
\newcommand{\suppthm}[1]{Theorem~#1 in the Appendix}
\newcommand{\suppprop}[1]{Proposition~#1 in the Appendix}
\newcommand{\suppdef}[1]{Definition~#1 in the Appendix}

\newcommand{\suppalg}[1]{Algorithm~#1 in the Appendix}

\newcommand{\mainsec}[1]{Section~#1 of the main paper}
\newcommand{\maineq}[1]{Eq.~(#1) in the main paper}
\newcommand{\maintable}[1]{Table~#1 in the main paper}

\providecommand{\llbracket}{\mathopen{[\![}}
\providecommand{\rrbracket}{\mathclose{]\!]}}

\definecolor{lightgray}{RGB}{240,240,240}
\definecolor{darkgreen}{RGB}{0,150,90}
\definecolor{darkred}{RGB}{220,40,40}

\newcommand{\subindent}{\hspace{0.9em}}
\newcommand{\child}[1]{\subindent{\fontsize{7.8}{9}\selectfont #1}}

\theoremstyle{plain}
\newtheorem{theorem}{Theorem}[section]
\newtheorem{proposition}[theorem]{Proposition}
\newtheorem{lemma}[theorem]{Lemma}

\theoremstyle{definition}
\newtheorem{definition}[theorem]{Definition}
\newtheorem{assumption}[theorem]{Assumption}
\theoremstyle{remark}

\definecolor{myBlue}{RGB}{91,135,202}
\definecolor{myRed}{RGB}{196,93,93}
\definecolor{myGreen}{RGB}{94,171,105}

\correspondingemail{
\emailicon~\href{mailto:wangbo.zhao96@gmail.com}{wangbo.zhao96@gmail.com};
\emailicon~\href{mailto:jiasheng.tjs@alibaba-inc.com}{jiasheng.tjs@alibaba-inc.com}; \emailicon~\href{mailto:yangyou@nus.edu.sg}{yangyou@nus.edu.sg}
\quad $^\ddagger$ Corresponding Author
}

\title{Improving Generalization Robustness of Multimodal RLVR}
\setheadertitle{Improving Generalization Robustness of Multimodal RLVR}

\author{%
    \parbox{0.75\textwidth}{\centering
    Pengfei~Zhou$^{1}$ \quad
    Zhiwei~Tang$^{2,3,4}$ \quad
    Xiaopeng~Peng$^{6}$ \quad
    Chenrui~Zhou$^{1}$ \quad
    Lama~Moukheiber$^{7}$ \quad
    Yixing~Ma$^{5}$ \quad
    Bin~Xu$^{8}$ \quad
    Jiajun~Song$^{9}$ \quad
    Zhenglin~Wan$^{1}$ \quad
    Wangbo~Zhao$^{10}$$^\ddagger$ \quad
    Jiasheng~Tang$^{2,3}$$^\ddagger$ \quad
    Bohan~Zhuang$^{4}$ \quad
    Fan~Wang$^{2}$ \quad
    Yang~You$^{1}$$^\ddagger$ \\
    }\\
    \vspace{-0.5em}
    $^{1}$National University of Singapore, HPC-AI Lab \quad
    $^{2}$DAMO Academy, Alibaba Group \quad
    $^{3}$Hupan Lab \quad
    $^{4}$Zhejiang University \quad
    $^{5}$University of California, Berkeley \quad
    $^{6}$Rochester Institute of Technology \quad
    $^{7}$Georgia Institute of Technology \\
    $^{8}$InfRec, Cardinal AI Lab \quad
    $^{9}$Renmin University of China \quad
    $^{10}$Hong Kong University of Science and Technology \\
}

\begin{document}

\begin{abstract}
Reinforcement Learning with Verifiable Rewards (RLVR) makes Multimodal Large Language Models more accurate, but the gains are brittle: simply paraphrasing a question or changing the prompt template can degrade them, which challenges reliable deployment in high-stakes scenarios like medical VQA. We trace this to two issues of the standard RL objective. First, the binary verifier conflates format with content, so the reward signal cannot tell a wrong answer apart from a misformatted one. Second, the training distribution covers only a thin slice of the real-world prompts that the model might meet at deployment, so policies that perform well on the training distribution can behave differently under unseen prompts during test. Both failures call for a robust post-training method that helps the policy cover a broader distribution of semantically equivalent prompts, and we identify two measures that help achieve this objective: separating format from semantics in the reward, and applying policy invariance across perturbed prompts with equivalent semantics. We therefore propose \textbf{Prompt-Invariant RLVR (PIRL)}, consisting of a dynamic trinary reward and a consistency regularizer based on an embedding-space adversary. Under stress testing, PIRL's average accuracy on benchmarks drops by only $\le 1\%$, where GRPO drops $\sim 3\%$. On dynamic evaluation, PIRL also achieves the smallest performance drop.
\end{abstract}

\maketitle

\section{Introduction}
\label{sec:intro}

Reinforcement Learning with Verifiable Rewards (RLVR) drives substantial gains on visual question-answering (VQA) benchmarks for large language models and Multimodal Large Language Models (MLLMs)~\citep{shao2024deepseekmath, zhang2025survey, meng2025mm, yue2025does, zhou2025reinforced}, optimizing policies against deterministic signals such as symbolic verification or template matching. The prevailing assumption is that such rule-based reward functions incentivize generalizable reasoning~\citep{guo2025deepseek, zhou2026mdk12}.

\begin{figure}[ht]
    \centering
    \includegraphics[width=0.7\textwidth]{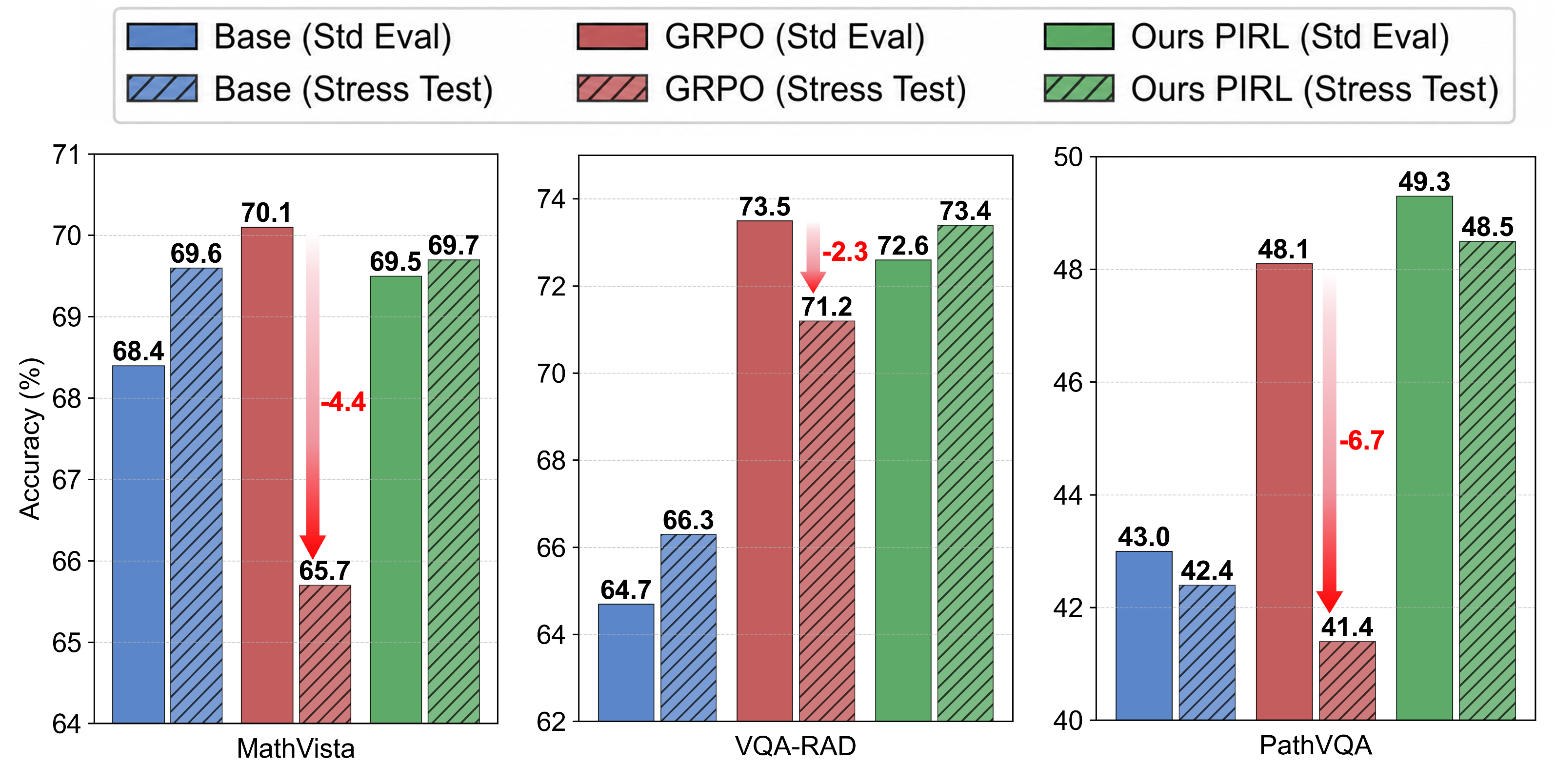}
    \caption{Standard RLVR (\textcolor{myRed}{red}) outperforms the base
      model (\textcolor{myBlue}{blue}) on training-format prompts but
      degrades sharply under transformed prompts (Stress Test);
      \textbf{PIRL} (\textcolor{myGreen}{\textbf{green}}) preserves accuracy under Stress Test.}
    \label{fig:intro}
\end{figure}

It fails under prompt perturbation. Fig.~\ref{fig:intro} shows that an RL-tuned MLLM beats its base model on the training prompt but degrades sharply when the same question is rephrased into a different template. We call this the \emph{prompt-robustness gap}: a model that scores well on the training-template prompt fails on a paraphrased restatement of the same question. This gap matters most where it is least affordable: in high-stakes domains such as medical and legal VQA, a model that degrades under perturbation is unreliable in deployment regardless of its accuracy. The open question is whether current RLVR can learn prompt-invariant reasoning.

The effect is amplified in the multimodal setting, where the visual side is high-dimensional and continuous while the format side is discrete and sparse. Optimization takes the easier route: exploit the textual format, leave visual understanding underdeveloped. This can be attributed to two issues.

\textbf{Reward Entanglement.}
The widely-used rule-based binary rewards (e.g., regex matching) conflate \emph{semantic correctness} with \emph{format compliance} via a multiplicative verifier $V = V_f \cdot V_c$. An obtained zero reward cannot distinguish multimodal reasoning failure and format non-compliance. We formalize the resulting loss of semantic reward variation in \suppthm{B.1}, which serves as a reward-level proxy for the gradient bias.

\textbf{Prompt Dependency.}
Standard RLVR optimizes over a fixed distribution of training prompts, while deployment prompts may follow different formats. We formalize this as a quantitative non-identifiability result (\suppprop{B.5}): the training objective can assign the same training reward to policies that differ on separated unseen formats, allowing a test-time reward gap proportional to the test mass on those formats.

Both failures share a single root: the training objective sees only the training distribution $P_{\text{train}}$ (Section~\ref{sec:preliminaries}) and cannot certify behavior on unobserved but semantics-preserving prompts. Therefore, we use a worst-case objective over the \emph{semantic equivalence class} $\llbracket\bm{x}\rrbracket$: transformations that preserve task semantics are applied regardless of divergence from $P_{\text{train}}$ (Section~\ref{sec:theory_dro}). For this objective we identify two design conditions, \textbf{(C1) Reward Decomposition} and \textbf{(C2) Invariance Control}, with explicit population-level and coverage-dependent consequences. Neither condition alone guarantees correctness or optimization convergence. Our proposed \textbf{Prompt-Invariant RLVR (PIRL)} instantiates (C1) with a Dynamic Trinary Reward and uses an embedding-space adversary plus consistency regularization as a practical surrogate for (C2).

Our contributions are summarized as follows:
\begin{itemize}
  \item \textbf{We reveal prompt-format overfitting as an issue of RLVR.} We identify the \emph{prompt-robustness gap}: RLVR-tuned MLLMs can improve on training-format prompts while degrading under semantics-preserving rephrasing. We show that this gap is rooted in the standard RLVR objective: binary rewards entangle format compliance with semantic correctness, and optimization over $P_{\text{train}}$ cannot distinguish prompt-invariant reasoning from template-specific behavior.
  \item \textbf{We formulate prompt robustness as optimization over semantic equivalence classes.} Rather than treating the observed prompt template as the unit of training, we define robustness over the semantic equivalence class $\llbracket \bm{x} \rrbracket$. This view yields two design principles: \textbf{Reward Decomposition}, which restores reward variation within format buckets under a non-degeneracy assumption, and \textbf{Invariance Control}, which bounds coverage-averaged answer variation across semantics-preserving prompt transformations.
  \item \textbf{We operationalize the principle with Prompt-Invariant RLVR.}
  PIRL applies a Dynamic Trinary Reward and an embedding-space prompt adversary with consistency regularization. The reward separates format compliance from semantic correctness, while the adversary perturbs the prompt embedding within a bounded neighborhood to encourage prompt-invariant behavior.
  \item \textbf{We evaluate PIRL against template augmentation.}
  Across exam, medical, and legal VQA on Qwen2.5-VL-7B and Qwen3-VL-8B, GRPO often improves in-distribution accuracy but induces larger mean drops under stress testing. PIRL substantially reduces these gaps relative to GRPO and has numerically smaller mean degradation than the multi-template GRPO baseline under both template-stress and dynamic evaluation; the component ablation does not establish that DTR or the adversary is individually necessary beyond MT.
\end{itemize}


\section{Theoretical Analysis of the Robustness Gap}
\label{sec:theory}

\subsection{Preliminary}
\label{sec:preliminaries}

An MLLM policy $\pi_\theta(\bm{y}\mid\bm{x})$ generates a response $\bm{y}$
conditioned on a prompt $\bm{x} = (\bm{v}, \bm{q}, \mathcal{I})$, where
$\bm{v}$ is a visual input, $\bm{q}$ is a question, and $\mathcal{I}$ is an
instruction specifying the output format via a delimiter such as
\texttt{\textbackslash boxed\{\}}. We define $\mathcal{F}(\mathcal{I})$ as the
format constraint induced by the instruction $\mathcal{I}$ (e.g., the regular
expression that the verifier matches against the response). Each supervised
task instance carries a ground-truth answer $a^*=a^*(\bm{x})$; semantics-preserving
transformations of $\bm{x}$ inherit the same answer label. When the answer
label is clear from context we suppress it in the reward notation. A verifier $V$
returns a binary reward $r \in \{0,1\}$ indicating whether $\bm{y}$ satisfies
the task. $R(\bm{y},\bm{x};a^*) \in [R_{\min}, R_{\max}]$ is defined as a more
general reward, with $R = V$ in the standard RLVR
setting. Let $\mathcal{D}$ denote the distribution over
task instances (visual-question pairs together with the fixed training-time
instruction and answer labels). The standard RLVR objective (e.g.,
GRPO~\citep{shao2024deepseekmath}) is
\begin{equation}
  \max_\theta\;
  \mathbb{E}_{\bm{x} \sim \mathcal{D}}\,
  \mathbb{E}_{\bm{y} \sim \pi_\theta(\cdot \mid \bm{x})}\!
  \bigl[V(\bm{y},\bm{x})\bigr].
  \label{eq:std_obj_main}
\end{equation}
GRPO optimizes Eq.~\eqref{eq:std_obj_main} via group-relative advantages on
$G$ rollouts per prompt; we define $A^{(i)}$ as the group-normalized
advantage of rollout $i$ (full form in \suppsec{E}).
For a finite set $S$, $\Delta(S)$ denotes the probability simplex on $S$; for
infinite $S$, read $\Delta(S)$ as probability measures on the relevant
measurable space.
Multi-template RLVR partially addresses prompt drift by drawing
$\bm{x}$ together with a template transformation
$\tau \sim P_{\text{train}}$, where
$P_{\text{train}}\in\Delta(\mathcal{T}_{\text{train}})$ is the training-time
distribution over the training transformation set
$\mathcal{T}_{\text{train}}$ of semantics-preserving instruction templates.
Together with
$\mathcal{D}$, this induces the joint training distribution.
In GRPO and its variants the verifier of Eq.~\eqref{eq:std_obj_main}
factorizes as a product of a format check and a correctness check,

\begin{equation}
  V(\bm{y},\bm{x})
  =
  V_f(\bm{y}, \mathcal{I})\,
  V_c(\bm{y}, \bm{q}, \bm{v}),
  \ \ \
  V_f,V_c\in\{0,1\},
  \label{eq:V_multiplicative}
\end{equation}
where $V_f = 1$ iff $\bm{y}$ matches the format constraint
$\mathcal{F}(\mathcal{I})$ and $V_c = 1$ iff $\bm{y}$ encodes the correct
answer. Three of the four outcomes $(V_f, V_c) \in \{0,1\}^2$ map to $V = 0$
and only $(1,1)$ maps to $V = 1$, so under group-relative advantage
normalization (\suppsec{E}) the three zero-reward
rollouts share the same scalar advantage; the reward/advantage signal does not
distinguish a format failure from a semantic failure or a joint failure. We identify two issues from this
observation: a reward-variance bias that acts as a proxy for the gradient signal
(Section~\ref{sec:theory_snr}, \suppthm{B.1}) and an objective-level
non-identifiability (Section~\ref{sec:theory_nonid}, \suppprop{B.5}).

\subsection{Reward Entanglement}
\label{sec:theory_snr}

With $p_f := \Pr[V_f{=}1]$ and $p_c := \Pr[V_c{=}1]$ over rollouts on a fixed
prompt $\bm{x}$, define the \emph{within-format reward-variance share}
$\Phi_{\mathrm{wf}} := \mathbb{E}[\mathrm{Var}(V\mid V_f)]/\mathrm{Var}(V)$, the fraction of
reward variance informative about $V_c$. Under an independent verifier
($V_f\perp V_c$) it equals $(1-p_c)/(1-p_f p_c)$, strictly increasing in $p_f$
with $\Phi_{\mathrm{wf}}\to1$ as $p_f\uparrow1$ (\suppthm{B.1}). Thus low format compliance leaves only the
limiting share $1-p_c$ of reward variance tied to answer correctness, while high
format compliance makes the remaining reward variance mostly semantic.
$\Phi_{\mathrm{wf}}$ is a reward-side proxy, not the full policy-gradient SNR; positive
$V_f$--$V_c$ correlation only lowers it when the marginal $p_c$ is held fixed
within the feasible conditional-rate range
(\suppsec{B.2}), and DTR removes the $p_f$-dependence
from its lower bound (\suppthm{B.8}).

\subsection{Prompt Dependency}
\label{sec:theory_nonid}

Let $\mathcal{T}$ be the family of semantics-preserving prompt transformations,
and for a task instance $\bm{x} = (\bm{v}, \bm{q}, \mathcal{I})$ let
$\llbracket \bm{x}\rrbracket := \{\tau(\bm{x}):\tau\in\mathcal{T}\}$ be the
\emph{semantic equivalence class}. In the fixed-$\bm{x}$ analysis below,
distributions such as $P_{\text{train}}$, $P_{\text{test}}$, and
$\rho_{\bm{x}}$ are distributions over transformations $\tau$; their
push-forward through $\tau\mapsto\tau(\bm{x})$ is the corresponding prompt
distribution on $\llbracket\bm{x}\rrbracket$. For a policy $\pi$ and
transformation distribution $\rho$, define
\begin{equation}
  \mathcal{R}(\pi,\rho;\bm{x})
  :=
  \mathbb{E}_{\tau\sim\rho}\,
  \mathbb{E}_{\bm{y}\sim\pi(\cdot\mid\tau(\bm{x}))}
  \bigl[R(\bm{y},\tau(\bm{x}))\bigr],
  \label{eq:reward_functional}
\end{equation}
and $\mathcal{R}(\pi,\tau;\bm{x})$ for the point mass $\rho=\delta_\tau$.
Consider restricted training coverage ($P_{\text{train}}$ supports only
$\mathcal{T}_{\text{train}}\subsetneq\mathcal{T}$) with a separated unseen format
$\tau_B$ whose verifier constraint is disjoint from the training formats. Under a
$\tau$-dependent verifier, two policies that agree on the training support and
coincide away from $\tau_B$---one
emitting a fixed training-time format on $\tau_B$, the other following the
requested format---are both $P_{\text{train}}$-optimal, yet differ by
$m_B\Delta_R$ on any test distribution placing mass $m_B$ on $\tau_B$
($\Delta_R := R_{\max}-R_{\min}$). The training objective therefore cannot
identify the prompt-invariant policy: some $P_{\text{train}}$-optimal policy has a
$P_{\text{test}}$ comparator gap $\ge m_B\Delta_R$
(\suppprop{B.5}, with assumptions and proof in \suppsec{B.3}).

\textbf{Extensions.} The single-point construction extends to a separated set
$\mathcal{B}\subseteq\mathcal{T}\setminus\mathcal{T}_{\text{train}}$ for an
integrated gap of $P_{\text{test}}(\mathcal{B})\Delta_R$, which is large only when
the separated set itself carries large test mass (an infinite $\mathcal{T}$ alone
does not suffice), and to additive dense rewards under an explicit
training-optimal condition (\suppsec{B.5}). A broader
measure-zero-support version, with a complexity tie-breaker for shortcuts, is
given in \suppsec{B.4}.

\subsection{Conditions for Prompt Invariance}
\label{sec:theory_dro}

We take the worst-case prompt distribution on $\llbracket \bm{x}\rrbracket$ as
the training target:
\begin{equation}
\begin{aligned}
  \max_\theta\quad
  \mathbb{E}_{\bm{x} \sim \mathcal{D}}
  \Bigg[
  \inf_{\rho_{\bm{x}} \in \Delta(\mathcal{T})}
  &\mathbb{E}_{\tau \sim \rho_{\bm{x}}}
  \mathbb{E}_{\bm{y} \sim \pi_\theta(\cdot \mid \tau(\bm{x}))}
  \bigl[R(\bm{y}, \tau(\bm{x}))\bigr]
  \Bigg].
\end{aligned}
  \label{eq:struct_dro}
\end{equation}
Eq.~\eqref{eq:struct_dro} is a group-DRO-style population objective~\citep{sagawa2020distributionally}
in which each transformation-induced prompt condition acts as a group within
the semantic equivalence class $\llbracket\bm{x}\rrbracket$.
The setup differs from classical group DRO in one practical respect: where
group DRO assumes a predefined partition of data, $\llbracket\bm{x}\rrbracket$
is \emph{operator-generated} (semantics-preserving paraphrases, format swaps,
NLI-filtered rewrites). $\mathcal{C}(\bm{x}) \subseteq \mathcal{T}$ denotes
a finite semantics-preserving transformation pool whose elements satisfy
$\tau(\bm{x})\in\llbracket\bm{x}\rrbracket$, used to state (C2).
Eq.~\eqref{eq:struct_dro} is a population reference for the ideal group-DRO
target. The adversary of Section~\ref{sec:method_adv} is a differentiable
embedding-space relaxation, not a certificate that arbitrary perturbations are
on-manifold elements of $\mathcal{T}$.

\textbf{Two design conditions} (full statements in
\suppsec{A}). \textbf{(C1) Reward Decomposition}
(\suppdef{A.1}): $R = R_t + \alpha_{\mathrm{rew}} R_s$ with fixed
$\alpha_{\mathrm{rew}}>0$,
$R_t \in \{-1, 0, +1\}$ format-aware, and a bounded semantic score
$R_s \in [0, 1]$ giving, for the stochastic rollout laws in the policy class
on which C1 is asserted,
$\mathrm{Var}(R\mid V_f=v') \ge c > 0$ whenever the format bucket
$\{V_f=v'\}$ has positive probability, so the
reward-variance share
$\Phi_{\mathrm{wf}}^R := \mathbb{E}[\mathrm{Var}(R\mid V_f)]/\mathrm{Var}(R)$ is
bounded below independently of $p_f$; boundedness alone is not the
non-degeneracy assumption. \textbf{(C2) Invariance Control}
(\suppdef{A.2}): under the prompt-conditioned answer marginal induced by
the deterministic extractor $\mathcal{A}_{\tau}$,
$\mathbb{E}_{\tau_1, \tau_2 \sim \rho_{\bm{x}}}
[D(\pi^{\mathcal{A}_{\tau_1}}_\theta(\cdot\mid\tau_1(\bm{x}))\,\|\,\pi^{\mathcal{A}_{\tau_2}}_\theta(\cdot\mid\tau_2(\bm{x})))]
\le \varepsilon$ for some non-degenerate coverage
$\rho_{\bm{x}} \in \Delta(\mathcal{C}(\bm{x}))$ and an $f$-divergence $D$
with Pinsker constant $\kappa_D$ (for the KL used here, $\kappa_D = 1/\sqrt 2$).
We require invariance only at the answer level; the per-token
adversarial-to-clean KL penalty
of Section~\ref{sec:method_cons} is a tractable training surrogate, not a strict
upper bound on (C2) (\suppsec{A}).
For the population-level comparison below, the delimiter extractor is assumed to be
format-consistent: if a response in $\mathcal{F}_\tau$ encodes answer $a$,
then $\mathcal{A}_{\tau}$ returns $a$, while a response in a format family
disjoint from $\mathcal{F}_\tau$ returns $\emptyset$ under
$\mathcal{A}_{\tau}$.

Together, (C1) and (C2) give the population-level consequences used below
(\suppprop{B.10}): (C1)
gives a $p_f$-independent lower bound on $\Phi_{\mathrm{wf}}^R$, while (C2) with small
$\varepsilon$ excludes the format-fixed branch $\pi_1$ of
\suppprop{B.5} and admits the deterministic correct-answer branch
$\pi_2$ on the coverage support. Moreover, for common bounded answer-level
rewards, any (C2)-feasible policy has
$\rho_{\bm{x}}$-averaged pairwise answer-reward difference at most
$\Delta_R\kappa_D\sqrt{\varepsilon}$. 

\textbf{Coverage transfer.} (C2) is regularized on the pool
$\mathcal{C}(\bm{x})\subseteq\mathcal{T}$ rather than on
$\llbracket\bm{x}\rrbracket$ itself. \suppprop{B.11}
bounds the resulting shift: the
test mean reward of any fixed policy differs from its training-coverage mean by
at most $\Delta_R\,\eta$, $\eta := \mathrm{TV}(P_{\text{test}}, \rho_{\bm{x}})$
(tight; it transfers a policy's own reward). The two knobs are decoupled: $\varepsilon$
is a training-time invariance control, $\eta$ a pool-design control.

\section{Method: Prompt-Invariant RLVR}
\label{sec:method}

\begin{figure*}[t]
  \centering
  \includegraphics[width=0.998\textwidth]{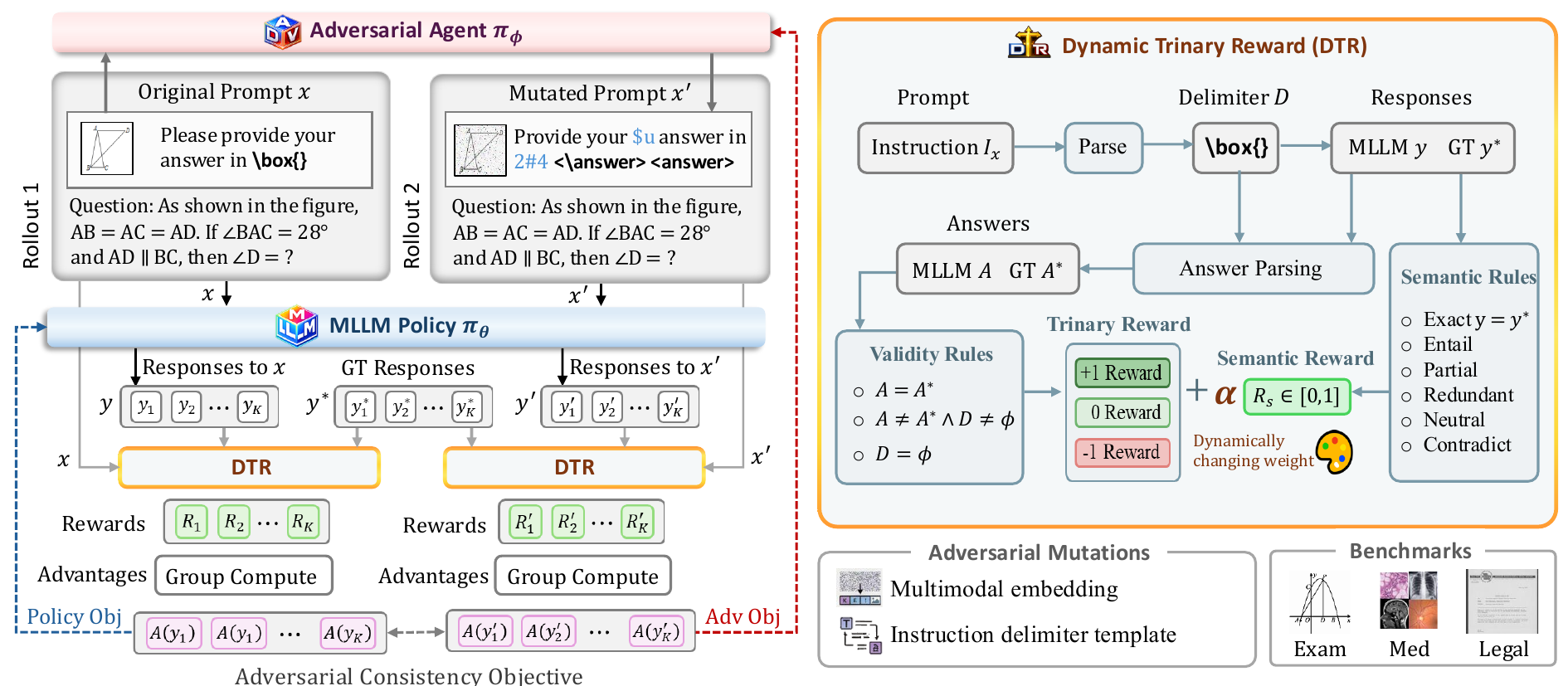}
  \vspace{-0.5em}
  \caption{PIRL targets reward decomposition via DTR + DAN and prompt-invariant
    learning via an embedding-space adversary plus a consistency regularizer.}
    \vspace{-1.5em}
  \label{fig:method}
\end{figure*}

PIRL instantiates (C1) through a Dynamic Trinary Reward (DTR,
Section~\ref{sec:method_dtr}), applied to the standard GRPO advantage with a small
normalization adjustment
(DAN, Section~\ref{sec:method_dan}); it regularizes toward (C2) through an
embedding-space adversary plus a consistency regularizer
(Section~\ref{sec:method_adv}). The reward-adversarial
sampler (Section~\ref{sec:method_objective}) is a hard-prompt sampling heuristic
inspired by Eq.~\eqref{eq:struct_dro}; the policy update remains a clipped GRPO
surrogate with DAN advantages. In the full configuration, we first draw a
discrete training template $\tau_0\sim P_{\mathrm{train}}$ from the same
five-template pool used by GRPO-MT and then apply the embedding adversary around
$\tau_0(\bm{x})$. We absorb this initial template draw into the prompt notation
below; the no-MT ablation sets $\tau_0$ to the identity.

\subsection{Dynamic Trinary Reward (DTR)}
\label{sec:method_dtr}

DTR rewards each rollout with $R = R_t + \alpha_{\mathrm{rew}} R_s$, where
$R_t \in \{-1, 0, +1\}$ is a format-aware trinary verifier and $R_s \in [0,1]$
is a format-independent semantic score. Let
$a_\tau(\bm{y}) := \mathcal{A}(\bm{y}; d_\tau)$ be the answer extracted at the
delimiter $d_\tau$ specified by the current instruction, and let $a^*$ be the
ground-truth answer. The trinary verifier is
\begin{equation}
  R_t = \begin{cases}
    +1, & a_\tau(\bm{y}) = a^*, \\
    \phantom{+}0, & a_\tau(\bm{y}) \notin \{\emptyset, a^*\}, \\
    -1, & a_\tau(\bm{y}) = \emptyset,
  \end{cases}
  \label{eq:trinary}
\end{equation}
which distinguishes correct answer ($+1$), wrong answer in correct format
($0$), and format failure ($-1$). $R_s$ is the entailment score from a fixed NLI
scorer~\citep{camburu2018snli}; for the variance-floor theorem, $R_s$ is not
collapsed to a single $\emptyset$ value on format failures. The total DTR reward is
\begin{equation}
  R_{\mathrm{DTR}}(\bm{y},\bm{x};a^*) = R_t + \alpha_{\mathrm{rew}} R_s,
  \label{eq:dtr}
\end{equation}
with a fixed reward weight $\alpha_{\mathrm{rew}} > 0$. In our experiments
$\alpha_{\mathrm{rew}}=0.01<1$, which preserves the ordering
correct-and-formatted $>$ wrong-but-formatted $>$ format failure for all
$R_s\in[0,1]$; the variance result itself only needs
$\alpha_{\mathrm{rew}}>0$. For a transformed prompt $\tau(\bm{x})$, the
ground-truth label remains the label of the underlying task instance; when clear
we write $R_{\mathrm{DTR}}(\bm{y},\tau(\bm{x}))$ as shorthand for
$R_{\mathrm{DTR}}(\bm{y},\tau(\bm{x});a^*)$.

Under the non-degeneracy condition formalized in
\suppsec{B.6}, the bounded semantic score $R_s$
injects within-bucket reward variance even when the format outcome is fixed, so
the reward-variance lower bound does not collapse as the format-valid rate
$p_f$ changes: the $p_f$-dependence of \suppthm{B.1} drops out, and
while the realized $\Phi_{\mathrm{wf}}^{\mathrm{DTR}}$ can still vary with $p_f$ through
$\mathrm{Var}(R)$, its numerator has a lower bound independent of $p_f$. We defer the
formal statement, its non-degeneracy assumption, and the proof to
\suppsec{B.6}.

\paragraph{Decoupled advantage normalization (DAN).}\label{sec:method_dan}
Naive group-normalization $\hat A = \mathrm{Norm}(R_t + \alpha_{\mathrm{rew}} R_s)$
couples $R_t$ and $R_s$ at the batch level when their empirical scales
differ. We z-score $R_t$ and $R_s$ separately within each rollout group,
combine, and re-normalize at the batch level:
\begin{equation}
\begin{aligned}
  Z_i
  &=
  \mathrm{Norm}_{\mathcal{G}}(R_t^{(i)})
  + \alpha_{\mathrm{adv}}\,\mathrm{Norm}_{\mathcal{G}}(R_s^{(i)}),
  \\
  \hat A_i
  &=
  \mathrm{clip}\!\left(\mathrm{Norm}_{\mathcal{B}}(Z_i),\;-5,5\right),
  \label{eq:gdan_adv}
\end{aligned}
\end{equation}
where $\mathrm{Norm}_S(z) = (z-\bar z)/(\sigma_z + \epsilon_{\mathrm{norm}})$
uses the same numerical-stability $\epsilon_{\mathrm{norm}}>0$ as standard
GRPO. Here $\alpha_{\mathrm{adv}}>0$ is the advantage-mixing coefficient after
z-scoring; it is distinct from the scalar reward coefficient
$\alpha_{\mathrm{rew}}$ in Eq.~\eqref{eq:dtr}. DAN does not change the DTR
reward definition used in \suppthm{B.8}; in finite batches
the realized advantage scale depends on the within-group correlation between
the two components.

\subsection{Adversarial agent and consistency}
\label{sec:method_adv}

We approximate the worst-case distribution in Eq.~\eqref{eq:struct_dro} with an
adversarial sampler $q_\phi$ that perturbs the prompt in embedding space. In
the projected-gradient implementation, $q_\phi$ can be read as the point mass
at the final perturbation; a stochastic perturbation head gives the distributional
version of the same notation. This sampler is distinct from the C2 coverage
distribution $\rho_{\bm{x}}$; the consistency term regularizes sampled
clean/adversarial pairs rather than certifying the full C2 expectation.
Given an instance $\bm{x}=(\bm{v},\bm{q},\mathcal{I})$ and the current policy,
let $\bm{e}_{\mathcal{I}}(\bm{x})$ denote the instruction-token embeddings.
The main adversary emits a bounded additive perturbation on this instruction
block,
$\tilde{\bm{e}}_{\mathcal{I}}(\bm{x})
= \bm{e}_{\mathcal{I}}(\bm{x}) + \bm{\delta}$ with
$\|\bm{\delta}\|_2 \le \epsilon_{\mathrm{emb}}$, and we write
$\tau_\phi(\bm{x})$ for the resulting perturbed prompt. The perturbation
explores the $\epsilon_{\mathrm{emb}}$-ball around the instruction
representation to drive down the policy's expected DTR reward, approximating
local prompt variation under reworded instructions; the image, question content,
and answer target are held fixed. For embedding-space perturbations, the surface
delimiter used by the extractor is not changed: $d_{\tau_\phi}=d_{\bm{x}}$.
Discrete template transformations use their own requested delimiter $d_\tau$.

\paragraph{Consistency regularization.}\label{sec:method_cons}
The adversary alone only discovers low-reward local prompt perturbations. We add an
adversarial-to-clean KL penalty on paired token distributions. For a prefix distribution
$\nu_t$ (sampled-prefix for the sequence-KL view, teacher-forced in the
implementation) over $\bm{y}_{<t}$, let
$p_t^{\bm{x}}=\pi_\theta(\cdot\mid \bm{y}_{<t},\bm{x})$ and
$p_t^{\tau}=\pi_\theta(\cdot\mid \bm{y}_{<t},\tau_\phi(\bm{x}))$. We use
\begin{equation}
\begin{aligned}
  \mathcal{L}_{\mathrm{cons}}(\theta;\bm{x},\phi)
  &=
  \frac{1}{T}\sum_{t=1}^{T}
  \mathbb{E}_{\bm{y}_{<t}\sim\nu_t} D_{\mathrm{KL}}\!\left(
    p_t^{\tau}\,\middle\|\,\operatorname{sg}(p_t^{\bm{x}})
  \right),
  \label{eq:cons}
\end{aligned}
\end{equation}
where $\operatorname{sg}(\cdot)$ stops gradients through the clean-prompt anchor. This
is a tractable per-token proxy for the sequence-level divergence; we adopt
the per-token form because it is differentiable end-to-end and its per-token
gradients are dense (its relation to the full-sequence $D_{\mathrm{KL}}$ is
discussed in \suppsec{A}).

\begin{table*}[t]
  \centering
  \begin{threeparttable}
    \caption{Accuracy (\%) under standard and template-stress (T-Stress) evaluation. MMK12 pools the Math/Science subsets; Olym-Phys is OlympiadBench-Physics; Legal VQA is DocVQA. ID/OOD: in-/out-of-distribution; MT: Multi-Template. Best per model category in \textbf{bold}.}
    \vspace{-1em}
    \label{tab:main_results_comprehensive}
    \fontsize{8}{10}\selectfont
    \setlength{\tabcolsep}{4pt}
    \renewcommand{\arraystretch}{0.5}

    \begin{tabular}{@{}
        l  
        l  
        >{\centering\arraybackslash}p{1.65cm}  
        >{\centering\arraybackslash}p{1.65cm}  
        >{\centering\arraybackslash}p{1.65cm}  
        |
        >{\centering\arraybackslash}p{1.35cm}  
        >{\centering\arraybackslash}p{1.35cm}  
        >{\centering\arraybackslash}p{1.35cm}  
        |
        >{\centering\arraybackslash}p{1.5cm}  
      @{}}
      \toprule
      \multirow{3}{*}{\textbf{Backbone}}
      & \multirow{3}{*}{\textbf{Method}}
      & \multicolumn{3}{c|}{\textbf{Exam VQA}}
      & \multicolumn{3}{c|}{\textbf{Medical VQA}}
      & \multicolumn{1}{c}{\textbf{Legal VQA}} \\
      \cmidrule(lr){3-5} \cmidrule(lr){6-8} \cmidrule(lr){9-9}
      & & \multicolumn{2}{c}{\textbf{ID}} & \multicolumn{1}{c|}{\textbf{OOD}}
      & \multicolumn{2}{c}{\textbf{ID}} & \multicolumn{1}{c|}{\textbf{OOD}}
      & \multicolumn{1}{c}{\textbf{OOD}} \\
      \cmidrule(lr){3-4} \cmidrule(lr){5-5} \cmidrule(lr){6-7} \cmidrule(lr){8-8} \cmidrule(lr){9-9}
      & & \fontsize{7.5}{11}\selectfont\textbf{MMK12}
      & \fontsize{7.5}{11}\selectfont\textbf{MathVista}
      & \fontsize{7.5}{11}\selectfont\textbf{Olym-Phys}
      & \fontsize{7.5}{11}\selectfont\textbf{RAD}
      & \fontsize{7.5}{11}\selectfont\textbf{Path}
      & \fontsize{7.5}{11}\selectfont\textbf{GMAI}
      & \fontsize{7.5}{11}\selectfont\textbf{DocVQA} \\
      \midrule

      \multirow{12}{*}{\shortstack[c]{\textbf{Qwen2.5}\\\textbf{VL-7B}}} & Base Model
      & 53.1 & 68.4 & 8.5 & 64.7 & 43.0 & \textbf{44.8} & 66.8 \\
      & \child{T-Stress}
      & 54.1 & \textbf{69.6} & 7.2 & 66.3 & 42.4 & \textbf{42.4} & 67.2 \\
      & \child{$\Delta \mathrm{T}$}
      & +1.0${\scriptscriptstyle\pm0.4}$
      & +1.2${\scriptscriptstyle\pm0.4}$
      & -1.3${\scriptscriptstyle\pm0.7}$
      & +1.6${\scriptscriptstyle\pm0.5}$
      & -0.6${\scriptscriptstyle\pm0.6}$
      & -2.4${\scriptscriptstyle\pm0.6}$
      & +0.4${\scriptscriptstyle\pm0.5}$ \\
      \cmidrule(lr){2-9}
      & GRPO
      & 54.9 & \textbf{70.1} & 7.5 & \textbf{73.5} & 48.1 & 42.1 & 69.1 \\
      & \child{T-Stress}
      & 54.1 & 65.7 & 5.9 & 71.2 & 41.4 & 39.7 & 66.7 \\
      & \child{$\Delta \mathrm{T}$}
      & -0.8${\scriptscriptstyle\pm0.4}$
      & -4.4${\scriptscriptstyle\pm0.4}$
      & -1.6${\scriptscriptstyle\pm0.7}$
      & -2.3${\scriptscriptstyle\pm0.5}$
      & -6.7${\scriptscriptstyle\pm0.6}$
      & -2.4${\scriptscriptstyle\pm0.6}$
      & -2.4${\scriptscriptstyle\pm0.5}$ \\
      \cmidrule(lr){2-9}
      & GRPO-MT
      & 54.9 & 70.0 & 7.8 & 71.0 & 45.8 & 42.2 & 68.9 \\
      & \child{T-Stress}
      & 54.0 & 69.3 & 7.6 & 69.7 & 43.2 & 40.9 & 68.0 \\
      & \child{$\Delta \mathrm{T}$}
      & -0.9${\scriptscriptstyle\pm0.4}$
      & -0.7${\scriptscriptstyle\pm0.4}$
      & -0.2${\scriptscriptstyle\pm0.7}$
      & -1.3${\scriptscriptstyle\pm0.5}$
      & -2.6${\scriptscriptstyle\pm0.6}$
      & -1.3${\scriptscriptstyle\pm0.6}$
      & -0.9${\scriptscriptstyle\pm0.5}$ \\
      \cmidrule(lr){2-9}
      & \textbf{PIRL (Ours)}
      & \textbf{55.3} & 68.6 & \textbf{9.6} & 72.6 & \textbf{49.3} & \textbf{44.8} & \textbf{69.5} \\
      & \child{T-Stress}
      & \textbf{55.7} & 68.9 & \textbf{10.7} & \textbf{73.4} & \textbf{48.5} & 42.1 & \textbf{69.7} \\
      & \child{$\Delta \mathrm{T}$}
      & +0.4${\scriptscriptstyle\pm0.4}$
      & +0.3${\scriptscriptstyle\pm0.4}$
      & +1.1${\scriptscriptstyle\pm0.7}$
      & +0.8${\scriptscriptstyle\pm0.5}$
      & -0.8${\scriptscriptstyle\pm0.6}$
      & -2.7${\scriptscriptstyle\pm0.6}$
      & +0.2${\scriptscriptstyle\pm0.5}$ \\
      \midrule

      \multirow{12}{*}{\shortstack[c]{\textbf{Qwen3}\\\textbf{VL-8B}}} & Base Model
      & 60.3 & 68.9 & 25.3 & 68.1 & 49.1 & 46.0 & 70.3 \\
      & \child{T-Stress}
      & 59.9 & 69.6 & 25.7 & 68.7 & \textbf{49.2} & \textbf{46.2} & 71.0 \\
      & \child{$\Delta \mathrm{T}$}
      & -0.4${\scriptscriptstyle\pm0.4}$
      & +0.7${\scriptscriptstyle\pm0.4}$
      & +0.4${\scriptscriptstyle\pm0.7}$
      & +0.6${\scriptscriptstyle\pm0.5}$
      & +0.1${\scriptscriptstyle\pm0.6}$
      & +0.2${\scriptscriptstyle\pm0.6}$
      & +0.7${\scriptscriptstyle\pm0.5}$ \\
      \cmidrule(lr){2-9}
      & GRPO
      & 60.5 & 69.1 & 27.1 & 77.2 & 49.9 & 46.2 & 72.7 \\
      & \child{T-Stress}
      & 60.0 & 67.5 & 26.0 & 76.1 & 48.1 & 44.8 & 71.4 \\
      & \child{$\Delta \mathrm{T}$}
      & -0.5${\scriptscriptstyle\pm0.4}$
      & -1.6${\scriptscriptstyle\pm0.4}$
      & -1.1${\scriptscriptstyle\pm0.7}$
      & -1.1${\scriptscriptstyle\pm0.5}$
      & -1.8${\scriptscriptstyle\pm0.6}$
      & -1.4${\scriptscriptstyle\pm0.6}$
      & -1.3${\scriptscriptstyle\pm0.5}$ \\
      \cmidrule(lr){2-9}
      & GRPO-MT
      & 58.7 & \textbf{72.8} & 21.5 & 77.4 & 49.8 & 46.2 & 72.6 \\
      & \child{T-Stress}
      & 57.8 & \textbf{73.1} & 21.5 & 76.1 & 47.2 & 44.9 & 72.2 \\
      & \child{$\Delta \mathrm{T}$}
      & -0.9${\scriptscriptstyle\pm0.4}$
      & +0.3${\scriptscriptstyle\pm0.4}$
      & 0.0${\scriptscriptstyle\pm0.7}$
      & -1.3${\scriptscriptstyle\pm0.5}$
      & -2.6${\scriptscriptstyle\pm0.6}$
      & -1.3${\scriptscriptstyle\pm0.6}$
      & -0.4${\scriptscriptstyle\pm0.5}$ \\
      \cmidrule(lr){2-9}
      & \textbf{PIRL (Ours)}
      & \textbf{60.9} & 72.0 & \textbf{27.4} & \textbf{78.8} & \textbf{50.0} & \textbf{46.3} & \textbf{73.1} \\
      & \child{T-Stress}
      & \textbf{60.1} & 72.4 & \textbf{26.3} & \textbf{79.6} & \textbf{49.2} & 43.6 & \textbf{73.6} \\
      & \child{$\Delta \mathrm{T}$}
      & -0.8${\scriptscriptstyle\pm0.4}$
      & +0.4${\scriptscriptstyle\pm0.4}$
      & -1.1${\scriptscriptstyle\pm0.7}$
      & +0.8${\scriptscriptstyle\pm0.5}$
      & -0.8${\scriptscriptstyle\pm0.6}$
      & -2.7${\scriptscriptstyle\pm0.6}$
      & +0.5${\scriptscriptstyle\pm0.5}$ \\
      \bottomrule
    \end{tabular}
    \begin{tablenotes}[para,flushleft]
      \fontsize{7.8}{8}\selectfont
      \vspace{0.5ex}
    \end{tablenotes}
  \end{threeparttable}
  \vspace{-1.5em}
\end{table*}

\begin{table*}[t]
\centering
\begin{threeparttable}
\caption{Accuracy (\%) under standard and dynamic-test (D-Test) evaluation over the full operator pool. For each backbone, PIRL has the smallest mean gap across the seven benchmarks among trained methods.
Best per model category in \textbf{bold}.}
\vspace{-1.5em}
\label{tab:main_results_comprehensive_dynamic}
\fontsize{8}{10}\selectfont
\setlength{\tabcolsep}{4pt}
\renewcommand{\arraystretch}{0.7}
\begin{tabular}{@{}
l  
l  
>{\centering\arraybackslash}p{1.65cm}
>{\centering\arraybackslash}p{1.65cm}
>{\centering\arraybackslash}p{1.65cm}
|
>{\centering\arraybackslash}p{1.35cm}
>{\centering\arraybackslash}p{1.35cm}
>{\centering\arraybackslash}p{1.35cm}
|
>{\centering\arraybackslash}p{1.5cm}
@{}}
\toprule
      \multirow{3}{*}{\textbf{Backbone}}
      & \multirow{3}{*}{\textbf{Method}}
      & \multicolumn{3}{c|}{\textbf{Exam VQA}}
      & \multicolumn{3}{c|}{\textbf{Medical VQA}}
      & \multicolumn{1}{c}{\textbf{Legal VQA}} \\
      \cmidrule(lr){3-5} \cmidrule(lr){6-8} \cmidrule(lr){9-9}
      & & \multicolumn{2}{c}{\textbf{ID}} & \multicolumn{1}{c|}{\textbf{OOD}}
      & \multicolumn{2}{c}{\textbf{ID}} & \multicolumn{1}{c|}{\textbf{OOD}}
      & \multicolumn{1}{c}{\textbf{OOD}} \\
      \cmidrule(lr){3-4} \cmidrule(lr){5-5} \cmidrule(lr){6-7} \cmidrule(lr){8-8} \cmidrule(lr){9-9}
      & & \fontsize{7.5}{11}\selectfont\textbf{MMK12}
      & \fontsize{7.5}{11}\selectfont\textbf{MathVista}
      & \fontsize{7.5}{11}\selectfont\textbf{Olym-Phys}
      & \fontsize{7.5}{11}\selectfont\textbf{RAD}
      & \fontsize{7.5}{11}\selectfont\textbf{Path}
      & \fontsize{7.5}{11}\selectfont\textbf{GMAI}
      & \fontsize{7.5}{11}\selectfont\textbf{DocVQA} \\
      \midrule

\multirow{12}{*}{\shortstack[c]{\textbf{Qwen2.5}\\\textbf{VL-7B}}} & Base Model
& 53.1 & 68.4 & 8.5 & 64.7 & 43.0 & \textbf{44.8} & 66.8 \\
& \child{D-Test}
& 53.5 & 21.4 & 4.4 & \textbf{66.1} & 40.4 & 37.3 & 46.1 \\
& \child{$\Delta \mathrm{D}$}
      & +0.4${\scriptscriptstyle\pm0.5}$
      & -47.0${\scriptscriptstyle\pm0.8}$
      & -4.1${\scriptscriptstyle\pm0.7}$
      & +1.4${\scriptscriptstyle\pm0.6}$
      & -2.6${\scriptscriptstyle\pm0.6}$
      & -7.5${\scriptscriptstyle\pm0.6}$
      & -20.7${\scriptscriptstyle\pm0.5}$ \\
\cmidrule(lr){2-9}
& GRPO
& 54.9 & \textbf{70.1} & 7.5 & \textbf{73.5} & 48.1 & 42.1 & 69.1 \\
& \child{D-Test}
& 51.3 & \textbf{21.6} & 4.4 & 61.4 & 42.3 & 34.5 & 46.3 \\
& \child{$\Delta \mathrm{D}$}
      & -3.6${\scriptscriptstyle\pm0.5}$
      & -48.5${\scriptscriptstyle\pm0.8}$
      & -3.1${\scriptscriptstyle\pm0.7}$
      & -12.1${\scriptscriptstyle\pm0.6}$
      & -5.8${\scriptscriptstyle\pm0.6}$
      & -7.6${\scriptscriptstyle\pm0.6}$
      & -22.8${\scriptscriptstyle\pm0.5}$ \\
\cmidrule(lr){2-9}
& GRPO-MT
& 54.9 & 70.0 & 7.8 & 71.0 & 45.8 & 42.2 & 68.9 \\
& \child{D-Test}
& 54.2 & 21.3 & 4.3 & 61.9 & 42.1 & 34.5 & 47.7 \\
& \child{$\Delta \mathrm{D}$}
      & -0.7${\scriptscriptstyle\pm0.5}$
      & -48.7${\scriptscriptstyle\pm0.8}$
      & -3.5${\scriptscriptstyle\pm0.7}$
      & -9.1${\scriptscriptstyle\pm0.6}$
      & -3.7${\scriptscriptstyle\pm0.6}$
      & -7.7${\scriptscriptstyle\pm0.6}$
      & -21.2${\scriptscriptstyle\pm0.5}$ \\
\cmidrule(lr){2-9}
& \textbf{PIRL (Ours)}
& \textbf{55.3} & 68.6 & \textbf{9.6} & 72.6 & \textbf{49.3} & \textbf{44.8} & \textbf{69.5} \\
& \child{D-Test}
& \textbf{54.2} & 21.4 & \textbf{5.5} & 65.3 & \textbf{46.2} & \textbf{37.6} & \textbf{49.0} \\
& \child{$\Delta \mathrm{D}$}
      & -1.1${\scriptscriptstyle\pm0.5}$
      & -47.2${\scriptscriptstyle\pm0.8}$
      & -4.1${\scriptscriptstyle\pm0.7}$
      & -7.3${\scriptscriptstyle\pm0.6}$
      & -3.1${\scriptscriptstyle\pm0.6}$
      & -7.2${\scriptscriptstyle\pm0.6}$
      & -20.5${\scriptscriptstyle\pm0.5}$ \\
\midrule

\multirow{12}{*}{\shortstack[c]{\textbf{Qwen3}\\\textbf{VL-8B}}} & Base Model
& 60.3 & 68.9 & 25.3 & 68.1 & 49.1 & 46.0 & 70.3 \\
& \child{D-Test}
& \textbf{60.2} & 21.8 & 20.7 & 69.0 & 46.7 & 38.6 & 49.8 \\
& \child{$\Delta \mathrm{D}$}
      & -0.1${\scriptscriptstyle\pm0.5}$
      & -47.1${\scriptscriptstyle\pm0.8}$
      & -4.6${\scriptscriptstyle\pm0.7}$
      & +0.9${\scriptscriptstyle\pm0.6}$
      & -2.4${\scriptscriptstyle\pm0.6}$
      & -7.4${\scriptscriptstyle\pm0.6}$
      & -20.5${\scriptscriptstyle\pm0.5}$ \\
\cmidrule(lr){2-9}
& GRPO
& 60.5 & 69.1 & 27.1 & 77.2 & 49.9 & 46.2 & 72.7 \\
& \child{D-Test}
& 56.5 & 20.8 & \textbf{23.7} & 64.9 & 44.1 & 38.9 & 50.7 \\
& \child{$\Delta \mathrm{D}$}
      & -4.0${\scriptscriptstyle\pm0.5}$
      & -48.3${\scriptscriptstyle\pm0.8}$
      & -3.4${\scriptscriptstyle\pm0.7}$
      & -12.3${\scriptscriptstyle\pm0.6}$
      & -5.8${\scriptscriptstyle\pm0.6}$
      & -7.3${\scriptscriptstyle\pm0.6}$
      & -22.0${\scriptscriptstyle\pm0.5}$ \\
\cmidrule(lr){2-9}
& GRPO-MT
& 58.7 & \textbf{72.8} & 21.5 & 77.4 & 49.8 & 46.2 & 72.6 \\
& \child{D-Test}
& 58.4 & 23.8 & 18.2 & 68.2 & 46.2 & 38.6 & 51.8 \\
& \child{$\Delta \mathrm{D}$}
      & -0.3${\scriptscriptstyle\pm0.5}$
      & -49.0${\scriptscriptstyle\pm0.8}$
      & -3.3${\scriptscriptstyle\pm0.7}$
      & -9.2${\scriptscriptstyle\pm0.6}$
      & -3.6${\scriptscriptstyle\pm0.6}$
      & -7.6${\scriptscriptstyle\pm0.6}$
      & -20.8${\scriptscriptstyle\pm0.5}$ \\
\cmidrule(lr){2-9}
& \textbf{PIRL (Ours)}
& \textbf{60.9} & 72.0 & \textbf{27.4} & \textbf{78.8} & \textbf{50.0} & \textbf{46.3} & \textbf{73.1} \\
& \child{D-Test}
& 60.1 & \textbf{24.6} & 22.8 & \textbf{71.2} & \textbf{49.0} & \textbf{39.5} & \textbf{52.8} \\
& \child{$\Delta \mathrm{D}$}
      & -0.8${\scriptscriptstyle\pm0.5}$
      & -47.4${\scriptscriptstyle\pm0.8}$
      & -4.6${\scriptscriptstyle\pm0.7}$
      & -7.6${\scriptscriptstyle\pm0.6}$
      & -1.0${\scriptscriptstyle\pm0.6}$
      & -6.8${\scriptscriptstyle\pm0.6}$
      & -20.3${\scriptscriptstyle\pm0.5}$ \\
\bottomrule
\end{tabular}
\begin{tablenotes}[para,flushleft]
\fontsize{7.8}{8}\selectfont
\vspace{0.5ex}
\end{tablenotes}
\end{threeparttable}
\vspace{-1em}
\end{table*}

\begin{table*}[t]
  \centering
	  \caption{\textbf{Ablation} of multi-template (MT), dynamic trinary reward (DTR), and adversarial prompting (ADV). Mean$\pm$std over 3 seeds ($\Delta\mathrm{T}$ within $\pm 1.5$ pp inconclusive). Exam VQA Avg = mean of MMK12/MathVista/Olym-Phys.}
  \label{tab:ablation_final_filled}
  \resizebox{0.8\textwidth}{!}{
    \setlength{\tabcolsep}{3.2pt}
    \renewcommand{\arraystretch}{1.1}
    \begin{tabular}{ccc | ccccc | ccccc}
      \toprule
      \multicolumn{3}{c|}{\textbf{Components}} &
      \multicolumn{5}{c|}{\textbf{Exam VQA Avg.}} &
      \multicolumn{5}{c}{\textbf{Olym-Phys}} \\
      \cmidrule(lr){1-3}
      \cmidrule(lr){4-8}
      \cmidrule(lr){9-13}
      \textbf{MT} & \textbf{ADV} & \textbf{DTR} &
      \textbf{Std.} & \textbf{T-Stress} & \textbf{D-Test} & \textbf{$\Delta$T} & \textbf{$\Delta$D} &
      \textbf{Std.} & \textbf{T-Stress} & \textbf{D-Test} & \textbf{$\Delta$T} & \textbf{$\Delta$D} \\
      \midrule
      $\times$ & $\times$ & $\times$ &
      $44.2{\scriptscriptstyle\pm0.4}$ & $41.9{\scriptscriptstyle\pm0.5}$ & $25.8{\scriptscriptstyle\pm0.6}$ & $-2.3{\scriptscriptstyle\pm0.5}$ & $-18.4{\scriptscriptstyle\pm0.7}$ &
      $7.5{\scriptscriptstyle\pm0.5}$ & $5.9{\scriptscriptstyle\pm0.5}$ & $4.4{\scriptscriptstyle\pm0.5}$ & $-1.6{\scriptscriptstyle\pm0.5}$ & $-3.1{\scriptscriptstyle\pm0.5}$ \\
      \midrule

      $\times$ & $\checkmark$ & $\checkmark$ &
      $44.3{\scriptscriptstyle\pm0.4}$ & $43.4{\scriptscriptstyle\pm0.5}$ & $26.0{\scriptscriptstyle\pm0.7}$ & $-0.9{\scriptscriptstyle\pm0.5}$ & $-18.3{\scriptscriptstyle\pm0.7}$ &
      $10.3{\scriptscriptstyle\pm0.5}$ & $10.4{\scriptscriptstyle\pm0.5}$ & $4.4{\scriptscriptstyle\pm0.5}$ & $+0.1{\scriptscriptstyle\pm0.5}$ & $-5.9{\scriptscriptstyle\pm0.5}$ \\
      $\checkmark$ & $\times$ & $\checkmark$ &
      $44.4{\scriptscriptstyle\pm0.4}$ & $44.4{\scriptscriptstyle\pm0.5}$ & $26.8{\scriptscriptstyle\pm0.7}$ & 0.0${\scriptscriptstyle\pm0.5}$ & $-17.6{\scriptscriptstyle\pm0.7}$ &
      $10.1{\scriptscriptstyle\pm0.5}$ & $9.8{\scriptscriptstyle\pm0.5}$ & $4.2{\scriptscriptstyle\pm0.5}$ & $-0.3{\scriptscriptstyle\pm0.5}$ & $-5.9{\scriptscriptstyle\pm0.5}$ \\
      \midrule
      
      $\checkmark$ & $\checkmark$ & $\checkmark$ &
      $\textbf{44.5}{\scriptscriptstyle\pm0.4}$ & $\textbf{45.1}{\scriptscriptstyle\pm0.5}$ & $\textbf{27.0}{\scriptscriptstyle\pm0.7}$ & \textbf{$+0.6{\scriptscriptstyle\pm0.5}$} & $-17.5{\scriptscriptstyle\pm0.7}$ &
      $\textbf{9.6}{\scriptscriptstyle\pm0.5}$ & $\textbf{10.7}{\scriptscriptstyle\pm0.5}$ & $\textbf{5.5}{\scriptscriptstyle\pm0.5}$ & \textbf{$+1.1{\scriptscriptstyle\pm0.5}$} & $-4.1{\scriptscriptstyle\pm0.5}$ \\
      \bottomrule
  \end{tabular}}
  \vspace{-1.5em}
\end{table*}

\subsection{Overall objective}
\label{sec:method_objective}

Let $\mathcal{L}_{\mathrm{G}}^{\mathrm{DAN}}(\pi_\theta, q; R_t,R_s)$ denote
the clipped GRPO loss evaluated on prompts from $q$, with DAN advantages from
Eq.~\eqref{eq:gdan_adv} and an optional reference-KL penalty (full form in
\suppsec{E}); its coefficient is set to zero in our experiments. The ideal
relaxed adversarial target is
\begin{equation}
\begin{aligned}
  &\max_\phi\;
  \mathbb{E}_{\bm{x} \sim \mathcal{D}}\,
  \mathbb{E}_{\tau\sim q_\phi(\cdot\mid\bm{x})}
  \mathbb{E}_{\bm{y}\sim\pi_\theta(\cdot\mid\tau(\bm{x}))}
  \bigl[-R_{\mathrm{DTR}}(\bm{y},\tau(\bm{x}))\bigr],
  \label{eq:adv_update_main}
\end{aligned}
\end{equation}
with $q_\phi$ supported on the $\epsilon_{\mathrm{emb}}$ instruction-embedding
ball. In the algorithm, let $\widehat{\phi}_K(\theta)$ denote the parameter
returned by $K$ projected-ascent steps toward Eq.~\eqref{eq:adv_update_main},
and let $\bar\phi=\mathrm{sg}(\widehat{\phi}_K(\theta))$ denote the frozen
adversarial sample used by the outer update. The policy objective is
\begin{equation}
\begin{aligned}
  \min_\theta\;
  \mathbb{E}_{\bm{x} \sim \mathcal{D}}\!\Bigl[
    &\mathcal{L}_{\mathrm{G}}^{\mathrm{DAN}}\bigl(
      \pi_\theta,q_{\bar\phi}(\cdot\mid\bm{x});R_t,R_s
    \bigr) + \lambda\,\mathcal{L}_{\mathrm{cons}}(\theta;\bm{x},\bar\phi)
  \Bigr],
  \label{eq:policy_obj}
\end{aligned}
\end{equation}
During the policy update, the resulting perturbations are treated as fixed
adversarial samples; we do not differentiate through the inner ascent trajectory
or claim exact bilevel optimization. Advantages are normalized by DAN
(Eq.~\eqref{eq:gdan_adv}), and $\lambda > 0$ weights the invariance penalty.
\suppalg{2} gives the full loop.

\begin{figure}[t]
  \centering
  \includegraphics[width=0.7\linewidth]{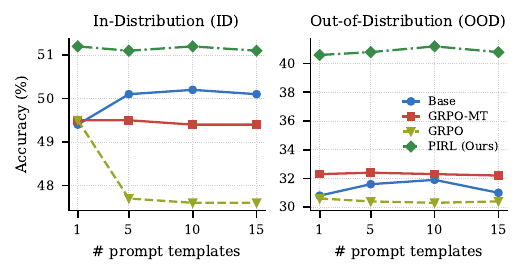}
  \vspace{-0.5em}
  \caption{\textbf{Effect of prompt-template count} on Exam VQA ID/OOD
  accuracy. GRPO degrades on ID as more unseen templates appear; GRPO-MT and
  PIRL stay flat, with PIRL highest on both.}
  \vspace{-0.5em}
  \label{fig:templates}
\end{figure}

\section{Experiments}
\label{sec:experiments}

\subsection{Setup}

We conduct extensive experiments to evaluate our framework across diverse benchmarks:
\textbf{Exam VQA} (train/ID eval
MMK12~\citep{meng2025mm}, ID eval MathVista~\citep{lu2024mathvista},
OOD eval Olym-Phys~\citep{he2024olympiadbench}), \textbf{Medical VQA}
(train Lingshu~\citep{xu2025lingshu}; ID eval VQA-RAD~\citep{lau2018dataset}
and PathVQA~\citep{he2020pathvqa}; OOD eval
GMAI-MMBench~\citep{chen2024gmaimmbench}), and \textbf{Legal VQA}
(train DUDE~\citep{vanlandeghem2023dude}; OOD eval
DocVQA~\citep{mathew2021docvqa}).
Base models: Qwen2.5-VL-7B~\citep{bai2025qwen25vl}
and Qwen3-VL-8B~\citep{bai2025qwen3vl}. Baselines:
GRPO~\citep{shao2024deepseekmath} and GRPO-MT (augmented with 5 templates).
Protocols: \textbf{Standard} (training template),
\textbf{T-Stress} (sample one from $n\in\{1,\ldots,15\}$ semantics-preserving
templates), and \textbf{D-Test} (operator-based input mutations from
\citet{yang2025dynamicmmeval}; \suppsec{D}).
$\Delta\mathrm{T}$ and $\Delta\mathrm{D}$ denote the
Standard$\!\to\!$T-Stress and Standard$\!\to\!$D-Test gaps
(negative = degradation). All reported accuracies are means over 3 seeds.

\subsection{Template Stress Test}
\label{sec:exp_tstress}

Table~\ref{tab:main_results_comprehensive} reports Standard and T-Stress
accuracy, and Figure~\ref{fig:mean_tstress_gap} summarizes the
mean gap per method. On \textbf{general Exam VQA}, standard GRPO gains
in-template accuracy but loses sharply under template shift: on Qwen2.5-VL-7B
MathVista drops by $4.4$ and Olym-Phys by $1.6$, whereas PIRL is flat-to-positive
($\Delta\mathrm{T} = +0.3$ on MathVista, $+1.1$ on Olym-Phys) and matches or beats
the five-template GRPO-MT baseline (MathVista $+0.3$ vs.\
GRPO-MT $-0.7$). Absent an invariance constraint of the kind in (C2),
\suppprop{B.5} gives a mechanism for gaps that scale with the
unseen-template mass, which is the pattern GRPO and GRPO-MT display on Exam VQA.

\paragraph{High-stakes VQA.}
The Medical and Legal blocks of Table~\ref{tab:main_results_comprehensive}
target deployment-oriented domains, where an answer that flips under a reworded
instruction is of little use when the downstream decision carries operational
cost. On \emph{medical VQA}, GRPO degrades under template stress (Qwen2.5-VL-7B:
VQA-RAD $-2.3$, PathVQA $-6.7$) while PIRL stays near flat (VQA-RAD $+0.8$,
PathVQA $-0.8$); the exception is GMAI, where PIRL's drop ($-2.7$) exceeds
GRPO-MT ($-1.3$) without dominating. On \emph{legal VQA} (DocVQA), PIRL retains
accuracy on both base models ($\Delta\mathrm{T} = +0.2$ on Qwen2.5-VL-7B, $+0.5$
on Qwen3-VL-8B) whereas GRPO degrades ($-2.4$ and $-1.3$) and GRPO-MT recovers
only partly.

\subsection{Effect of Template Count}
\label{sec:exp_templatecount}

We sweep $N$, the number of different templates used in evaluation, on Exam VQA
(Fig.~\ref{fig:templates}). GRPO ID accuracy drops from $49.5$ at $N{=}1$ to
$47.7$ at $N{\ge}5$, qualitatively consistent with the mass-dependent
construction in \suppprop{B.5}; the proposition is not a fitted prediction of
this curve. GRPO-MT flattens it (ID $49.5$, OOD
$32.3$). PIRL keeps ID accuracy at $51.1$ and OOD accuracy increases up to
$N{=}10$ ($40.5\to 41.2$).

\subsection{Dynamic Evaluation}
\label{sec:exp_dtest_split}

Under operator-based input mutations~\citep{zhou2026unified}
(Table~\ref{tab:main_results_comprehensive_dynamic};
\suppsec{D}) on \textbf{general (Exam) VQA} PIRL
narrows the open-ended gap (MMK12 $\Delta\mathrm{D} = -1.1$ vs.\ GRPO $-3.6$ on
Qwen2.5-VL-7B), whereas on MathVista every
method collapses ($\sim$$-47$ to $-49$) because multi-choice questions are reconstructed into free-form QA rather than simple paraphrase.

\paragraph{High-stakes VQA.}
On \emph{medical VQA}, PIRL is consistently the least-degraded trained method
(Qwen2.5-VL-7B: VQA-RAD $\Delta\mathrm{D} = -7.3$ vs.\ GRPO $-12.1$; PathVQA
$-3.1$ vs.\ $-5.8$; GMAI $-7.2$ vs.\ $-7.6$). On \emph{legal VQA} (DocVQA),
document-layout mutations degrade every method by $\sim$$20$ points---a
task-altering input mutation rather than a rewording---with PIRL degrading least
($\Delta\mathrm{D} = -20.5$ and $-20.3$ vs.\ GRPO $-22.8$ and $-22.0$). The
document-VQA prompt-robustness claim therefore rests on template stress
(Section~\ref{sec:exp_tstress}), where the failure that matters is an answer that
flips under a reworded instruction.

\subsection{Ablation}
\label{sec:exp_ablation}

Table~\ref{tab:ablation_final_filled} ablates multi-template (MT),
adversarial prompts (ADV; embedding-space), and DTR on Exam VQA Avg.\ and
Olym-Phys. Full PIRL has the strongest T-Stress result ($\Delta\mathrm{T} =
+0.6$ vs.\ GRPO $-2.3$). On the aggregate D-Test the four configurations
cluster within noise ($\Delta\mathrm{D}$ from $-18.4$ to $-17.5$): the
average is dominated by the multiple-choice math benchmark (MathVista)
on which every method collapses
(Table~\ref{tab:main_results_comprehensive_dynamic}), so it does not separate
components. The component signal is therefore on T-Stress, where full PIRL
($+0.6$) and MT+DTR ($0.0$) sit within noise of each other while both clearly
beat GRPO ($-2.3$); this ablation does not cleanly isolate DTR's or the
adversary's individual contributions beyond MT.
We thus present DTR and the adversary as principled additions with no detectable cost
here, not as individually necessary for the leading-order T-Stress gain. On
Olym-Phys, PIRL improves $\Delta\mathrm{T}$ ($+1.1$) and D-Test accuracy over
GRPO, although its $\Delta\mathrm{D}$ is worse ($-4.1$ vs.\ $-3.1$) because
the higher Standard accuracy enlarges the absolute drop. The adversary edits the instruction side only; image-side
robustness is reported in \suppsec{C.1}
(Figure~\ref{fig:perturb_ablation}).

\subsection{Discussion}
\label{sec:exp_discussion}

Against the $\sim$$1.5$ pp inconclusive threshold, the PathVQA/MathVista wins
($-0.8, +0.3$ vs.\ GRPO $-6.7, -4.4$) clear it, while the GMAI gap ($-2.7$ vs.\
GRPO-MT $-1.3$) does not. On Pass@8
(Table~\ref{tab:pass8_results}) PIRL is flat vs.\ base and $+0.6$ to $+2.5$
vs.\ GRPO; the ceiling is set by the base, and PIRL's robustness objective is
consistent with evidence that RLVR often reweights existing probability mass
rather than enlarging the reasoning frontier~\citep{yue2025does,liu2025understanding}.

\section{Related Work}

\textbf{RL for reasoning.}
RLVR is widely used to improve reasoning in LLMs and
MLLMs~\citep{guo2025deepseek,meng2025mm,zhang2025survey,zhou2025reinforced},
but outcome-based optimization also exposes Goodhart-style failures where
policies exploit reward or evaluator
artifacts~\citep{teney2020value,yue2025does,zhou2025opening}, with reported biases such as
length and reward
hacking~\citep{hu2025reinforcep,wang2025causally,lu2025r,liu2025understanding}. While prior reasoning frameworks rely heavily on static environments, we study how a multiplicative verifier can reduce correctness-informative reward variation when format compliance is low.

\noindent\textbf{Robust RL and reward generalization.}
RL-tuned MLLMs overfit to response templates under imperfect
rewards~\citep{zhangimproving,hu2025reinforcep}. Robustness work uses RL to
search adversarial prompts or red-team~\citep{chen2024llm,lin2025r1,pan2025beyond}
or studies perception-side perturbations~\citep{li2025vision}; we instead
train with an in-loop adversary that perturbs the prompt embedding.
Reward-side methods (reward-prompt evolution, preference augmentation,
process rewards~\citep{kim2025toward,zhangimproving,yereward,wang2025adversarial,wang2024rlhfpoison,mao2025information})
are complementary for our framework.

\section{Conclusion}


We formalize the prompt-robustness gap through reward-signal entanglement
and population-level objective non-identifiability, and identify
Reward Decomposition (C1) and Invariance Control (C2) as the corresponding
design conditions. PIRL instantiates these conditions through a Dynamic
Trinary Reward, an embedding-space prompt adversary, and consistency
regularization. Experiments across multimodal reasoning benchmarks show that
PIRL reduces template-induced degradation while largely preserving accuracy.
Extending robust agentic RL to agentic model-routing systems~\cite{zhou2026agent},
where prompt variation can affect both route selection and downstream
generation, is our next step.

\section*{Limitations}

\textbf{Scope of the invariance objective.}
PIRL promotes invariance within a continuous instruction-embedding
neighborhood, a tractable relaxation of the semantic equivalence class.
Although individual perturbations need not decode to literal prompts, this
design enables efficient exploration beyond a finite template pool, while the
token-level KL provides a practical surrogate for answer-level invariance.

\noindent\textbf{Theoretical scope and scaling considerations.}
Our non-identifiability and C1/C2 results are population-level under coverage
assumptions that isolate the underlying mechanism. Finite-sample learnability
guarantees and scaling beyond the 7--8B models evaluated here are promising
directions that the framework is well positioned to support.

{\small
\bibliographystyle{bibstyle}
\bibliography{references}

@article{lau2018dataset,
  title={A Dataset of Clinically Generated Visual Questions and Answers About Radiology Images},
  author={Lau, Jason J. and Gayen, Soumya and Ben Abacha, Asma and Demner-Fushman, Dina},
  journal={Scientific Data},
  volume={5},
  number={1},
  pages={180251},
  year={2018},
  doi={10.1038/sdata.2018.251}
}

@article{camburu2018snli,
  title={e-{SNLI}: Natural language inference with natural language explanations},
  author={Camburu, Oana-Maria and Rockt{\"a}schel, Tim and Lukasiewicz, Thomas and Blunsom, Phil},
  journal={Advances in Neural Information Processing Systems},
  volume={31},
  year={2018}
}

@article{schulman2017proximal,
  title={Proximal policy optimization algorithms},
  author={Schulman, John and Wolski, Filip and Dhariwal, Prafulla and Radford, Alec and Klimov, Oleg},
  journal={arXiv preprint arXiv:1707.06347},
  year={2017}
}

@article{he2020pathvqa,
  title={Pathvqa: 30000+ questions for medical visual question answering},
  author={He, Xuehai and Zhang, Yichen and Mou, Luntian and Xing, Eric and Xie, Pengtao},
  journal={arXiv preprint arXiv:2003.10286},
  year={2020}
}

@article{zhang2025survey,
  title={A Survey of Reinforcement Learning for Large Reasoning Models},
  author={Zhang, Kaiyan and Zuo, Yuxin and He, Bingxiang and Sun, Youbang and Liu, Runze and Jiang, Che and Fan, Yuchen and Tian, Kai and Jia, Guoli and Li, Pengfei and Fu, Yu and Lv, Xingtai and Zhang, Yuchen and Zeng, Sihang and Qu, Shang and Li, Haozhan and Wang, Shijie and Wang, Yuru and Long, Xinwei and Liu, Fangfu and Xu, Xiang and Ma, Jiaze and Zhu, Xuekai and Hua, Ermo and Liu, Yihao and Li, Zonglin and Chen, Huayu and Qu, Xiaoye and Li, Yafu and Chen, Weize and Yuan, Zhenzhao and Gao, Junqi and Li, Dong and Ma, Zhiyuan and Cui, Ganqu and Liu, Zhiyuan and Qi, Biqing and Ding, Ning and Zhou, Bowen},
  journal={arXiv preprint arXiv:2509.08827},
  year={2025}
}

@article{zhou2025reinforced,
  title={Reinforced mllm: A survey on rl-based reasoning in multimodal large language models},
  author={Zhou, Guanghao and Qiu, Panjia and Chen, Cen and Wang, Jie and Yang, Zheming and Xu, Jian and Qiu, Minghui},
  journal={arXiv preprint arXiv:2504.21277},
  year={2025}
}

@article{shao2024deepseekmath,
  title={{DeepSeekMath}: Pushing the Limits of Mathematical Reasoning in Open Language Models},
  author={Shao, Zhihong and Wang, Peiyi and Zhu, Qihao and Xu, Runxin and Song, Junxiao and Bi, Xiao and Zhang, Haowei and Zhang, Mingchuan and Li, Y. K. and Wu, Y. and Guo, Daya},
  journal={arXiv preprint arXiv:2402.03300},
  year={2024}
}

@article{xu2025lingshu,
  title={Lingshu: A Generalist Foundation Model for Unified Multimodal Medical Understanding and Reasoning},
  author={{LASA Team} and Xu, Weiwen and Chan, Hou Pong and Li, Long and Aljunied, Mahani and Yuan, Ruifeng and Wang, Jianyu and Xiao, Chenghao and Chen, Guizhen and Liu, Chaoqun and Li, Zhaodonghui and Sun, Yu and Shen, Junao and Wang, Chaojun and Tan, Jie and Zhao, Deli and Xu, Tingyang and Zhang, Hao and Rong, Yu},
  journal={arXiv preprint arXiv:2506.07044},
  year={2025}
}

@inproceedings{teney2020value,
  title={On the Value of Out-of-Distribution Testing: An Example of {Goodhart}'s Law},
  author={Teney, Damien and Kafle, Kushal and Shrestha, Robik and Abbasnejad, Ehsan and Kanan, Christopher and van den Hengel, Anton},
  booktitle={Advances in Neural Information Processing Systems},
  volume={33},
  pages={407--417},
  year={2020}
}

@inproceedings{yereward,
  title={Reward-Guided Prompt Evolving in Reinforcement Learning for LLMs},
  author={Ye, Ziyu and Agarwal, Rishabh and Liu, Tianqi and Joshi, Rishabh and Velury, Sarmishta and Le, Quoc V and Tan, Qijun and Liu, Yuan},
  booktitle={Forty-second International Conference on Machine Learning},
  year={2025}
}

@article{kim2025toward,
  title={Toward Evaluative Thinking: Meta Policy Optimization with Evolving Reward Models},
  author={Kim, Zae Myung and Park, Chanwoo and Raheja, Vipul and Kim, Suin and Kang, Dongyeop},
  journal={arXiv preprint arXiv:2504.20157},
  year={2025}
}

@article{mao2025information,
  title={Information-Theoretic Reward Decomposition for Generalizable RLHF},
  author={Mao, Liyuan and Xu, Haoran and Zhang, Amy and Zhang, Weinan and Bai, Chenjia},
  journal={arXiv preprint arXiv:2504.06020},
  year={2025}
}

@inproceedings{zhangimproving,
  title={Improving Reward Model Generalization from Adversarial Process Enhanced Preferences},
  author={Zhang, Zhilong and Xu, Tian and Du, Xinghao and Cao, Xingchen and Sun, Yihao and Yu, Yang},
  booktitle={Forty-second International Conference on Machine Learning},
  year={2025}
}

@article{wang2025adversarial,
  title={Adversarial Preference Learning for Robust LLM Alignment},
  author={Wang, Yuanfu and Wang, Pengyu and Xi, Chenyang and Tang, Bo and Zhu, Junyi and Wei, Wenqiang and Chen, Chen and Yang, Chao and Zhang, Jingfeng and Lu, Chaochao and Niu, Yijun and Mao, Keming and Li, Zhiyu and Xiong, Feiyu and Hu, Jie and Yang, Mingchuan},
  journal={arXiv preprint arXiv:2505.24369},
  year={2025}
}

@inproceedings{wang2024rlhfpoison,
  title={Rlhfpoison: Reward poisoning attack for reinforcement learning with human feedback in large language models},
  author={Wang, Jiongxiao and Wu, Junlin and Chen, Muhao and Vorobeychik, Yevgeniy and Xiao, Chaowei},
  booktitle={Proceedings of the 62nd Annual Meeting of the Association for Computational Linguistics (Volume 1: Long Papers)},
  pages={2551--2570},
  year={2024}
}

@article{lin2025r1,
  title={R1-Fuzz: Specializing Language Models for Textual Fuzzing via Reinforcement Learning},
  author={Lin, Jiayi and Su, Liangcai and Li, Junzhe and Qian, Chenxiong},
  journal={arXiv preprint arXiv:2509.20384},
  year={2025}
}

@article{chen2024llm,
  title={When llm meets drl: Advancing jailbreaking efficiency via drl-guided search},
  author={Chen, Xuan and Nie, Yuzhou and Guo, Wenbo and Zhang, Xiangyu},
  journal={Advances in Neural Information Processing Systems},
  volume={37},
  pages={26814--26845},
  year={2024}
}

@article{liu2025understanding,
  title={Understanding r1-zero-like training: A critical perspective},
  author={Liu, Zichen and Chen, Changyu and Li, Wenjun and Qi, Penghui and Pang, Tianyu and Du, Chao and Lee, Wee Sun and Lin, Min},
  journal={arXiv preprint arXiv:2503.20783},
  year={2025}
}

@article{wang2025causally,
  title={Causally-Enhanced Reinforcement Policy Optimization},
  author={Wang, Xiangqi and Huang, Yue and Zhou, Yujun and Luo, Xiaonan and Guo, Kehan and Zhang, Xiangliang},
  journal={arXiv preprint arXiv:2509.23095},
  year={2025}
}

@article{hu2025reinforcep,
  title={REINFORCE++: Stabilizing Critic-Free Policy Optimization with Global Advantage Normalization},
  author={Hu, Jian and Liu, Jason Klein and Xu, Haotian and Shen, Wei},
  journal={arXiv preprint arXiv:2501.03262},
  year={2025}
}

@article{lu2025r,
  title={R-Horizon: How Far Can Your Large Reasoning Model Really Go in Breadth and Depth?},
  author={Lu, Yi and Wang, Jianing and Guo, Linsen and He, Wei and Tang, Hongyin and Gui, Tao and Huang, Xuanjing and Cao, Xuezhi and Wang, Wei and Cai, Xunliang},
  journal={arXiv preprint arXiv:2510.08189},
  year={2025}
}

@article{yue2025does,
  title={Does reinforcement learning really incentivize reasoning capacity in llms beyond the base model?},
  author={Yue, Yang and Chen, Zhiqi and Lu, Rui and Zhao, Andrew and Wang, Zhaokai and Yue, Yang and Song, Shiji and Huang, Gao},
  journal={Advances in Neural Information Processing Systems},
  volume={38},
  pages={57654--57689},
  year={2025}
}

@article{meng2025mm,
  title={{MM-Eureka}: Exploring the Frontiers of Multimodal Reasoning with Rule-based Reinforcement Learning},
  author={Meng, Fanqing and Du, Lingxiao and Liu, Zongkai and Zhou, Zhixiang and Lu, Quanfeng and Fu, Daocheng and Han, Tiancheng and Shi, Botian and Wang, Wenhai and He, Junjun and Zhang, Kaipeng and Luo, Ping and Qiao, Yu and Zhang, Qiaosheng and Shao, Wenqi},
  journal={arXiv preprint arXiv:2503.07365},
  year={2025}
}

@article{guo2025deepseek,
  title={{DeepSeek-R1}: Incentivizing Reasoning Capability in LLMs via Reinforcement Learning},
  author={{DeepSeek-AI}},
  journal={arXiv preprint arXiv:2501.12948},
  year={2025}
}

@article{pan2025beyond,
  title={Beyond Benchmarks: Dynamic, Automatic and Systematic Red-Teaming Agents for Trustworthy Medical Language Models},
  author={Pan, Jiazhen and Jian, Bailiang and Hager, Paul and Zhang, Yundi and Liu, Che and Jungmann, Friedrike and Li, Hongwei Bran and You, Chenyu and Wu, Junde and Zhu, Jiayuan and Liu, Fenglin and Liu, Yuyuan and Bubeck, Niklas and Wachinger, Christian and Chen, Chen and Gong, Zhenyu and Ouyang, Cheng and Kaissis, Georgios and Wiestler, Benedikt and Rueckert, Daniel},
  journal={arXiv preprint arXiv:2508.00923},
  year={2025}
}

@article{chen2024gmaimmbench,
  title={{GMAI-MMBench}: A Comprehensive Multimodal Evaluation Benchmark Towards General Medical AI},
  author={Chen, Pengcheng and Ye, Jin and Wang, Guoan and Li, Yanjun and Deng, Zhongying and Li, Wei and Li, Tianbin and Duan, Haodong and Huang, Ziyan and Su, Yanzhou and Wang, Benyou and Zhang, Shaoting and Fu, Bin and Cai, Jianfei and Zhuang, Bohan and Seibel, Eric J. and Qiao, Yu and He, Junjun},
  journal={Advances in Neural Information Processing Systems},
  volume={37},
  pages={94327--94427},
  year={2024}
}

@article{li2025vision,
  title={Vision Matters: Simple Visual Perturbations Can Boost Multimodal Math Reasoning},
  author={Li, Yuting and Wei, Lai and Zheng, Kaipeng and Huang, Jingyuan and Kong, Linghe and Sun, Lichao and Huang, Weiran},
  journal={arXiv preprint arXiv:2506.09736},
  year={2025}
}

@inproceedings{sagawa2020distributionally,
  title={Distributionally Robust Neural Networks for Group Shifts: On the Importance of Regularization for Worst-Case Generalization},
  author={Sagawa, Shiori and Koh, Pang Wei and Hashimoto, Tatsunori B. and Liang, Percy},
  booktitle={International Conference on Learning Representations},
  year={2020},
  url={https://openreview.net/forum?id=ryxGuJrFvS}
}

@inproceedings{lu2024mathvista,
  title={MathVista: Evaluating Mathematical Reasoning of Foundation Models in Visual Contexts},
  author={Lu, Pan and Bansal, Hritik and Xia, Tony and Liu, Jiacheng and Li, Chunyuan and Hajishirzi, Hannaneh and Cheng, Hao and Chang, Kai-Wei and Galley, Michel and Gao, Jianfeng},
  booktitle={The Twelfth International Conference on Learning Representations},
  year={2024}
}

@inproceedings{he2024olympiadbench,
  title={OlympiadBench: A Challenging Benchmark for Promoting AGI with Olympiad-Level Bilingual Multimodal Scientific Problems},
  author={He, Chaoqun and Luo, Renjie and Bai, Yuzhuo and Hu, Shengding and Thai, Zhen Leng and Shen, Junhao and Hu, Jinyi and Han, Xu and Huang, Yujie and Zhang, Yuxiang and Liu, Jie and Qi, Lei and Liu, Zhiyuan and Sun, Maosong},
  booktitle={Proceedings of the 62nd Annual Meeting of the Association for Computational Linguistics (Volume 1: Long Papers)},
  pages={3828--3850},
  year={2024}
}

@inproceedings{vanlandeghem2023dude,
  title={Document Understanding Dataset and Evaluation (DUDE)},
  author={Van Landeghem, Jordy and Tito, Rub{\`e}n and Borchmann, {\L}ukasz and Pietruszka, Micha{\l} and J{\'o}ziak, Pawe{\l} and Powalski, Rafa{\l} and Jurkiewicz, Dawid and Coustaty, Micka{\"e}l and Ackaert, Bertrand and Valveny, Ernest and Blaschko, Matthew B. and Moens, Sien and Stanis{\l}awek, Tomasz},
  booktitle={Proceedings of the IEEE/CVF International Conference on Computer Vision},
  pages={19528--19540},
  year={2023}
}

@inproceedings{mathew2021docvqa,
  title={DocVQA: A Dataset for VQA on Document Images},
  author={Mathew, Minesh and Karatzas, Dimosthenis and Jawahar, C. V.},
  booktitle={Proceedings of the IEEE/CVF Winter Conference on Applications of Computer Vision},
  pages={2199--2208},
  year={2021},
  doi={10.1109/WACV48630.2021.00225}
}

@article{bai2025qwen25vl,
  title={Qwen2.5-VL Technical Report},
  author={Bai, Shuai and Chen, Keqin and Liu, Xuejing and Wang, Jialin and Ge, Wenbin and Song, Sibo and Dang, Kai and Wang, Peng and Wang, Shijie and Tang, Jun and Zhong, Humen and Zhu, Yuanzhi and Yang, Mingkun and Li, Zhaohai and Wan, Jianqiang and Wang, Pengfei and Ding, Wei and Fu, Zheren and Xu, Yiheng and Ye, Jiabo and Zhang, Xi and Xie, Tianbao and Cheng, Zesen and Zhang, Hang and Yang, Zhibo and Xu, Haiyang and Lin, Junyang},
  journal={arXiv preprint arXiv:2502.13923},
  year={2025}
}

@article{bai2025qwen3vl,
  title={Qwen3-VL Technical Report},
  author={Bai, Shuai and Cai, Yuxuan and Chen, Ruizhe and Chen, Keqin and Chen, Xionghui and Cheng, Zesen and Deng, Lianghao and Ding, Wei and Gao, Chang and Ge, Chunjiang and Ge, Wenbin and Guo, Zhifang and Huang, Qidong and Huang, Jie and Huang, Fei and Hui, Binyuan and Jiang, Shutong and Li, Zhaohai and Li, Mingsheng and Li, Mei and Li, Kaixin and Lin, Zicheng and Lin, Junyang and Liu, Xuejing and Liu, Jiawei and Liu, Chenglong and Liu, Yang and Liu, Dayiheng and Liu, Shixuan and Lu, Dunjie and Luo, Ruilin and Lv, Chenxu and Men, Rui and Meng, Lingchen and Ren, Xuancheng and Ren, Xingzhang and Song, Sibo and Sun, Yuchong and Tang, Jun and Tu, Jianhong and Wan, Jianqiang and Wang, Peng and Wang, Pengfei and Wang, Qiuyue and Wang, Yuxuan and Xie, Tianbao and Xu, Yiheng and Xu, Haiyang and Xu, Jin and Yang, Zhibo and Yang, Mingkun and Yang, Jianxin and Yang, An and Yu, Bowen and Zhang, Fei and Zhang, Hang and Zhang, Xi and Zheng, Bo and Zhong, Humen and Zhou, Jingren and Zhou, Fan and Zhou, Jing and Zhu, Yuanzhi and Zhu, Ke},
  journal={arXiv preprint arXiv:2511.21631},
  year={2025}
}

@inproceedings{yang2025dynamicmmeval,
  title={Dynamic Multimodal Evaluation with Flexible Complexity by Vision-Language Bootstrapping},
  author={Yang, Yue and Zhang, Shuibai and Shao, Wenqi and Zhang, Kaipeng and Bin, Yi and Wang, Yu and Luo, Ping },
  booktitle={International Conference on Learning Representations},
  year={2025}
}

@inproceedings{zhou2025opening,
  title={{Opening}: A Comprehensive Benchmark for Judging Open-Ended Interleaved Image-Text Generation},
  author={Zhou, Pengfei and Peng, Xiaopeng and Song, Jiajun and Li, Chuanhao and Xu, Zhaopan and Yang, Yue and Guo, Ziyao and Zhang, Hao and Lin, Yuqi and He, Yefei and others},
  booktitle={2025 IEEE/CVF Conference on Computer Vision and Pattern Recognition (CVPR)},
  pages={56--66},
  year={2025},
  organization={IEEE}
}

@inproceedings{zhou2026mdk12,
  title={{MDK12-Bench}: A Multi-Discipline Benchmark for Evaluating Reasoning in Multimodal Large Language Models},
  author={Zhou, Pengfei and Peng, Xiaopeng and Zhang, Fanrui and Xu, Zhaopan and Ai, Jiaxin and Qiu, Yansheng and Zhao, Wangbo and Song, Jiajun and Li, Chuanhao and Tang, Weidong and others},
  booktitle={Proceedings of the AAAI Conference on Artificial Intelligence},
  volume={40},
  number={34},
  pages={28982--28990},
  year={2026}
}

@article{zhou2026unified,
  title={{Unified Hallucination Fuzzing for Multimodal Large Language Models}},
  author={Zhou, Pengfei and Song, Jiajun and Tang, Zhiwei and Ma, Yixing and Peng, Xiaopeng and Si, Donghui and Xu, Yuhang and Song, Huiqi and Miao, Yiyuan and Qian, Yichen and others},
  journal={arXiv preprint arXiv:2608.07525},
  year={2026}
}

@article{zhou2026agent,
  title={{Agent-as-a-Router}: Agentic Model Routing for Coding Tasks},
  author={Zhou, Pengfei and Tang, Zhiwei and Ma, Yixing and Tang, Jiasheng and Han, Yizeng and Wan, Zhenglin and Meng, Fanqing and Wang, Wei and Zhuang, Bohan and Zhao, Wangbo and others},
  journal={arXiv preprint arXiv:2606.22902},
  year={2026}
}
}



\appendix

\setcounter{table}{3}
\setcounter{figure}{3}

\section{Formal Definitions of (C1) and (C2)}
\label{sec:appendix_defs}

We give the formal versions of the two design conditions stated informally
in \mainsec{2.4}.
Throughout, total variation uses the standard convention
$\mathrm{TV}(P,Q)=\sup_A |P(A)-Q(A)|$, equivalently
$\frac12\|P-Q\|_1$ on countable spaces.
All transformation distributions in the population statements are probability
measures on $(\mathcal{T},\Sigma_{\mathcal{T}})$; finite-pool distributions are
viewed as zero-extended measures on this same space.

\begin{definition}[(C1) Reward Decomposition]
  \label{def:rd}
  The reward $R$ admits an additive decomposition
  $R = R_t + \alpha_{\mathrm{rew}} R_s$ with
  fixed $\alpha_{\mathrm{rew}}>0$, $R_t \in \{-1,0,+1\}$ format-aware, and
  $R_s \in [0,1]$ a bounded measurable semantic score. Fix the stochastic rollout-law
  class on which the condition is asserted, and let $\widetilde V_c$ denote the
  reward-side correctness indicator used for the bucket decomposition: for the
  binary verifier $\widetilde V_c=V_c$, while for extraction-based DTR
  $\widetilde V_c=\mathbf{1}\{\mathcal{A}_{\tau}(\bm{y})=a^*\}$.
  There exist constants
  $m,c>0$, independent of the $p_f$ values under consideration, such that for
  every rollout distribution in that class, every joint bucket
  $(\widetilde V_c,V_f)=(v,v')$ with positive probability satisfies
  $\mathrm{Var}(R_s\mid \widetilde V_c{=}v,V_f{=}v') \ge m$, and every format bucket with
  positive probability satisfies $\mathrm{Var}(R \mid V_f = v') \ge c$.
\end{definition}

\begin{definition}[(C2) Invariance Control (answer-level)]
  \label{def:ie}
  Let $\mathcal{C}(\bm{x})\subseteq\mathcal{T}$ be a finite
  semantics-preserving transformation pool for the fixed task instance
  $\bm{x}$, with every $\tau\in\mathcal{C}(\bm{x})$ satisfying
  $\tau(\bm{x})\in\llbracket\bm{x}\rrbracket$. Let
  $\mathcal{Y}_{\text{ans}}$ denote the answer space, and equip
  $\mathcal{Y}_{\text{ans}}\cup\{\emptyset\}$ with the discrete
  sigma-algebra, with $\emptyset\notin\mathcal{Y}_{\text{ans}}$.
  For each transformation $\tau$, let
  $\mathcal{A}_{\tau}(\bm{y}) := \mathcal{A}(\bm{y};d_\tau)
  \in\mathcal{Y}_{\text{ans}}\cup\{\emptyset\}$ be the measurable deterministic
  answer-extraction map using the delimiter requested by $\tau$ (the extractor
  used in \maineq{5}). Write
  \begin{equation}
    \pi^{\mathcal{A}_{\tau}}_\theta(\cdot\mid\tau(\bm{x}))
    :=
    (\mathcal{A}_{\tau})_{\#}\pi_\theta(\cdot\mid\tau(\bm{x}))
  \end{equation}
  for the induced prompt-conditioned answer marginal, where the countable-token
  special case gives
  $\pi^{\mathcal{A}_{\tau}}_\theta(a\mid\tau(\bm{x}))
  =\sum_{\bm{y}:\mathcal{A}_{\tau}(\bm{y})=a}
    \pi_\theta(\bm{y}\mid\tau(\bm{x}))$,
  $P_\tau^\theta:=\pi^{\mathcal{A}_{\tau}}_\theta(\cdot\mid\tau(\bm{x}))$, and
  $\mathcal{F}_\tau$ denotes the format requested by $\tau$. For an
  extractor-consistent format family, if a response in $\mathcal{F}_\tau$
  encodes answer $a$, then $\mathcal{A}_{\tau}$ returns $a$; if the response
  is in a format family disjoint from $\mathcal{F}_\tau$, then
  $\mathcal{A}_{\tau}$ returns $\emptyset$. The C1/C2 consequences proof uses
  this delimiter-separated extractor convention.
  For an $f$-divergence $D$ admitting a Pinsker-type bound
  $\mathrm{TV}(P, Q) \le \kappa_D \sqrt{D(P\|Q)}$
  (for the natural-log KL used here, $\kappa_D = 1/\sqrt 2$),
  the training procedure fixes a non-degenerate coverage
  distribution $\rho_{\bm{x}}\in\Delta(\mathcal{C}(\bm{x}))$ and imposes
  \begin{equation}
    \mathbb{E}_{\tau_1, \tau_2 \sim \rho_{\bm{x}}}
    \!\left[D\!\left(P_{\tau_1}^{\theta}\,\middle\|\,P_{\tau_2}^{\theta}\right)\right]
    \;\le\; \varepsilon.
  \end{equation}
  The non-degeneracy requirement means
  $|\{\tau\in\mathcal{C}(\bm{x}):\rho_{\bm{x}}(\{\tau\})>0\}|\ge2$,
  ruling out the vacuous point-mass choice $\rho_{\bm{x}}=\delta_{\tau}$.
  This condition only controls transformations receiving coverage mass.
\end{definition}

\begin{lemma}[Data-processing inequality for the answer marginal]
  \label{lem:dpi_surrogate}
  Let $D$ be any $f$-divergence and let $\mathcal{A}_{\#}P$ denote the
  push-forward of a sequence distribution $P$ under a fixed deterministic
  extraction map $\mathcal{A}$. For any two prompts
  $\tau_1(\bm{x}), \tau_2(\bm{x})$, write
  $P_i=\pi_\theta(\cdot\mid\tau_i(\bm{x}))$. Then
  \begin{equation}
    D\!\left(\mathcal{A}_{\#}P_1\,\middle\|\,\mathcal{A}_{\#}P_2\right)
    \;\le\; D\!\left(P_1\,\middle\|\,P_2\right),
  \end{equation}
  where the right-hand side is the divergence between full sequence
  distributions.
\end{lemma}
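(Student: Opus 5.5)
This is the standard data-processing inequality for $f$-divergences, and I will prove it directly from Jensen's inequality. It is cleanest first in the countable-token setting of Definition~\ref{def:ie}, and then extended to general measurable spaces.

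Write $D(P\|Q)=\sum_{\bm{y}} Q(\bm{y})\, f\!\bigl(P(\bm{y})/Q(\bm{y})\bigr)$ with $f$ convex and $f(1)=0$. Terms with $Q(\bm{y})=0<P(\bm{y})$ are handled by the usual convention $0\cdot f(p/0)=p\,f'(\infty)$, where $f'(\infty):=\lim_{t\to\infty}f(t)/t$.

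\textbf{Partition the sequence space.} For each answer value $a\in\mathcal{Y}_{\text{ans}}\cup\{\emptyset\}$, let $S_a:=\mathcal{A}^{-1}(a)$. These sets are measurable because $\mathcal{A}$ is measurable and the answer space carries the discrete sigma-algebra. The push-forward masses are then $(\mathcal{A}_{\#}P_i)(a)=P_i(S_a)$. The full-sequence divergence splits as a sum of block contributions
\begin{equation}
D(P_1\|P_2)=\sum_a \sum_{\bm{y}\in S_a} P_2(\bm{y})\, f\!\left(\frac{P_1(\bm{y})}{P_2(\bm{y})}\right).
\end{equation}

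\textbf{Bound each block by Jensen.} Fix $a$ and suppose $P_2(S_a)>0$. The weights $P_2(\bm{y})/P_2(S_a)$ form a probability distribution on $S_a$. Convexity of $f$ then gives that the inner block sum is at least
\begin{equation}
P_2(S_a)\, f\!\left(\frac{P_1(S_a)}{P_2(S_a)}\right).
\end{equation}
This is exactly the $a$-th term of $D(\mathcal{A}_{\#}P_1\|\mathcal{A}_{\#}P_2)$.

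\textbf{Boundary blocks.} If $P_2(S_a)=0<P_1(S_a)$, every $\bm{y}\in S_a$ contributes $P_1(\bm{y})f'(\infty)$ by the convention above. These sum to $P_1(S_a)f'(\infty)$, which matches the coarse term exactly. If both masses vanish, both sides are zero.

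Summing over $a$ yields the claim. If $f'(\infty)=+\infty$ and such a block exists, both sides are $+\infty$, so the inequality still holds.

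\textbf{General case.} The one technical point is extending beyond countable token spaces. For that I would use the Radon--Nikodym form. Take a dominating measure $\mu=P_1+P_2$ and densities $p_i=dP_i/d\mu$. The key identity is
\begin{equation}
\frac{d\mathcal{A}_{\#}P_1}{d\mathcal{A}_{\#}\mu}\circ\mathcal{A}=\mathbb{E}_\mu\!\left[p_1\mid\sigma(\mathcal{A})\right].
\end{equation}
With this in hand, I would apply conditional Jensen to the perspective function $(u,v)\mapsto v f(u/v)$, which is jointly convex. Because the answer space is countable with the discrete sigma-algebra, the conditional expectation is just the block average above. The countable argument therefore already covers the case used in the paper, and nothing beyond the partition computation is needed.

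For the KL used in (C2), $f(t)=t\log t$, and the lemma reduces to the familiar chain-rule statement.
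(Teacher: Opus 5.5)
Your approach matches the paper's, unpacked. The paper's proof only observes that $\bm{y}\mapsto\mathcal{A}(\bm{y})$ is a deterministic channel and cites the data-processing inequality for $f$-divergences. You instead prove that inequality for the partition-induced channel by blockwise Jensen, which is the standard proof of the cited fact.

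One step of your countable argument needs a patch. In a block with $P_2(S_a)>0$, there can still be sequences $\bm{y}\in S_a$ with $P_2(\bm{y})=0<P_1(\bm{y})$. Your boundary-block paragraph only treats $P_2(S_a)=0$, so this case is not covered.

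Write $S_a^+:=S_a\cap\{P_2>0\}$ and $S_a^0:=S_a\setminus S_a^+$. Jensen with the weights $P_2(\bm{y})/P_2(S_a)$ only sees $S_a^+$. It therefore yields $P_2(S_a)\,f\bigl(P_1(S_a^+)/P_2(S_a)\bigr)$, not $P_2(S_a)\,f\bigl(P_1(S_a)/P_2(S_a)\bigr)$. The remaining points contribute $P_1(S_a^0)\,f'(\infty)$, and you still have to absorb them.

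The missing line is $f(t+s)\le f(t)+s\,f'(\infty)$ for $s\ge 0$. This holds because difference quotients of a convex function increase to $f'(\infty)$. Apply it with $t=P_1(S_a^+)/P_2(S_a)$ and $s=P_1(S_a^0)/P_2(S_a)$, then multiply by $P_2(S_a)$.

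Alternatively, run the block Jensen with weights from $\mu=P_1+P_2$ on the jointly convex perspective $(u,v)\mapsto v\,f(u/v)$, as in your general-case paragraph. That handles every support configuration at once.

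For full-support softmax policies this case never arises. The lemma, however, is stated for arbitrary $P_1,P_2$, so the patch is needed for the general claim.
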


\begin{proof}
$\bm{y} \mapsto \mathcal{A}(\bm{y})$ is a deterministic data-processing
channel; the data processing inequality for $f$-divergences gives the bound.
\end{proof}

\paragraph{Relation to the per-token estimator used in
\maineq{8}.} The penalty $\mathcal{L}_{\mathrm{cons}}$ is computed
\emph{per token} and length-normalized. With
$p_t^{\tau}:=\pi_\theta(\cdot\mid\bm{y}_{<t},\tau_\phi(\bm{x}))$ and
$p_t^{\bm{x}}:=\pi_\theta(\cdot\mid\bm{y}_{<t},\bm{x})$,
\begin{equation}
\begin{aligned}
\mathcal{L}_{\mathrm{cons}}(\theta;\bm{x},\phi)
&=
\frac{1}{T}\sum_{t=1}^{T}
\mathbb{E}_{\bm{y}_{<t}\sim\nu_t}  D_{\mathrm{KL}}\!\left(
  p_t^{\tau}\,\middle\|\,\operatorname{sg}(p_t^{\bm{x}})
\right),
\end{aligned}
\end{equation}
not on the full sequence distribution. The stop-gradient does not change the
KL value, only the update direction. At a fixed $\theta$ and $\phi$, KL obeys a
chain rule: if $\nu_t$ is the prefix law induced by the perturbed-prompt policy
under a fixed truncation/stopping convention, the full-sequence
$D_{\mathrm{KL}}(\pi_\theta(\cdot\mid\tau_\phi(\bm{x}))\,\|\,\pi_\theta(\cdot\mid\bm{x}))$
equals the unnormalized per-token sum averaged over prefixes drawn from
$\pi_\theta(\cdot\mid\tau_\phi(\bm{x}))$. Thus, for that sampled-prefix choice,
$T\mathcal{L}_{\mathrm{cons}}$ estimates this truncated sequence KL;
$\mathcal{L}_{\mathrm{cons}}$ itself is a length-normalized surrogate.
Teacher-forced prefixes use a different $\nu_t$ and do not give this equality
exactly; they are an
implementation surrogate for the same local consistency pressure.
We therefore do not claim that $\mathcal{L}_{\mathrm{cons}}$ is a uniform
upper bound on the right-hand side of Lemma~\ref{lem:dpi_surrogate}. The
lemma applies only when the same deterministic extractor is used on both
sequence distributions; (C2) uses the prompt-conditioned family
$\mathcal{A}_{\tau}$. Thus the per-token penalty is a tractable,
differentiable surrogate rather than a formal certificate of (C2). The
empirical question, whether optimizing the per-token surrogate decreases the
answer-level divergence, remains open and is consistent with the observed
robustness in \mainsec{4}.

\section{Full Proofs}
\label{sec:appendix_proofs}

\subsection{Binary-verifier within-format reward-variance share (Theorem~\ref{thm:snr})}
\label{sec:appendix_snr}

\begin{theorem}[Binary-verifier within-format reward-variance share]
  \label{thm:snr}
  Under $V_f \perp V_c$ with $p_f, p_c \in (0,1)$, the variance of $V$ in
  \maineq{2} decomposes via the law of total variance as
  \begin{equation}
  \begin{aligned}
    \mathrm{Var}(V)
    &= \underbrace{p_f p_c (1{-}p_c)}_{\text{within-format}}
      + \underbrace{p_c^{2} p_f (1{-}p_f)}_{\text{between-format}} \\
    &= p_f p_c (1{-}p_f p_c).
    \label{eq:variance_decomp}
  \end{aligned}
  \end{equation}
  The \emph{within-format reward-variance share} is
  \begin{equation}
    \Phi_{\mathrm{wf}}(p_f, p_c)
    := \frac{\mathbb{E}[\mathrm{Var}(V\mid V_f)]}{\mathrm{Var}(V)}
    = \frac{1 - p_c}{1 - p_f p_c},
    \label{eq:phi_wf}
  \end{equation}
  strictly increasing in $p_f$ on $(0,1)$, with
  $\lim_{p_f\uparrow 1}\Phi_{\mathrm{wf}}=1$ and
  $\lim_{p_f\downarrow0}\Phi_{\mathrm{wf}}=1-p_c$.
\end{theorem}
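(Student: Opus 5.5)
The plan is a direct computation via the law of total variance, conditioning on the binary format indicator $V_f$. Since $V=V_fV_c$ with $V_f,V_c\in\{0,1\}$ independent, on the bucket $\{V_f=0\}$ the reward is identically $0$. On $\{V_f=1\}$ it equals $V_c$, which by independence is still $\mathrm{Bernoulli}(p_c)$.

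\textbf{Within- and between-format terms.} The conditional variances are $\mathrm{Var}(V\mid V_f{=}0)=0$ and $\mathrm{Var}(V\mid V_f{=}1)=p_c(1-p_c)$. Hence the within-format term is $\mathbb{E}[\mathrm{Var}(V\mid V_f)]=p_fp_c(1-p_c)$. The conditional mean is $\mathbb{E}[V\mid V_f]=p_cV_f$, so the between-format term is $\mathrm{Var}(p_cV_f)=p_c^2p_f(1-p_f)$. Adding the two terms gives
\[
p_fp_c\bigl[(1-p_c)+p_c(1-p_f)\bigr]=p_fp_c(1-p_fp_c).
\]
As a consistency check, $V$ is itself $\mathrm{Bernoulli}(p_fp_c)$, whose variance is $p_fp_c(1-p_fp_c)$ directly. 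Both terms are strictly positive on $(0,1)^2$, so dividing is legitimate, and the ratio is
\[
\Phi_{\mathrm{wf}}=\frac{p_fp_c(1-p_c)}{p_fp_c(1-p_fp_c)}=\frac{1-p_c}{1-p_fp_c}.
\]

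\textbf{Monotonicity and limits.} With $p_c\in(0,1)$ fixed, the numerator $1-p_c$ is a positive constant. The denominator $1-p_fp_c$ is positive and strictly decreasing in $p_f$, so $\Phi_{\mathrm{wf}}$ is strictly increasing. Equivalently, $\partial_{p_f}\Phi_{\mathrm{wf}}=p_c(1-p_c)/(1-p_fp_c)^2>0$. The closed form is continuous on $[0,1]$ in $p_f$, so the limits follow by substituting the endpoints: $p_f\to1$ gives $(1-p_c)/(1-p_c)=1$, and $p_f\to0$ gives $1-p_c$.

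\textbf{Main obstacle.} There is no real obstacle here. The only point needing care is that the independence assumption $V_f\perp V_c$ is what makes the law of $V_c$ given $\{V_f=1\}$ equal to $\mathrm{Bernoulli}(p_c)$. Without it, $p_c$ would have to be replaced by the conditional rate $\Pr[V_c=1\mid V_f=1]$, which is the correlated case the paper discusses in B.2.
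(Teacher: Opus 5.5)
Your proposal is correct and matches the paper's proof step for step. Both use the law of total variance conditioned on $V_f$, the same conditional moments, and the same within/between terms summing to $p_fp_c(1-p_fp_c)$; both get monotonicity from the derivative $p_c(1-p_c)/(1-p_fp_c)^2>0$ and the limits from the continuous extension of the closed form. Your $\mathrm{Bernoulli}(p_fp_c)$ sanity check and your remark that independence is what fixes the law of $V$ on $\{V_f=1\}$ are small additions, and the latter is exactly what the paper takes up separately in its positive-correlation subsection.
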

For $G$ independent rollouts from the same prompt distribution, the expected
count of format-valid samples is $Gp_f$.

\noindent\textit{Proof.}
By the law of total variance applied to $V = V_f V_c$ with $V_f \perp V_c$,
we decompose the variability of $V$ into the conditional variance given $V_f$
and the variance induced by the randomness of $V_f$:
\begin{equation}
  \mathrm{Var}(V) \;=\; \mathbb{E}\bigl[\mathrm{Var}(V \mid V_f)\bigr] + \mathrm{Var}\bigl(\mathbb{E}[V \mid V_f]\bigr).
\end{equation}
The conditional moments are direct: since $V = V_c$ when $V_f = 1$ and $V = 0$
when $V_f = 0$,
\begin{equation}
\begin{aligned}
  \mathbb{E}[V \mid V_f = 1] &= \mathbb{E}[V_c] = p_c, \\
  \mathbb{E}[V \mid V_f = 0] &= 0, \\
  \mathrm{Var}(V \mid V_f = 1) &= \mathrm{Var}(V_c) = p_c(1 - p_c), \\
  \mathrm{Var}(V \mid V_f = 0) &= 0.
\end{aligned}
\end{equation}
Therefore
\begin{equation}
\begin{aligned}
  \mathbb{E}\bigl[\mathrm{Var}(V \mid V_f)\bigr]
    &= p_f \cdot p_c(1 - p_c), \\
  \mathrm{Var}\bigl(\mathbb{E}[V \mid V_f]\bigr)
    &= p_f \cdot p_c^{2} - (p_f \, p_c)^{2} \\
    &= p_c^{2} \, p_f(1 - p_f),
\end{aligned}
\end{equation}
and summing,
\begin{equation}
\begin{aligned}
  \mathrm{Var}(V)
    &= p_f p_c (1 - p_c) + p_c^{2} p_f (1 - p_f) \\
    &= p_f p_c \bigl[(1 - p_c) + p_c (1 - p_f)\bigr] \\
    &= p_f p_c (1 - p_f p_c),
\end{aligned}
\end{equation}
which matches Eq.~\eqref{eq:variance_decomp}. The within-format reward-variance share therefore is
\begin{equation}
\begin{aligned}
  \Phi_{\mathrm{wf}}(p_f, p_c)
    &= \frac{\mathbb{E}[\mathrm{Var}(V \mid V_f)]}{\mathrm{Var}(V)} \\
    &= \frac{p_f p_c (1 - p_c)}{p_f p_c (1 - p_f p_c)}
     = \frac{1 - p_c}{1 - p_f p_c}.
\end{aligned}
\end{equation}
Differentiating with respect to $p_f$ at fixed $p_c \in (0, 1)$,
\begin{equation}
\begin{aligned}
  \frac{\partial}{\partial p_f}\frac{1 - p_c}{1 - p_f p_c}
    &= \frac{(1 - p_c) p_c}{(1 - p_f p_c)^{2}} \;>\; 0,
\end{aligned}
\end{equation}
so $\Phi_{\mathrm{wf}}$ is strictly increasing in $p_f$ on $(0,1)$. Its continuous
extension satisfies $\lim_{p_f\uparrow1}\Phi_{\mathrm{wf}}(p_f,p_c)=1$ and
$\lim_{p_f\downarrow0}\Phi_{\mathrm{wf}}(p_f,p_c)=1-p_c$. The original ratio is not
defined at $p_f=0$ because then $\mathrm{Var}(V)=0$. \qed

\subsection{Proof of the Positive-Correlation Tightening}
\label{sec:appendix_snr_corr}

Drop the independence assumption and reparametrize by the conditional success
rates $p := \Pr[V_c = 1 \mid V_f = 1]$ and
$q := \Pr[V_c = 1 \mid V_f = 0]$, with
$p_f\in(0,1)$ and feasible $p,q\in[0,1]$. The marginal is
$p_c = p_f \, p + (1 - p_f) \, q$. Independence corresponds to
$p = q = p_c$.

Since $V = V_f V_c$, the value of $V_c$ when $V_f = 0$ is irrelevant: $V \equiv 0$
on that event. The conditional moments become
\begin{equation}
\begin{aligned}
  \mathbb{E}[V \mid V_f = 1] &= p, \\
  \mathbb{E}[V \mid V_f = 0] &= 0, \\
  \mathrm{Var}(V \mid V_f = 1) &= p(1 - p), \\
  \mathrm{Var}(V \mid V_f = 0) &= 0,
\end{aligned}
\end{equation}
and exactly the same algebra as in Section~\ref{sec:appendix_snr} yields
\begin{equation}
\begin{aligned}
  \mathrm{Var}(V)
    &= p_f \, p \, (1 - p_f \, p), \\
  \Phi_{\mathrm{wf}}(p_f, p)
    &= \frac{1 - p}{1 - p_f \, p}.
\end{aligned}
\end{equation}
The ratio is defined for $p>0$; the boundary $p=0$ has
$\mathrm{Var}(V)=0$ because the multiplicative verifier never emits reward
$1$. The positive-correlation comparison below has $p>p_c$ with
$p_c\in(0,1)$, so it stays away from this degenerate boundary.
Note that $q$ does not appear: the share depends only on the conditional
$p = \Pr[V_c{=}1\mid V_f{=}1]$. Differentiating,
\begin{equation}
\begin{aligned}
  \frac{\partial \Phi_{\mathrm{wf}}}{\partial p}
    &= \frac{-(1 - p_f \, p) + p_f(1 - p)}
            {(1 - p_f \, p)^{2}} \\
    &= \frac{p_f - 1}{(1 - p_f \, p)^{2}}
      \;<\; 0
      \quad \text{for } p_f \in (0, 1).
\end{aligned}
\end{equation}
Holding the marginal $p_c$ fixed and staying within the feasible range
$q\in[0,1]$, the constraint
$p_c = p_f p + (1 - p_f) q$ requires $\partial p / \partial q = -(1-p_f)/p_f$,
so increasing the correlation between $V_f$ and $V_c$ (i.e., increasing $p$ at
the expense of $q$ with marginal $p_c$ fixed) strictly decreases
$\Phi_{\mathrm{wf}}$. Independence corresponds to $p=p_c$. Thus positive correlation ($p>p_c$) gives a strictly smaller share than the independent formula, while
nondegenerate negative correlation ($0<p<p_c$) reverses the inequality. The
independent formula is an upper envelope only over the
nonnegative/positive-correlation regime.
\qed

\subsection{Quantitative non-identifiability (Proposition~\ref{prop:nonid_main})}
\label{sec:appendix_nonid_proof}

We use the family $\mathcal{T}$ of semantics-preserving transformations and the
reward functional $\mathcal{R}(\pi,\rho;\bm{x})$ of \mainsec{2.3};
for $\tau(\bm{x})=(\bm{v},\bm{q},\mathcal{I}_\tau)$ write
$\mathcal{F}_\tau:=\mathcal{F}(\mathcal{I}_\tau)$.

\begin{assumption}[Restricted prompt coverage]
  \label{ass:rc}
  There are measurable sets
  $\emptyset\ne S_{\text{train}}\subseteq\mathcal{T}_{\text{train}}
  \subsetneq \mathcal{T}$ with
  $\mathcal{T}_{\text{train}},S_{\text{train}}\in\Sigma_{\mathcal{T}}$ and
  $P_{\text{train}}(S_{\text{train}})=1$.
\end{assumption}

\begin{assumption}[Disjoint unseen format]
  \label{ass:separated_format}
  There exists $\tau_B \in \mathcal{T}\setminus\mathcal{T}_{\text{train}}$
  whose verifier format constraint $\mathcal{F}_B:=\mathcal{F}_{\tau_B}$ is
  disjoint from
  $\mathcal{F}_{\text{train}}:=\bigcup_{\tau\in\mathcal{T}_{\text{train}}}
  \mathcal{F}_\tau$.
\end{assumption}

\begin{assumption}[Policy-class extension realizability]
  \label{ass:format_realizable}
  The policy class contains two
  measurable deterministic branches $\pi_1,\pi_2$ that coincide on every
  $\tau\in\mathcal{T}\setminus\{\tau_B\}$. On
  $S_{\text{train}}$ they emit the correct answer $a^*$ in the
  requested format. At $\tau_B$, $\pi_1$ emits $a^*$ in a format requested by
  some transformation in $S_{\text{train}}$, while $\pi_2$ emits $a^*$ in
  $\mathcal{F}_B$.
\end{assumption}

\begin{proposition}[Quantitative non-identifiability]
  \label{prop:nonid_main}
  Under Assumptions~\ref{ass:rc},~\ref{ass:separated_format},
  and~\ref{ass:format_realizable}, with a $\tau$-dependent verifier and reward
  $R \in [R_{\min}, R_{\max}]$ such that a correct response in
  $\mathcal{F}_\tau$ receives $R_{\max}$ and any response lying in a format
  constraint disjoint from $\mathcal{F}_\tau$ receives $R_{\min}$, the two
  branches $\pi_1 \neq \pi_2$ from Assumption~\ref{ass:format_realizable} are
  $P_{\text{train}}$-optimal with equal training reward, yet for the separated $\tau_B$ and
  any $P_{\text{test}}\in\Delta(\mathcal{T})$ with atomic mass
  $m_B := P_{\text{test}}(\{\tau_B\}) > 0$, where
  $\Delta_R := R_{\max}-R_{\min}$,
  \begin{equation}
  \begin{aligned}
    &\mathcal{R}(\pi_2,P_{\text{test}};\bm{x})
    - \mathcal{R}(\pi_1,P_{\text{test}};\bm{x}) \ge m_B\Delta_R.
    \label{eq:test_gap}
  \end{aligned}
  \end{equation}
  Hence the $P_{\text{train}}$-optimum set has diameter $\ge m_B\Delta_R$ under
  $P_{\text{test}}$-reward; this is a population-level identifiability statement
  about the objective.
\end{proposition}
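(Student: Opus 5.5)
The argument is a direct pointwise comparison of the two branches from Assumption~\ref{ass:format_realizable}, followed by integration against the relevant transformation distributions.

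\textbf{Step 1: equal training reward and optimality.} By Assumption~\ref{ass:rc}, $P_{\text{train}}(S_{\text{train}})=1$. On $S_{\text{train}}$, both $\pi_1$ and $\pi_2$ deterministically emit $a^*$ in the requested format $\mathcal{F}_\tau$, so the reward hypothesis gives $R=R_{\max}$ pointwise for every $\tau\in S_{\text{train}}$. The integrand in Eq.~\eqref{eq:reward_functional} therefore equals $R_{\max}$ $P_{\text{train}}$-almost surely, and
\[
\mathcal{R}(\pi_1,P_{\text{train}};\bm{x})=\mathcal{R}(\pi_2,P_{\text{train}};\bm{x})=R_{\max}.
\]
Since $R\le R_{\max}$ for every policy, both branches attain the supremum, so both are $P_{\text{train}}$-optimal. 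Moreover $\pi_1\ne\pi_2$, because they differ at $\tau_B$: the format $\pi_1$ emits there lies in $\mathcal{F}_{\text{train}}$, which by Assumption~\ref{ass:separated_format} is disjoint from $\mathcal{F}_B$.

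\textbf{Step 2: pointwise comparison on the test side.} Split $\mathcal{T}=\{\tau_B\}\cup(\mathcal{T}\setminus\{\tau_B\})$. The singleton $\{\tau_B\}$ must be measurable for $m_B$ to make sense, and I will assume this, as the statement implicitly does.

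On $\mathcal{T}\setminus\{\tau_B\}$ the two policies coincide, so their rewards agree pointwise and that part of the difference is exactly $0$.

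At $\tau_B$:
\begin{itemize}
\item $\pi_2$ emits $a^*$ in $\mathcal{F}_B=\mathcal{F}_{\tau_B}$, so it receives $R_{\max}$.
\item $\pi_1$ emits $a^*$ in a format $\mathcal{F}_{\tau'}$ with $\tau'\in S_{\text{train}}\subseteq\mathcal{T}_{\text{train}}$. That format lies inside $\mathcal{F}_{\text{train}}$, which Assumption~\ref{ass:separated_format} makes disjoint from $\mathcal{F}_{\tau_B}$. The reward hypothesis for disjoint formats then gives $R_{\min}$.
\end{itemize}

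\textbf{Step 3: integrate.} Integrating the pointwise difference against $P_{\text{test}}$ gives
\[
\mathcal{R}(\pi_2,P_{\text{test}};\bm{x})-\mathcal{R}(\pi_1,P_{\text{test}};\bm{x})=m_B(R_{\max}-R_{\min})=m_B\Delta_R,
\]
which yields Eq.~\eqref{eq:test_gap}, in fact with equality. The diameter claim follows immediately: $\pi_1$ and $\pi_2$ both lie in the $P_{\text{train}}$-optimum set, and their $P_{\text{test}}$-rewards differ by at least $m_B\Delta_R$.

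\textbf{Main obstacle.} The only delicate step is the format-disjointness bookkeeping at $\tau_B$. One must check that the response $\pi_1$ emits, which lies in some $\mathcal{F}_{\tau'}$ with $\tau'\in S_{\text{train}}$, really lies in a constraint disjoint from $\mathcal{F}_{\tau_B}$. This holds because $\mathcal{F}_{\tau'}\subseteq\mathcal{F}_{\text{train}}$ and $\mathcal{F}_{\text{train}}\cap\mathcal{F}_B=\emptyset$. The remaining technical points are routine:
\begin{itemize}
\item Measurability of $\tau\mapsto\mathbb{E}_{\bm{y}\sim\pi(\cdot\mid\tau(\bm{x}))}R$ is supplied by the measurable-branch clause of Assumption~\ref{ass:format_realizable}.
\item Determinism of the branches means the inner expectation over $\bm{y}$ collapses to a single evaluation.
\end{itemize}
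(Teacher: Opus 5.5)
Your proposal is correct and follows essentially the same route as the paper's proof. Both arguments get $R_{\max}$ for both branches on the $P_{\text{train}}$-full set $S_{\text{train}}$, pointwise agreement off $\tau_B$, and an $R_{\max}$ versus $R_{\min}$ split at $\tau_B$ via $\mathcal{F}_{\tau'}\subseteq\mathcal{F}_{\text{train}}$ being disjoint from $\mathcal{F}_B$, then integrate against $P_{\text{test}}$; your observation that the gap holds with equality matches the paper's remark that the bound is tight on this construction.
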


\noindent\textit{Proof.}
By Assumption~\ref{ass:separated_format}, take a
$\tau_B \in \mathcal{T}\setminus\mathcal{T}_{\text{train}}$ whose formatting
instruction induces a constraint $\mathcal{F}_B$ disjoint from
$\mathcal{F}_{\text{train}}$ (the union of formats demanded by
$\tau \in \mathcal{T}_{\text{train}}$). Assumption~\ref{ass:format_realizable}
places the two extensions used below in the policy class.

\textit{Construction.}
Use the two branches supplied by Assumption~\ref{ass:format_realizable}. They
coincide on $S_{\text{train}}$ and emit a correct semantic
answer in the requested format there. Choose the training-set format used
by $\pi_1$ at $\tau_B$ and call it
$\mathcal{F}_0=\mathcal{F}_{\tau_0}\subseteq\mathcal{F}_{\text{train}}$ for
some $\tau_0\in S_{\text{train}}$; at $\tau_B$, $\pi_1$ emits a
correct semantic answer matching $\mathcal{F}_0$, while $\pi_2$ emits a correct
semantic answer matching $\mathcal{F}_B$. This makes $\pi_1\ne\pi_2$ while
keeping the two policies identical on $S_{\text{train}}$.

\textit{Training-reward equality.}
On the $P_{\text{train}}$-full set $S_{\text{train}}$, the two policies are identical by
construction and emit a correct response in the requested format. The endpoint
reward condition gives reward $R_{\max}$ on each transformation in
$S_{\text{train}}$, so both policies are $P_{\text{train}}$-optimal and
  \begin{equation}
  \mathcal{R}(\pi_1,P_{\text{train}};\bm{x})
    \;=\;
  \mathcal{R}(\pi_2,P_{\text{train}};\bm{x}),
  \label{eq:train_match}
  \end{equation}
proving Eq.~\eqref{eq:train_match}.

\textit{Test-time gap.}
For any $P_{\text{test}} \in \Delta(\mathcal{T})$ assigning atomic mass
$m_B = P_{\text{test}}(\{\tau_B\}) > 0$ to $\tau_B$, the $\tau_B$-verifier requires
$\mathcal{F}_B$. On $\tau_B$, $\pi_1$ receives $R_{\min}$ and $\pi_2$ receives
$R_{\max}$. On every other transformation the two policies coincide, so the
pointwise difference
$\mathcal{R}(\pi_2,\tau;\bm{x})-\mathcal{R}(\pi_1,\tau;\bm{x})$ is
zero and is exactly $\Delta_R$ at $\tau_B$. Therefore
\begin{equation}
\begin{aligned}
  &\mathcal{R}(\pi_2,P_{\text{test}};\bm{x})
    - \mathcal{R}(\pi_1,P_{\text{test}};\bm{x})
    \\
  &\quad\ge m_B \cdot (R_{\max} - R_{\min}) \\
  &\quad= m_B \Delta_R,
\end{aligned}
\end{equation}
proving Eq.~\eqref{eq:test_gap}. The bound is tight on this construction.

\textit{Diameter of the $P_{\text{train}}$-optimum set.}
The training objective constrains behavior only on the
$P_{\text{train}}$-full training set; behavior on the unseen $\tau_B$ is
unconstrained by the training objective. Since both policies attain
$R_{\max}$ on $S_{\text{train}}$, both are $P_{\text{train}}$-optimal; their
$P_{\text{test}}$-rewards differ by
$m_B\Delta_R$. Hence the diameter of the $P_{\text{train}}$-optimum set under
$P_{\text{test}}$-reward is at least $m_B\Delta_R$, and \emph{some}
$P_{\text{train}}$-optimal policy (specifically $\pi_1$) has a
$P_{\text{test}}$ comparator gap $\ge m_B\Delta_R$ relative to $\pi_2$. We do
not claim every $P_{\text{train}}$-optimal policy has such a gap, since
$\pi_2$ itself is $P_{\text{train}}$-optimal.
The proposition is a population-level identifiability statement about the
objective, not a learnability claim about any particular training algorithm.
\qed

\subsection{Measure-zero training support and low-complexity shortcuts}
\label{sec:appendix_measure_zero}

The prompt-transformation construction above is a finite-support instance of a broader identifiability obstruction.

\begin{proposition}[Measure-zero training support]\label{prop:measure_zero}
Let $\mathcal{W}\subseteq\mathbb{R}^D$ be measurable and let the deployment
distribution $P_W$ satisfy $P_W\ll\lambda_D$, where $\lambda_D$ is
$D$-dimensional Lebesgue measure. Let
$\Omega\subseteq\mathcal{W}$ be measurable, and let the training distribution
$P_{\mathrm{tr}}$ satisfy $P_{\mathrm{tr}}(\Omega)=1$, where $\Omega$ is finite or
contained in a measurable $d$-dimensional embedded submanifold with $d<D$.
Let the answer space
$\mathcal{Y}_{\mathrm{ans}}$ carry the discrete sigma-algebra and contain at
least two distinct labels, and let the deployment semantic rule
$A^\star:\mathcal{W}\to\mathcal{Y}_{\mathrm{ans}}$ be measurable. For any
measurable bounded reward $R$ and training objective
\begin{equation}
  J(\pi)=
  \mathbb{E}_{x\sim P_{\mathrm{tr}}}
  \mathbb{E}_{y\sim\pi(\cdot\mid x)}[R(x,y)]
\end{equation}
evaluated only under $P_{\mathrm{tr}}$, any two policies that agree on
$\Omega$ in their conditional output laws have the same objective value.
Consequently, without additional structural restrictions on admissible rules,
there are infinitely many measurable shortcut rules
$a:\mathcal{W}\to\mathcal{Y}_{\mathrm{ans}}$ such that $a=A^\star$ on $\Omega$
but
$P_W(\{x:a(x)\ne A^\star(x)\})>0$, and each deterministic policy
$\pi_a(\cdot\mid x)=\delta_{a(x)}$ is training-equivalent to
$\pi_{A^\star}$. If training-optimal policies are then selected by the
tie-breaker $\arg\min_{\pi\in\Pi^\star}C(\pi)$, where
$\Pi^\star=\{\pi:J(\pi)=J^\star\}$, and if
$\pi_{A^\star}\in\Pi^\star$, then every training-equivalent shortcut also lies
in $\Pi^\star$; any such shortcut with
$C(\pi_a)<C(\pi_{A^\star})$ is preferred to the semantic policy by that
tie-breaker.
\end{proposition}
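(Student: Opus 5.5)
The proof splits into three parts, matching the three claims of Proposition~\ref{prop:measure_zero}: objective invariance, existence of infinitely many shortcuts, and the tie-breaker consequence.

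\textbf{Objective invariance.} I would write
\[
J(\pi)=\int_{\Omega}\int R(x,y)\,\pi(dy\mid x)\,P_{\mathrm{tr}}(dx),
\]
using $P_{\mathrm{tr}}(\Omega)=1$. The integrand is bounded and measurable, so the iterated integral is well defined. If $\pi(\cdot\mid x)=\pi'(\cdot\mid x)$ for every $x\in\Omega$, the inner integrals agree pointwise on $\Omega$, and hence $J(\pi)=J(\pi')$. This step is routine.

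\textbf{Shortcut construction.} The key step is to show $P_W(\Omega)=0$. It suffices to show $\lambda_D(\Omega)=0$ and then invoke $P_W\ll\lambda_D$.
\begin{itemize}
\item If $\Omega$ is finite, it is a finite union of points, each of Lebesgue measure zero.
\item If $\Omega$ lies in a $d$-dimensional embedded submanifold $M$ with $d<D$, I would cover $M$ by countably many charts, using second countability of $\mathbb{R}^D$. Each chart image is the image of an open subset of $\mathbb{R}^d$ under a smooth (hence locally Lipschitz) map. Equivalently, it is the image of $U\times\{0\}\subset\mathbb{R}^D$, which is $\lambda_D$-null, and locally Lipschitz maps $\mathbb{R}^D\to\mathbb{R}^D$ preserve null sets. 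A countable union of null sets is null, so $\lambda_D(M)=0$ and therefore $P_W(\Omega)=0$.
\end{itemize}
The claim needs $P_W(\mathcal{W}\setminus\Omega)=1>0$, not merely nonnegative.

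Next, fix a label $b\neq$ each relevant value. Concretely, I would choose a measurable selector $\sigma:\mathcal{Y}_{\mathrm{ans}}\to\mathcal{Y}_{\mathrm{ans}}$ with $\sigma(y)\ne y$. Such a map exists because there are two distinct labels $y_0,y_1$: send $y_0\mapsto y_1$ and everything else to $y_0$. Measurability is automatic under the discrete sigma-algebra.

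To obtain infinitely many shortcuts, I would partition $\mathcal{W}\setminus\Omega$ into measurable sets of positive $P_W$-mass. Since $P_W\ll\lambda_D$, the map $t\mapsto P_W(\{x\in\mathcal{W}\setminus\Omega: x_1\le t\})$ is continuous, because hyperplanes are $\lambda_D$-null. It runs from $0$ to $1$, so by the intermediate value theorem I can pick thresholds giving nested sets $E_k$ with $P_W(E_k)=2^{-k}$, for $k\ge1$. I then define
\[
a_k = \begin{cases} \sigma\circ A^\star & \text{on } E_k,\\ A^\star & \text{elsewhere.}\end{cases}
\]
Each $a_k$ is measurable, equals $A^\star$ on $\Omega$ (since $E_k\cap\Omega=\emptyset$), and satisfies $P_W(a_k\ne A^\star)=2^{-k}>0$. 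These masses are distinct, so the $a_k$ are pairwise distinct and there are infinitely many of them. The deterministic policies $\pi_{a_k}=\delta_{a_k(x)}$ agree with $\pi_{A^\star}$ on $\Omega$, so by the first part they are training-equivalent to $\pi_{A^\star}$.

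\textbf{Tie-breaker.} Since each shortcut is training-equivalent, $J(\pi_a)=J(\pi_{A^\star})=J^\star$, so $\pi_a\in\Pi^\star$. If $C(\pi_a)<C(\pi_{A^\star})$, then $\pi_{A^\star}$ is not a minimizer of $C$ over $\Pi^\star$, and $\pi_a$ is strictly preferred to it.

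The main obstacle is the null-set argument for the submanifold. I would rely on the standard fact that smooth images of lower-dimensional sets are Lebesgue-null (the easy case of Sard's theorem). I would also state measurability of $E_k$ explicitly, as an intersection of a half-space with the measurable set $\mathcal{W}\setminus\Omega$.
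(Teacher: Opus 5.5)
Your proposal is correct and follows the paper's proof essentially step for step. The steps match: $J$ depends only on the kernel on $\Omega$ because $P_{\mathrm{tr}}(\Omega)=1$; $P_W(\Omega)=0$ by Lebesgue-nullity plus $P_W\ll\lambda_D$; the shortcuts come from the same two-label swap of $A^\star$ on positive-mass subsets of $\mathcal{W}\setminus\Omega$; and the tie-breaker claim follows immediately from training-equivalence. Your chart argument and the explicit threshold construction of nested sets $E_k$ with $P_W(E_k)=2^{-k}$ only fill in details the paper asserts without proof (nullity of submanifolds, and infinitely many positive-mass sets via non-atomicity), with the minor bonus that your shortcuts differ pairwise even modulo $P_W$-null sets.
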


\begin{proof}
Finite sets and lower-dimensional embedded submanifolds have
$\lambda_D$-measure zero, so $P_W(\Omega)=0$ by absolute continuity. Since
$P_{\mathrm{tr}}(\Omega)=1$, the objective $J$ only depends on a policy's
conditional output law on $\Omega$; policies agreeing there are therefore
identical under $J$.

It remains to show that many disagreeing extensions exist. Choose two distinct
labels $b_0,b_1\in\mathcal{Y}_{\mathrm{ans}}$. Because $P_W\ll\lambda_D$ is
non-atomic and $P_W(\mathcal{W}\setminus\Omega)=1$, there are infinitely many
measurable sets $B\subseteq\mathcal{W}\setminus\Omega$ with $P_W(B)>0$.
For each such $B$, define
\begin{equation}
  a_B(x)=
  \begin{cases}
    A^\star(x), & x\notin B,\\
    b_1, & x\in B \text{ and } A^\star(x)=b_0,\\
    b_0, & x\in B \text{ and } A^\star(x)\ne b_0 .
  \end{cases}
\end{equation}
Then $a_B=A^\star$ on $\Omega$ and $a_B\ne A^\star$ on $B$, so the deployment
disagreement has positive $P_W$-mass while $J(\pi_{a_B})=J(\pi_{A^\star})$.
Finally, if $\pi_{A^\star}\in\Pi^\star$, then under the stated
minimum-complexity tie-breaker over $\Pi^\star$, any training-equivalent
shortcut with smaller $C$ than $\pi_{A^\star}$ prevents
$\pi_{A^\star}$ from being selected. This is an
identifiability and tie-breaking statement, not a claim that arbitrary
optimization dynamics must find such a shortcut.
\end{proof}

\subsection{Dense Rewards Without Invariant Semantic Coverage}
\label{sec:appendix_dense_proof}

This extension of the non-identifiability construction
(\mainsec{2.3}) is orthogonal to the main paper, which uses the
trinary reward of \mainsec{3}; we include it for completeness.

\begin{proposition}[Dense rewards without invariant semantic coverage]\label{prop:dense}
Suppose the dense reward can be decomposed, after absorbing component weights,
as
\begin{equation}
  R(\bm{y},\tau(\bm{x})) =
  R_{\mathrm{dep}}(\bm{y},\tau(\bm{x}))
  + R_{\mathrm{inv}}(\bm{y},\bm{x}),
\end{equation}
where $R_{\mathrm{dep}}$ contains the prompt-dependent verifier or formatting
components and $R_{\mathrm{inv}}$ is prompt-invariant. For
$j\in\{\mathrm{dep},\mathrm{inv}\}$, write
$\mathcal{R}_{j}(\pi,\tau;\bm{x})$ for the expected value of $R_j$ under
$\bm{y}\sim\pi(\cdot\mid\tau(\bm{x}))$. Assume the components are measurable
and integrable under the policies considered. Let $\Pi$ be the admissible
policy class. Define
\begin{equation}
\begin{aligned}
  J_{\mathrm{dense}}(\pi):=
  &\mathbb{E}_{\tau \sim P_{\text{train}}}
  \mathbb{E}_{\bm{y} \sim \pi(\cdot \mid \tau(\bm{x}))} 
  \!\left[
      R_{\mathrm{dep}}(\bm{y},\tau(\bm{x}))
      + R_{\mathrm{inv}}(\bm{y},\bm{x})
    \right].
\end{aligned}
\end{equation}
Assume there is a measurable policy kernel
$\bar\pi\in\arg\max_{\pi\in\Pi}J_{\mathrm{dense}}(\pi)$. Assume $\Pi$ can
extend $\bar\pi$ to two
measurable policies that agree $P_{\text{train}}$-a.s., agree away from
$\tau_B$, and use the two unseen $\tau_B$ branches from
Proposition~\ref{prop:nonid_main}. For the resulting extensions
$\pi_1,\pi_2$, define the unseen-format component gaps
\begin{equation}
\begin{aligned}
  \Delta_{\mathrm{dep}}
    &:=
    \mathcal{R}_{\mathrm{dep}}(\pi_2,\tau_B;\bm{x})
    - \mathcal{R}_{\mathrm{dep}}(\pi_1,\tau_B;\bm{x}), \\
  \Delta_{\mathrm{inv}}
    &:=
    \mathcal{R}_{\mathrm{inv}}(\pi_1,\tau_B;\bm{x})
    - \mathcal{R}_{\mathrm{inv}}(\pi_2,\tau_B;\bm{x}).
\end{aligned}
\end{equation}
Then $\pi_1$ and $\pi_2$ are both $P_{\text{train}}$-optimal, the training
equality in Eq.~\eqref{eq:train_match} still holds, and for any
$P_{\text{test}}\in\Delta(\mathcal{T})$ with
$m_B=P_{\text{test}}(\{\tau_B\})>0$,
\begin{equation}
\begin{aligned}
  &\mathcal{R}(\pi_2,P_{\text{test}};\bm{x})
   - \mathcal{R}(\pi_1,P_{\text{test}};\bm{x}) = m_B(\Delta_{\mathrm{dep}}-\Delta_{\mathrm{inv}}).
\end{aligned}
\end{equation}
Thus the dense-reward construction yields a positive test-time comparator gap
between equally $P_{\text{train}}$-optimal policies whenever
$\Delta_{\mathrm{dep}}>\Delta_{\mathrm{inv}}$.
If the argmax assumption is replaced by a nonattained supremum, the same
construction with any
$\bar\pi_\zeta\in\Pi$ satisfying
$J_{\mathrm{dense}}(\bar\pi_\zeta)\ge \sup_{\pi\in\Pi}J_{\mathrm{dense}}(\pi)-\zeta$
and admitting the same extensions gives two equally $\zeta$-optimal extensions
with the same training equality and test-gap identity.
\end{proposition}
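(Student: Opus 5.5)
The argument runs along the lines of the proof of Proposition~\ref{prop:nonid_main}: first verify that the two extensions are training-equivalent and training-optimal, then compute the test gap pointwise in $\tau$ and integrate.

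\emph{Training optimality.} By assumption, $\pi_1$ and $\pi_2$ agree with $\bar\pi$ on a $P_{\text{train}}$-full set of transformations. For every $\tau$ in that set, the conditional output laws $\pi_i(\cdot\mid\tau(\bm{x}))$ therefore coincide with $\bar\pi(\cdot\mid\tau(\bm{x}))$. Both components $R_{\mathrm{dep}}$ and $R_{\mathrm{inv}}$ are integrable under the policies considered, so the inner expectation $\mathcal{R}_{\mathrm{dep}}(\pi_i,\tau;\bm{x})+\mathcal{R}_{\mathrm{inv}}(\pi_i,\tau;\bm{x})$ equals that of $\bar\pi$ for $P_{\text{train}}$-a.e.\ $\tau$. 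Integrating against $P_{\text{train}}$ gives $J_{\mathrm{dense}}(\pi_1)=J_{\mathrm{dense}}(\pi_2)=J_{\mathrm{dense}}(\bar\pi)=\max_\Pi J_{\mathrm{dense}}$. This establishes both optimality and the training equality of Eq.~\eqref{eq:train_match}.

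\emph{Test identity.} For every $\tau\neq\tau_B$ the two policies agree, so $\mathcal{R}(\pi_2,\tau;\bm{x})-\mathcal{R}(\pi_1,\tau;\bm{x})=0$. At $\tau_B$, linearity of expectation splits the difference into the dependent part and the invariant part:
\[
\mathcal{R}(\pi_2,\tau_B;\bm{x})-\mathcal{R}(\pi_1,\tau_B;\bm{x})
=\Delta_{\mathrm{dep}}-\Delta_{\mathrm{inv}},
\]
where the minus sign on $\Delta_{\mathrm{inv}}$ reflects the reversed orientation in its definition. The pointwise difference is thus $(\Delta_{\mathrm{dep}}-\Delta_{\mathrm{inv}})\mathbf{1}\{\tau=\tau_B\}$. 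Integrating it against $P_{\text{test}}$ yields exactly $m_B(\Delta_{\mathrm{dep}}-\Delta_{\mathrm{inv}})$, with equality rather than the inequality of the binary case. Positivity whenever $\Delta_{\mathrm{dep}}>\Delta_{\mathrm{inv}}$ follows immediately from $m_B>0$.

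\emph{Main obstacle: measurability.} The delicate step is the pointwise-to-integral passage. It requires $\tau\mapsto\mathcal{R}(\pi_i,\tau;\bm{x})$ to be measurable and integrable. I would obtain this from the measurable-kernel assumption on $\bar\pi$ and its extensions, together with the stated integrability of the components. The singleton $\{\tau_B\}$ must also be measurable; this is implicit in the definition $m_B=P_{\text{test}}(\{\tau_B\})$.

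\emph{Nonattained supremum.} Here I would repeat the argument verbatim with $\bar\pi_\zeta$ in place of $\bar\pi$. The extensions then inherit $J_{\mathrm{dense}}(\pi_i)=J_{\mathrm{dense}}(\bar\pi_\zeta)\ge\sup_\Pi J_{\mathrm{dense}}-\zeta$, so both are $\zeta$-optimal. The test identity is untouched, since it never used optimality.
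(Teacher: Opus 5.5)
Your proposal is correct and follows essentially the same route as the paper. Training optimality and the equality in Eq.~\eqref{eq:train_match} follow because both extensions agree with $\bar\pi$ on a $P_{\text{train}}$-full set, and the test identity follows because the pointwise difference is concentrated at $\tau_B$, where linearity splits it into $\Delta_{\mathrm{dep}}-\Delta_{\mathrm{inv}}$ before weighting by $m_B$; your remarks on measurability of $\tau\mapsto\mathcal{R}(\pi_i,\tau;\bm{x})$ and of $\{\tau_B\}$ make explicit what the paper leaves implicit, and the $\zeta$-optimal case goes through the same way.
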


\noindent\textit{Proof.}
For the fixed task instance $\bm{x}$, consider the dense training objective
\begin{equation}
\begin{aligned}
  \max_\theta\;
  &\mathbb{E}_{\tau \sim P_{\text{train}}}
   \mathbb{E}_{\bm{y} \sim \pi_\theta(\cdot \mid \tau(\bm{x}))} \bigl[
      R_{\mathrm{dep}}(\bm{y},\tau(\bm{x}))
      +R_{\mathrm{inv}}(\bm{y},\bm{x})
    \bigr].
\end{aligned}
\end{equation}
This fixed-$\bm{x}$ objective is still evaluated only under
$\tau \sim P_{\text{train}}$.
Choose the dense-objective optimum $\bar\pi$ from the assumption and extend it
in two ways: $\pi_1$ and $\pi_2$ both equal $\bar\pi$
$P_{\text{train}}$-a.s. and share any behavior outside that full-measure set
except at $\tau_B$; at $\tau_B$, $\pi_1$ emits the fixed training-time format
$\mathcal{F}_0$ and $\pi_2$ emits the requested format $\mathcal{F}_B$, as in
Proposition~\ref{prop:nonid_main}. Because the training objective is
insensitive to changes outside a $P_{\text{train}}$-full set, both extensions
have the same dense training value as $\bar\pi$ and are therefore
$P_{\text{train}}$-optimal. Component-wise equality on that full-measure set
also gives Eq.~\eqref{eq:train_match}.
If only a $\zeta$-optimal branch is available, the same off-support extension
argument preserves that $\zeta$-optimality and all displayed equalities below.

On every transformation other than $\tau_B$ the two policies also coincide in
the construction, so the dense test-reward difference is concentrated on
$\tau_B$. At $\tau_B$, the prompt-dependent contribution to
$\mathcal{R}(\pi_2,\tau_B;\bm{x})-\mathcal{R}(\pi_1,\tau_B;\bm{x})$ is
$\Delta_{\mathrm{dep}}$, while the prompt-invariant contribution is
$-\Delta_{\mathrm{inv}}$ by definition. Multiplying by the test mass
$m_B=P_{\text{test}}(\{\tau_B\})$ gives the displayed identity.

If the prompt-invariant component assigns equal value to the two semantically
correct answers, then $\Delta_{\mathrm{inv}}=0$ and the positive gap is exactly
$m_B\Delta_{\mathrm{dep}}$. More generally, prompt-invariant components only
remove this particular comparator-gap construction when their advantage for
$\pi_1$ is at least the prompt-dependent advantage of $\pi_2$, i.e.,
$\Delta_{\mathrm{inv}}\ge\Delta_{\mathrm{dep}}$.
\qed

\subsection{DTR reward-variation floor}
\label{sec:appendix_dtr_proof}

\begin{theorem}[$p_f$-independent reward-side share under DTR]
  \label{thm:dtr_decouples}
  Let $R = R_t + \alpha_{\mathrm{rew}} R_s$ be the DTR reward of \maineq{6}, with
  $R_t\in\{-1,0,+1\}$ assigned by \maineq{5},
  $R_s\in[0,1]$, and fixed $\alpha_{\mathrm{rew}} > 0$. Write
  $p_f:=\Pr[V_f=1]\in(0,1)$.
  Let $V_f$ be the requested-format indicator and assume the extractor is
  format-exact on this prompt:
  $\mathcal{A}_{\tau}(\bm{y})\ne\emptyset$ iff $V_f=1$.
  Define the extractor-induced correctness indicator
  $V_c^{\mathrm{ext}}=\mathbf{1}\{\mathcal{A}_{\tau}(\bm{y})=a^*\}$. Then
  \maineq{5} gives $R_t=-1$ when $V_f=0$ and
  $R_t=V_c^{\mathrm{ext}}$ when $V_f=1$. Assume $R_s$ has
  non-degenerate within-bucket dispersion: for some feasible constant
  $m\in(0,1/4]$,
  independent of the $p_f$ values in the analyzed rollout-law class,
  $\mathrm{Var}(R_s \mid V_c^{\mathrm{ext}}{=}v,V_f{=}v')\ge m$ in every
  extractor-induced joint bucket $(v,v')$ with positive probability.
  Then
  $\mathrm{Var}(R \mid V_f{=}v') \ge \alpha_{\mathrm{rew}}^{2} m$ for each positive-probability
  format bucket $v' \in \{0,1\}$, so
  for
  $\Phi_{\mathrm{wf}}^{\mathrm{DTR}}
  :=\mathbb{E}[\mathrm{Var}(R\mid V_f)]/\mathrm{Var}(R)$,
  \begin{equation}
    \Phi_{\mathrm{wf}}^{\mathrm{DTR}}
    \ge \frac{4\alpha_{\mathrm{rew}}^2 m}{(\alpha_{\mathrm{rew}}+2)^2}.
  \end{equation}
  This lower bound does not depend on $p_f$.
\end{theorem}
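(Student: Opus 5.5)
The plan is to bound the numerator and denominator of $\Phi_{\mathrm{wf}}^{\mathrm{DTR}}$ separately. The numerator gets a $p_f$-free lower bound from the law of total variance applied one level finer, conditioning on the extractor-induced joint bucket. The denominator gets a $p_f$-free upper bound from the bounded range of $R$ via Popoviciu's inequality.

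\textbf{Step 1: $R_t$ is constant on every joint bucket.} Under the format-exact extractor assumption, $\mathcal{A}_{\tau}(\bm{y})=\emptyset$ iff $V_f=0$. Reading off \maineq{5}:
\begin{itemize}
\item on $\{V_f=0\}$ we have $R_t\equiv-1$, whatever the value of $V_c^{\mathrm{ext}}$ (which is $0$ there);
\item on $\{V_f=1\}$ we have $R_t=V_c^{\mathrm{ext}}$.
\end{itemize}
Hence on each positive-probability joint bucket $\{V_c^{\mathrm{ext}}=v,V_f=v'\}$, $R_t$ is a deterministic constant. Consequently $\mathrm{Var}(R\mid V_c^{\mathrm{ext}}{=}v,V_f{=}v')=\alpha_{\mathrm{rew}}^2\,\mathrm{Var}(R_s\mid V_c^{\mathrm{ext}}{=}v,V_f{=}v')\ge\alpha_{\mathrm{rew}}^2 m$ by the non-degeneracy hypothesis. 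This bookkeeping, matching the extractor convention to the cases of \maineq{5}, is the only point that needs care. It is also where a non-format-exact extractor would break the argument, since $R_t$ would then vary within a bucket in an uncontrolled way.

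\textbf{Step 2: within-format variance floor.} Fix a format bucket $v'$ with $\Pr[V_f=v']>0$. The conditional law of total variance, conditioning additionally on $V_c^{\mathrm{ext}}$, gives
\[
\mathrm{Var}(R\mid V_f{=}v')\ge\mathbb{E}\bigl[\mathrm{Var}(R\mid V_c^{\mathrm{ext}},V_f{=}v')\,\big|\,V_f{=}v'\bigr]\ge\alpha_{\mathrm{rew}}^2 m,
\]
because only positive-probability joint buckets contribute. Averaging over $V_f$, with $p_f\in(0,1)$ so both buckets are live, yields $\mathbb{E}[\mathrm{Var}(R\mid V_f)]\ge\alpha_{\mathrm{rew}}^2 m$. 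In particular $\mathrm{Var}(R)\ge\alpha_{\mathrm{rew}}^2 m>0$, so the ratio is well defined.

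\textbf{Step 3: denominator bound.} Since $R_t\in\{-1,0,1\}$ and $R_s\in[0,1]$, we have $R\in[-1,1+\alpha_{\mathrm{rew}}]$, an interval of length $\alpha_{\mathrm{rew}}+2$. Popoviciu's inequality then gives $\mathrm{Var}(R)\le(\alpha_{\mathrm{rew}}+2)^2/4$. Dividing the Step 2 bound by this gives $\Phi_{\mathrm{wf}}^{\mathrm{DTR}}\ge 4\alpha_{\mathrm{rew}}^2 m/(\alpha_{\mathrm{rew}}+2)^2$. Neither bound involves $p_f$; it enters only through which buckets are live, and both are live by assumption.

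As a sanity check, the restriction $m\le 1/4$ is exactly Popoviciu's bound for $R_s\in[0,1]$, so the hypothesis is feasible. The resulting lower bound is at most $\alpha_{\mathrm{rew}}^2/(\alpha_{\mathrm{rew}}+2)^2<1$, consistent with $\Phi_{\mathrm{wf}}^{\mathrm{DTR}}\le 1$.
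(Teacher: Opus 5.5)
Your proof is correct and follows the paper's argument essentially step for step. You apply the conditional law of total variance within each format bucket, dropping the between-$V_c^{\mathrm{ext}}$ term and using that $R_t$ is constant on each joint bucket, to get the $p_f$-free floor $\alpha_{\mathrm{rew}}^2 m$ on $\mathbb{E}[\mathrm{Var}(R\mid V_f)]$, then bound $\mathrm{Var}(R)\le(\alpha_{\mathrm{rew}}+2)^2/4$ by Popoviciu on $[-1,1+\alpha_{\mathrm{rew}}]$. Your remarks that $\mathrm{Var}(R)\ge\alpha_{\mathrm{rew}}^2 m>0$ makes the ratio well defined, and that the resulting bound stays below $1$, are small additions the paper leaves implicit.
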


\begin{proof}
Recall $R = R_t + \alpha_{\mathrm{rew}} R_s$ with $R_t \in \{-1, 0, +1\}$ assigned by
\maineq{5}, $R_s \in [0, 1]$, and
$\alpha_{\mathrm{rew}} > 0$ a fixed reward weight. Write
$\sigma_{v, v'}^{2} := \mathrm{Var}(R_s \mid V_c^{\mathrm{ext}} = v, V_f = v')$
for joint buckets with positive probability and
$m \le \sigma_{v, v'}^{2}$ over those buckets, with the same $m\in(0,1/4]$ for the
rollout distributions in the analyzed class (the assumption of
Theorem~\ref{thm:dtr_decouples}: $R_s$ is non-degenerate within every
positive-probability joint bucket). We take $p_f\in(0,1)$ so both format
buckets have well-defined
conditional variances.

\textit{Conditional variance under $V_f = 0$.}
When $V_f = 0$, $R_t = -1$ is constant and $R = -1 + \alpha_{\mathrm{rew}} R_s$. By the law
of total variance applied conditionally on $V_f = 0$,
\begin{equation}
\begin{aligned}
  \mathrm{Var}(R \mid V_f = 0)
    &= \alpha_{\mathrm{rew}}^{2} \mathrm{Var}(R_s \mid V_f = 0) \\
    &\ge \alpha_{\mathrm{rew}}^{2}\,
       \mathbb{E}\!\left[\sigma_{V_c^{\mathrm{ext}},0}^{2}\mid V_f=0\right] \\
    &\ge \alpha_{\mathrm{rew}}^{2} m,
\end{aligned}
\end{equation}
where the second inequality drops the (non-negative) between-$V_c^{\mathrm{ext}}$ term and
uses $\sigma_{V_c^{\mathrm{ext}}, 0}^{2} \ge m$.

\textit{Conditional variance under $V_f = 1$.}
When $V_f = 1$, $R_t = V_c^{\mathrm{ext}}$ and
$R = V_c^{\mathrm{ext}} + \alpha_{\mathrm{rew}} R_s$. By the same total-variance
step,
\begin{equation}
\begin{aligned}
  \mathrm{Var}(R \mid V_f = 1)
    &\ge \mathbb{E}\!\left[\alpha_{\mathrm{rew}}^{2}\sigma_{V_c^{\mathrm{ext}},1}^{2}\mid V_f=1\right] \\
    &\ge \alpha_{\mathrm{rew}}^{2} m.
\end{aligned}
\end{equation}

\textit{Lower bound on $\mathbb{E}[\mathrm{Var}(R \mid V_f)]$.}
\begin{equation}
\begin{aligned}
  \mathbb{E}\bigl[\mathrm{Var}(R \mid V_f)\bigr]
    &= p_f\,\mathrm{Var}(R \mid V_f{=}1) + (1{-}p_f)\,\mathrm{Var}(R \mid V_f{=}0) \\
    &\ge \alpha_{\mathrm{rew}}^{2} m \quad \text{for all } p_f \in (0,1).
\end{aligned}
\end{equation}
The bound is independent of $p_f$.

\textit{Upper bound on $\mathrm{Var}(R)$.}
With $\alpha_{\mathrm{rew}}$ fixed and $R_s\in[0,1]$,
$R \in [-1, \alpha_{\mathrm{rew}} + 1]$, so Popoviciu's
inequality gives $\mathrm{Var}(R) \le (\alpha_{\mathrm{rew}}+2)^{2}/4$.

\textit{Lower bound on $\Phi_{\mathrm{wf}}^{\mathrm{DTR}}$.}
\begin{equation}
\begin{aligned}
  \Phi_{\mathrm{wf}}^{\mathrm{DTR}}
    &= \frac{\mathbb{E}[\mathrm{Var}(R \mid V_f)]}{\mathrm{Var}(R)} \\
    &\ge \frac{4\alpha_{\mathrm{rew}}^{2} m}{(\alpha_{\mathrm{rew}}+2)^{2}}
    \quad \text{for all } p_f \in (0, 1).
\end{aligned}
\end{equation}
The bound depends only on $\alpha_{\mathrm{rew}}$ and $m$, not on $p_f$, so the
C1 variance-floor requirement is satisfied. \emph{Remark.} Popoviciu's
inequality is tight only for two-point
distributions on the endpoints, so the closed-form constant can be loose. We
report it for concreteness; the qualitative content (the $p_f$-independence of
the reward-side share floor) is the load-bearing claim, not the numerical value of the
constant.
\end{proof}

\subsection{Embedding-space adversary as local ascent on a relaxed inner objective}
\label{sec:appendix_adv_proof}

\begin{proposition}[Local ascent on the relaxed adversarial objective]\label{prop:adv_inf}
For a fixed policy, Eq.~\eqref{eq:adv_obj} is an embedding-space relaxation of
the negative inner objective of \maineq{4}, restricted to the
$\epsilon_{\mathrm{emb}}$-ball
$\{\bm{\delta}:\|\bm{\delta}\|_2\le\epsilon_{\mathrm{emb}}\}$. When the relaxed
objective is smooth (or when only the stop-gradient score-function component is
optimized), projected ascent gives a local first-order ascent method for that
restricted objective/component; the stochastic implementation should be read as a
local search heuristic, not as a per-step monotonicity guarantee or a claim of
reaching the global inner infimum of \maineq{4}.
\end{proposition}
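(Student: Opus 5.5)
The plan is to first write down the relaxed objective explicitly and check that it really is the restriction claimed, and only then to argue the local-ascent property by standard projected-gradient facts. Fix $\theta$ and $\bm{x}$. For a perturbation $\bm{\delta}$ define
\begin{equation*}
J(\bm{\delta}) := \mathbb{E}_{\bm{y}\sim\pi_\theta(\cdot\mid\tau_{\bm{\delta}}(\bm{x}))}\bigl[-R_{\mathrm{DTR}}(\bm{y},\tau_{\bm{\delta}}(\bm{x}))\bigr],
\end{equation*}
where $\tau_{\bm{\delta}}(\bm{x})$ replaces $\bm{e}_{\mathcal{I}}(\bm{x})$ by $\bm{e}_{\mathcal{I}}(\bm{x})+\bm{\delta}$. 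The delimiter is kept as $d_{\tau_{\bm{\delta}}}=d_{\bm{x}}$ and the label $a^*$ is unchanged.

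For the \emph{relaxation} claim I would note three things. The inner problem of \maineq{4}, negated, is $\sup_{\rho\in\Delta(\mathcal{T})}\mathbb{E}_{\tau\sim\rho}[-\mathcal{R}(\pi_\theta,\tau;\bm{x})]$. Because the objective is linear in $\rho$, this supremum equals the supremum over point masses. Eq.~\eqref{eq:adv_obj} takes the same supremum of the same reward functional, but over a different index set: the point masses $\delta_{\bm{\delta}}$ with $\bm{\delta}$ in the $\epsilon_{\mathrm{emb}}$-ball, or mixtures of them for a stochastic head. The word "relaxation" should therefore be read as a change of feasible set from $\mathcal{T}$ to embedding perturbations, not as a containment. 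I would state explicitly that no inclusion $\{\tau_{\bm{\delta}}\}\subseteq\mathcal{T}$ is claimed, consistent with the caveat in Section~\ref{sec:theory_dro}. The restriction to the ball is by definition of $q_\phi$'s support.

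For the \emph{local ascent} claim, in the smooth case I would assume $J$ has an $L$-Lipschitz gradient on the ball $\mathcal{B}$, which is convex and compact. The projected step $\bm{\delta}^{+}=\Pi_{\mathcal{B}}(\bm{\delta}+\eta\nabla J(\bm{\delta}))$ with $\eta\le 1/L$ then satisfies the standard ascent lemma,
\begin{equation*}
J(\bm{\delta}^{+}) \ge J(\bm{\delta}) + \tfrac{1}{2\eta}\|\bm{\delta}^{+}-\bm{\delta}\|^2 .
\end{equation*}
This follows from the descent lemma applied to $-J$ together with the projection's variational inequality $\langle \bm{\delta}+\eta\nabla J-\bm{\delta}^{+},\bm{\delta}-\bm{\delta}^{+}\rangle\le 0$. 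Fixed points are exactly the first-order stationary points of $J$ over $\mathcal{B}$, which is the "local first-order" statement.

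In the stop-gradient case I would use the score-function form
\begin{equation*}
\nabla J = \mathbb{E}\bigl[-R_{\mathrm{DTR}}\,\nabla_{\bm{\delta}}\log\pi_\theta(\bm{y}\mid\tau_{\bm{\delta}}(\bm{x}))\bigr].
\end{equation*}
The reward enters only as a fixed weight, because the extractor and the label do not depend on $\bm{\delta}$, so this is the exact gradient of the score-function component. The same lemma then applies to that component.

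The main obstacle is justifying smoothness: $J$ is an expectation over discrete sequences of a bounded reward against softmax probabilities that are smooth in the embeddings. I would argue that for the truncated, finite response space, $J$ is a finite sum of smooth functions and hence $C^\infty$ with a Lipschitz gradient on the compact ball. Beyond that, I would explicitly disclaim what the standard argument does not give. Stochastic estimates of $\nabla J$ give ascent only in expectation, and only under variance and step-size conditions, so there is no per-step monotonicity. Since $J$ is nonconvex, stationarity does not imply attaining the global infimum of \maineq{4}.
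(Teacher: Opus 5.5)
Your proposal is correct and follows the same route as the paper's proof. You fix the policy and reduce Eq.~\eqref{eq:adv_obj} to maximizing the negative expected DTR reward over the $\epsilon_{\mathrm{emb}}$-ball, invoke projected-gradient ascent for an $L$-smooth objective (the descent-lemma-plus-projection inequality the paper only cites as standard), treat the score-function term separately, and disclaim per-step monotonicity and global attainment. Smoothness is a hypothesis of the statement, so your finite-response-space argument is optional; it also needs the network to be smooth in its input embeddings and would not cover the top-$p$-truncated rollout sampler. Your remark that the frozen delimiter $d_{\tau_\phi}=d_{\bm{x}}$ and fixed label make the reward independent of $\bm{\delta}$ is a mild sharpening: the pathwise term the paper hedges about vanishes, so the score-function expression is the full gradient.
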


\begin{proof}
For a deterministic perturbation, or after fixing a single reparameterized
sample from $q_\phi$, Eq.~\eqref{eq:adv_obj} reduces locally to maximizing
\begin{equation}
\begin{aligned}
  \mathcal{J}_\phi(\bm{\delta})
  &:= -\,
    \mathbb{E}_{\bm{y} \sim \pi_\theta(\cdot \mid \tau_\phi(\bm{x}))}\!
    \bigl[R_{\mathrm{DTR}}(\bm{y}, \tau_\phi(\bm{x}))\bigr], \ \ \|\bm{\delta}\|_2 \le \epsilon_{\mathrm{emb}}.
\end{aligned}
\end{equation}
This is the negative relaxed inner objective over the
$\epsilon_{\mathrm{emb}}$-ball, where $\tau_\phi(\bm{x})$ is the prompt with
perturbed instruction embedding
$\bm{e}_{\mathcal{I}}(\bm{x})+\bm{\delta}$. For a stochastic $q_\phi$, the
objective is the expectation of this quantity over the sampled perturbation.
When the reward path through $\tau_\phi$ is treated as a stop-gradient, the
REINFORCE estimator (Eq.~\eqref{eq:ggds_reinforce}) provides a stochastic
estimate of the
score-function component of $\nabla_{\bm{\delta}}\mathcal{J}_\phi$; if the
reward is differentiable through $\tau_\phi$, the pathwise term described in
Section~\ref{sec:appendix_adv} must be added for the full gradient. For exact gradients
of an $L$-smooth relaxed objective, projected gradient ascent with sufficiently
small step size has the usual first-order local ascent property unless the
iterate is already first-order stationary for the constrained problem. With
stochastic rollouts, the update inherits this interpretation only under standard
unbiased-or-positive-correlation and bounded-variance assumptions, and need not
increase $\mathcal{J}_\phi$ on every realized step. This is the approximation
statement; it does not imply global minimax convergence or attainment of the
unrestricted $\Delta(\mathcal{T})$ infimum.
\end{proof}

\subsection{Population-level consequences of C1/C2 (Proposition~\ref{prop:c1c2_consequences})}
\label{sec:appendix_c1c2_consequences}

\begin{proposition}[Population-level consequences of C1/C2]
\label{prop:c1c2_consequences}
Under $\tau$-dependent extraction $\mathcal{A}(\bm{y}; d_\tau)$ and bounded
reward $R\in[R_{\min},R_{\max}]$, the following claims hold. \emph{(i)} (C1)
bounds $\Phi_{\mathrm{wf}}^R$ below independently of $p_f$. \emph{(ii)} If the coverage
$\rho_{\bm{x}}$ places masses $w_{\mathrm{tr}},w_B>0$ on some
$\tau_{\text{train}}\in S_{\text{train}}$ from
Assumption~\ref{ass:rc} and on the separated
$\tau_B$, then the
format-fixed branch $\pi_1$ of Prop.~\ref{prop:nonid_main} violates (C2)
whenever $\varepsilon < w_{\mathrm{tr}}w_B/\kappa_D^2$: its answer-marginal
collapses to $\delta_{a^*}$ under $\tau_{\text{train}}$ and to
$\delta_{\emptyset}$ under $\tau_B$. If the deterministic correct-answer branch
$\pi_2$ from Prop.~\ref{prop:nonid_main} is realizable on the finite atom
support $\{\tau\in\mathcal{C}(\bm{x}):\rho_{\bm{x}}(\{\tau\})>0\}$ of
$\rho_{\bm{x}}$, it extracts $a^*$ throughout that atom support and is
(C2)-feasible. \emph{(iii)} (C1) supplies non-degenerate
within-format reward variation; by itself it does not guarantee that the
semantic score is aligned with correctness. Selecting the high-reward
instruction-following branch additionally requires a semantically aligned
$R_s$ and population optimization over the C2-feasible class. Algorithmic
convergence is a separate empirical question. \emph{(iv)} For finite
$\varepsilon$, if the reward is a common bounded function of the
prompt-conditioned extracted answer used in (C2), the $\rho_{\bm{x}}$-averaged
absolute pairwise expected reward difference is at most
$\Delta_R\kappa_D\sqrt{\varepsilon}$, where
$\Delta_R:=R_{\max}-R_{\min}$; if a
semantic scorer reads the full response or changes with the prompt, the same
bound requires C2 on that score-relevant marginal or an additional term for
reward-function shift.
\end{proposition}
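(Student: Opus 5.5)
I will prove the four claims separately, since each reduces to a short argument built on results already stated.

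\emph{(i) The $p_f$-independent floor.} Under Definition~\ref{def:rd}, every positive-probability format bucket satisfies $\mathrm{Var}(R\mid V_f=v')\ge c$. Averaging over $V_f$ gives $\mathbb{E}[\mathrm{Var}(R\mid V_f)]\ge c$. Since $R\in[-1,1+\alpha_{\mathrm{rew}}]$, Popoviciu's inequality gives $\mathrm{Var}(R)\le(\alpha_{\mathrm{rew}}+2)^2/4$. Together,
\[
\Phi_{\mathrm{wf}}^R\ge \frac{4c}{(\alpha_{\mathrm{rew}}+2)^2}.
\]
For the DTR instance I would take $c=\alpha_{\mathrm{rew}}^2 m$ directly from Theorem~\ref{thm:dtr_decouples}.

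\emph{(ii) $\pi_1$ is excluded and $\pi_2$ is admitted.} At $\tau_{\text{train}}$, $\pi_1$ emits $a^*$ in the requested format, so under the format-consistent extractor $P^{\pi_1}_{\tau_{\text{train}}}=\delta_{a^*}$. At $\tau_B$, $\pi_1$ emits a training format, which is disjoint from $\mathcal{F}_B$ by Assumption~\ref{ass:separated_format}, so $\mathcal{A}_{\tau_B}$ returns $\emptyset$ and $P^{\pi_1}_{\tau_B}=\delta_\emptyset$. These two point masses are mutually singular, so their total variation is $1$. The Pinsker-type bound then gives $D\ge 1/\kappa_D^2$ in each ordered direction. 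Keeping only the ordered pair $(\tau_{\text{train}},\tau_B)$, which carries probability $w_{\mathrm{tr}}w_B$ under $\rho_{\bm{x}}\otimes\rho_{\bm{x}}$, and using nonnegativity of $D$ elsewhere, the C2 expectation is at least $w_{\mathrm{tr}}w_B/\kappa_D^2$. This exceeds $\varepsilon$ under the stated threshold, so $\pi_1$ violates (C2). For KL the divergence is in fact $+\infty$, which makes the violation immediate. For $\pi_2$: if it is realizable on the atom support, it emits $a^*$ in each requested format there. Each extracted marginal is therefore $\delta_{a^*}$, every pairwise divergence is $D(\delta_{a^*}\|\delta_{a^*})=0\le\varepsilon$, and $\pi_2$ is feasible.

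\emph{(iii) A scope remark.} I would show that (C1) alone does not give alignment by noting that (C1) constrains only second moments. Take any $R_s$ satisfying the variance floor and compose it with a measure-preserving relabeling that is anti-correlated with correctness within buckets. Conditional variances are unchanged, yet semantic alignment is lost. Selecting $\pi_2$ therefore needs the extra hypotheses listed in the statement; no theorem beyond this example is required.

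\emph{(iv) The reward-difference bound.} Write $R=g(a)$ with $g$ a common bounded function of the extracted answer. Then $|\mathbb{E}_{P_1}g-\mathbb{E}_{P_2}g|\le\Delta_R\,\mathrm{TV}(P_1,P_2)$, because a bounded function with oscillation $\Delta_R$ can be shifted to take values in $[0,\Delta_R]$. The Pinsker-type bound turns this into at most $\Delta_R\kappa_D\sqrt{D(P_{\tau_1}\|P_{\tau_2})}$. Averaging over $\tau_1,\tau_2\sim\rho_{\bm{x}}$ and applying Jensen to the concave square root gives the bound $\Delta_R\kappa_D\sqrt{\varepsilon}$.

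The main obstacle is the care needed in (ii) and (iv). In (ii), the asymmetric ordered pairs and the possibility of infinite KL must be handled without breaking the inequality. In (iv), the reward must truly factor through the prompt-conditioned extractor $\mathcal{A}_\tau$: $g$ must not depend on $\tau$ beyond the extracted answer. Otherwise an additional reward-shift term appears, which is exactly the caveat stated in the proposition.
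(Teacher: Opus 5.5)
Your proposal is correct and follows the paper's proof part by part. The steps match: the bucket-variance floor plus Popoviciu in (i), where you use the explicit C1 range $[-1,1+\alpha_{\mathrm{rew}}]$ and the paper uses $[R_{\min},R_{\max}]$; the single ordered pair $(\tau_{\text{train}},\tau_B)$ with total variation $1$ and the Pinsker-type bound in (ii); and the TV--Pinsker--Jensen chain in (iv). Your one addition is the counterexample for (iii), which the paper only asserts as a scope remark; its cleanest instance is the reflection $R_s\mapsto 1-R_s$, which preserves every joint-bucket variance (hence the floor $\alpha_{\mathrm{rew}}^2 m$) while reversing the direction of alignment, whereas a relabeling that preserved the law of $R_s$ inside each joint bucket would leave alignment unchanged.
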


\noindent\textit{Proof.}
\textbf{Part (i): (C1) implies bounded $\Phi_{\mathrm{wf}}^R$.}
Definition~\ref{def:rd} states that under (C1),
each positive-probability format bucket has conditional variance at least $c$,
hence
$\mathbb{E}[\mathrm{Var}(R \mid V_f)] \ge c$. The total $\mathrm{Var}(R)$ is
upper-bounded by $(R_{\max} - R_{\min})^{2} / 4$ (bounded range). Hence
\begin{equation}
\begin{aligned}
  \Phi_{\mathrm{wf}}^{R}
    &= \frac{\mathbb{E}[\mathrm{Var}(R \mid V_f)]}{\mathrm{Var}(R)} \\
    &\ge \frac{4 c}{(R_{\max} - R_{\min})^{2}}
      \;>\; 0,
\end{aligned}
\end{equation}
independently of $p_f$; the analogous reward-variance floor removes the
$p_f$-dependence present in Theorem~\ref{thm:snr}.

\textbf{Part (ii): (C2) blocks the prompt-dependent branch.}
Definition~\ref{def:ie} fixes a non-degenerate coverage distribution
$\rho_{\bm{x}} \in \Delta(\mathcal{C}(\bm{x}))$ and requires
\begin{equation}
  \mathbb{E}_{\tau_1, \tau_2 \sim \rho_{\bm{x}}}
  \!\left[D\!\left(P_{\tau_1}^{\theta}\,\middle\|\,P_{\tau_2}^{\theta}\right)\right]
  \;\le\; \varepsilon,
\end{equation}
where $P_{\tau}^{\theta}:=
\pi^{\mathcal{A}_{\tau}}_\theta(\cdot\mid\tau(\bm{x}))$ is the answer marginal
under the
$\tau$-dependent extraction $\mathcal{A}(\bm{y}; d_\tau)$ of
\maineq{5}: a rollout whose surface format matches $\tau$
extracts a valid answer, otherwise the extractor returns $\emptyset$.
Assume $\rho_{\bm{x}}$ places masses
$w_{\mathrm{tr}}:=\rho_{\bm{x}}(\tau_{\text{train}})>0$ on some
$\tau_{\text{train}}\in S_{\text{train}}$ and
$w_B:=\rho_{\bm{x}}(\tau_B)>0$ on the separated $\tau_B$ from
Assumption~\ref{ass:separated_format}. As $\varepsilon\to0$, every prompt pair
in this support with positive mass must induce matching answer-marginals.

For the training transformation $\tau_{\text{train}}$ and separated
$\tau_B$ in the construction, $\pi_1$ emits a training-compatible formatted
string in both cases. Under $\tau_{\text{train}}$, the extractor returns the
correct answer $a^*$; under $\tau_B$, the emitted format lies in
$\mathcal{F}_0\subseteq\mathcal{F}_{\text{train}}$, which is disjoint from
$\mathcal{F}_B$, so extractor consistency gives $\emptyset$. The
answer-marginal of $\pi_1$ is therefore
$\delta_{a^*}$ under $\tau_{\text{train}}$ and $\delta_{\emptyset}$ under
$\tau_B$, two atoms on disjoint support of
$\mathcal{Y}_{\text{ans}}\cup\{\emptyset\}$. Their total variation distance is
$1$, so any divergence satisfying the Pinsker-type condition in
Definition~\ref{def:ie} must obey
$D(\delta_{a^*}\|\delta_{\emptyset})\ge 1/\kappa_D^2$; for KL the divergence is
infinite. Consequently the left-hand side of (C2) is at least
$w_{\mathrm{tr}}w_B/\kappa_D^2$ (conservatively using one ordered pair), and
$\pi_1$ \emph{violates} (C2) whenever
$\varepsilon < w_{\mathrm{tr}}w_B/\kappa_D^2$. In particular, it is excluded in
the $\varepsilon\to0$ regime.

If the deterministic correct-answer branch $\pi_2$ is realizable on
$\{\tau\in\mathcal{C}(\bm{x}):\rho_{\bm{x}}(\{\tau\})>0\}$, then $\pi_2$ emits a correctly formatted
string under each $\tau$ in that support, so every prompt-conditioned extractor
returns the same $a^*$. Thus for every ordered pair
$(\tau_1,\tau_2)$ in the support of $\rho_{\bm{x}}$, both answer marginals are
$\delta_{a^*}$ and
\begin{equation}
  D\!\left(
    \pi_2^{\mathcal{A}_{\tau_1}}(\cdot\mid\tau_1(\bm{x}))
    \,\middle\|\,
    \pi_2^{\mathcal{A}_{\tau_2}}(\cdot\mid\tau_2(\bm{x}))
  \right)=0.
\end{equation}
Therefore the (C2) expectation is zero and $\pi_2$ \emph{satisfies} (C2) at
every $\varepsilon \ge 0$.

Thus (C2) at the answer level admits the prompt-following branch $\pi_2$ and
excludes the format-fixed branch $\pi_1$ in the $\varepsilon \to 0$ regime,
the exact direction needed to remove this non-identifiable branch from
Prop.~\ref{prop:nonid_main}. (C1) provides non-degenerate within-format reward
variation. It does not, by itself, prove that the gradient prefers semantic
correctness; that stronger optimization claim additionally requires the
semantic score $R_s$ to be aligned with correctness and population optimization
over the C2-feasible class.

\textit{Finite-$\varepsilon$ expected-reward-difference bound (averaged form).}
Let $\pi^\varepsilon$ denote any policy satisfying (C2) at level
$\varepsilon$ under the coverage distribution $\rho_{\bm{x}}$, and write
$\pi^{\varepsilon,\mathcal{A}_{\tau}}$ for its prompt-conditioned answer
marginal under extractor $\mathcal{A}_{\tau}$.
Definition~\ref{def:ie} restricts $D$ to an $f$-divergence satisfying a
Pinsker-type bound $\mathrm{TV}(P, Q) \le \kappa_D \sqrt{D(P \| Q)}$
(for the natural-log KL used here, $\kappa_D = 1/\sqrt{2}$). Since (C2)
is an expectation under $\rho_{\bm{x}}$, we
bound the $\rho_{\bm{x}}$-averaged pairwise expected reward difference rather than
pointwise regret. For this finite-$\varepsilon$ bound we additionally require
the evaluated reward to be a common answer-level function for the fixed task
instance: there is a bounded
$r_{\bm{x}}:\mathcal{Y}_{\text{ans}}\cup\{\emptyset\}\to[R_{\min},R_{\max}]$
such that
$R(\bm{y},\tau(\bm{x}))=r_{\bm{x}}(\mathcal{A}_\tau(\bm{y}))$ for every
$\tau$ in the coverage support. This holds for prompt-independent answer-level
exact-match and numerical-tolerance rewards after extraction. If $R_s$ reads
the full reasoning trace, or if the reward function itself changes with
$\tau$, the same argument requires C2 on the corresponding score-relevant
marginal or an additional reward-function-shift term. Under the common
answer-level reward condition, Pinsker applies exactly at the
\emph{answer-marginal} level. For compactness write
\begin{equation}
\begin{aligned}
P_i^\varepsilon
&:=\pi^{\varepsilon,\mathcal{A}_{\tau_i}}(\cdot\mid\tau_i(\bm{x})),\\
m_i^\varepsilon
&:=\mathbb{E}_{a\sim P_i^\varepsilon}[r_{\bm{x}}(a)] .
\end{aligned}
\end{equation}
\begin{equation}
\begin{aligned}
  &\mathbb{E}_{\tau_1, \tau_2 \sim \rho_{\bm{x}}}
    \bigl|m_1^\varepsilon-m_2^\varepsilon\bigr| \\
  &\quad\le\; \Delta_R \,
              \mathbb{E}_{\tau_1, \tau_2 \sim \rho_{\bm{x}}}
              \mathrm{TV}\bigl(P_1^\varepsilon,P_2^\varepsilon\bigr) \\
  &\quad\le\; \Delta_R \, \kappa_D \,
              \mathbb{E}_{\tau_1, \tau_2 \sim \rho_{\bm{x}}}\!
              \sqrt{D\bigl(P_1^\varepsilon\,\|\,P_2^\varepsilon\bigr)} \\
  &\quad\le\; \Delta_R \, \kappa_D \,
              \sqrt{\mathbb{E}_{\tau_1, \tau_2 \sim \rho_{\bm{x}}}
                    D\bigl(P_1^\varepsilon\,\|\,P_2^\varepsilon\bigr)}
              \\
  &\quad\le\; \Delta_R \, \kappa_D \, \sqrt{\varepsilon},
\end{aligned}
\end{equation}
where the third inequality is Jensen on the concave $\sqrt{\cdot}$ and the
last applies (C2). The $\rho_{\bm{x}}$-averaged answer-level mean reward
difference of $\pi^\varepsilon$ thus vanishes as $\varepsilon \to 0$. A uniform pointwise
bound requires the stronger sup-variant of (C2)
($\sup_{\tau_1,\tau_2}\!D \le \varepsilon$). \qed

\subsection{Coverage-dependent transfer bound}
\label{sec:appendix_coverage}

\begin{proposition}[Coverage-dependent transfer bound]\label{prop:coverage}
Let the evaluated reward satisfy
$R(\bm{y},\tau(\bm{x}))\in[R_{\min},R_{\max}]$ for all $\bm{y}$ and $\tau$, and
write $\Delta_R:=R_{\max}-R_{\min}$. Let
$P_{\text{test}}\in\Delta(\mathcal{T})$ be any test transformation distribution
and let $\rho_{\bm{x}}$ be any coverage distribution on
$\mathcal{C}(\bm{x})$, viewed as a probability measure on $\mathcal{T}$ by zero
extension outside $\mathcal{C}(\bm{x})$. Define the coverage gap
$\eta := \mathrm{TV}(P_{\text{test}}, \rho_{\bm{x}})$. For any policy $\pi$,
\begin{equation}
  \bigl|\mathcal{R}(\pi,P_{\text{test}};\bm{x}) -
        \mathcal{R}(\pi,\rho_{\bm{x}};\bm{x})\bigr|
  \le \Delta_R\,\eta.
  \label{eq:coverage_transfer}
\end{equation}
That is, the test-time mean reward of $\pi$ differs from its training-coverage
mean by at most $\Delta_R\,\eta$. The bound vanishes as $\rho_{\bm{x}}$
approximates $P_{\text{test}}$ in TV. If $P_{\text{test}}$ is non-atomic while
$\rho_{\bm{x}}$ is a finite zero-extended pool, this TV gap can be $1$, so the
statement is useful only when small $\eta$ is a genuine coverage assumption or
the comparison is made on the same discrete transformation pool. This
distribution-shift bound does not use (C2); (C2) matters only when the coverage
reward is used as a proxy for other prompts.

\textbf{Important scope.} This bound transfers $\pi$'s own
reward across distributions; it does \emph{not} bound the regret of
$\pi$ against an $\llbracket\bm{x}\rrbracket$-optimal $\pi^\star$.
A constant-wrong, $0$-C2-feasible policy is a counter-example
to any regret-vs-$\pi^\star$ statement at $\eta{=}0$: invariance does not
imply optimality. Closing the gap to $\pi^\star$ additionally requires
semantically aligned rewards and successful population optimization; (C1)
only requires non-degenerate within-bucket reward variation.
\end{proposition}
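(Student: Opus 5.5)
The plan is to reduce the bound to the standard fact that expectations of bounded functions move by at most range times total variation. First, define the per-transformation mean reward $f_\pi(\tau):=\mathcal{R}(\pi,\tau;\bm{x})=\mathbb{E}_{\bm{y}\sim\pi(\cdot\mid\tau(\bm{x}))}[R(\bm{y},\tau(\bm{x}))]$. Because $R\in[R_{\min},R_{\max}]$ pointwise in $(\bm{y},\tau)$, we get $f_\pi(\tau)\in[R_{\min},R_{\max}]$ for every $\tau$. By the definition of the reward functional in Eq.~\eqref{eq:reward_functional}, $\mathcal{R}(\pi,\rho;\bm{x})=\int f_\pi\,d\rho$ for any transformation distribution $\rho$.

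This step needs $f_\pi$ to be $\Sigma_{\mathcal{T}}$-measurable, so that both integrals are defined against $P_{\text{test}}$ and against the zero-extended $\rho_{\bm{x}}$. I would take measurability of the policy kernel and of $R$ as implicit in the statement, since the reward functional is already assumed well defined. The only point needing care is the zero extension of $\rho_{\bm{x}}$. Appendix~\ref{sec:appendix_defs} fixes this convention, so $\rho_{\bm{x}}$ is a genuine probability measure on $(\mathcal{T},\Sigma_{\mathcal{T}})$ and $\mathrm{TV}(P_{\text{test}},\rho_{\bm{x}})$ is well defined as $\sup_A|P_{\text{test}}(A)-\rho_{\bm{x}}(A)|$.

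The core step is to center $f_\pi$. Set $g:=f_\pi-c$ with $c:=(R_{\min}+R_{\max})/2$, so that $|g|\le\Delta_R/2$. Both measures have total mass one, so the constant cancels:
\begin{equation}
\mathcal{R}(\pi,P_{\text{test}};\bm{x})-\mathcal{R}(\pi,\rho_{\bm{x}};\bm{x})=\int g\,d(P_{\text{test}}-\rho_{\bm{x}}).
\end{equation}
Next, take a Hahn decomposition of the signed measure $\mu:=P_{\text{test}}-\rho_{\bm{x}}$ with positive set $A$. Then $|\int g\,d\mu|\le\tfrac{\Delta_R}{2}\,|\mu|(\mathcal{T})=\tfrac{\Delta_R}{2}\cdot 2\,\mu(A)$, because $\mu(\mathcal{T})=0$ forces $\mu(A)=-\mu(A^c)$. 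Finally, $\mu(A)\le\sup_B|\mu(B)|=\eta$, which gives Eq.~\eqref{eq:coverage_transfer}.

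Alternatively, one could write $f_\pi-R_{\min}=\int_0^{\Delta_R}\mathbf{1}\{f_\pi>s\}\,ds$ (layer-cake) and bound each level set's measure difference by $\eta$. That route avoids Hahn decomposition and gives the same constant.

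The remaining remarks need only short arguments. The bound is tight when $f_\pi$ equals $R_{\max}$ on $A$ and $R_{\min}$ on $A^c$. The TV gap can equal $1$ when $P_{\text{test}}$ is non-atomic and $\rho_{\bm{x}}$ is finitely supported: take $B=\operatorname{supp}\rho_{\bm{x}}$, which is finite, so $P_{\text{test}}(B)=0$ while $\rho_{\bm{x}}(B)=1$. The scope caveat follows from a counterexample. A policy that always outputs the same wrong, correctly formatted answer has identical answer marginals across transformations, so it satisfies (C2) with $\varepsilon=0$. Its reward is nonetheless below that of the correct policy, so invariance with $\eta=0$ does not imply any regret bound.

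I expect no real obstacle here. The only delicate point is the measure-theoretic bookkeeping: measurability of $f_\pi$, and using the $\sup_A$ convention for TV rather than the unnormalized $L^1$ norm, which is what puts the factor $\Delta_R\eta$ rather than $2\Delta_R\eta$ in the bound.
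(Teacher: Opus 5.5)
Your proof is correct and takes the same route as the paper. Both apply the standard inequality $|\mathbb{E}_P f-\mathbb{E}_Q f|\le \mathrm{osc}(f)\,\mathrm{TV}(P,Q)$ to $f_\pi(\tau)=\mathcal{R}(\pi,\tau;\bm{x})$ with $P=P_{\text{test}}$ and $Q=\rho_{\bm{x}}$; you prove this inequality (by centering and a Hahn decomposition) where the paper cites it, and in your layer-cake alternative the indicator should read $\mathbf{1}\{f_\pi>R_{\min}+s\}$ rather than $\mathbf{1}\{f_\pi>s\}$.
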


\begin{proof}
For any bounded $f:\mathcal{T}\to\mathbb{R}$ with
$\mathrm{osc}(f)\le\Delta_R$ and probability measures $P, Q$ on
$\mathcal{T}$,
$|\mathbb{E}_P[f] - \mathbb{E}_Q[f]| \le \mathrm{osc}(f)\,\mathrm{TV}(P, Q)
\le \Delta_R\,\mathrm{TV}(P,Q)$. Apply this with
$f(\tau) := \mathbb{E}_{\bm{y}\sim\pi(\cdot\mid\tau(\bm{x}))}
[R(\bm{y},\tau(\bm{x}))]$
(whose oscillation is at most $\Delta_R$), $P = P_{\text{test}}$,
$Q = \rho_{\bm{x}}$: the bound is $\Delta_R\,\eta$.
\end{proof}

\paragraph{Tightness of $\eta\Delta_R$.}
For the TV transfer inequality itself, the bound is sharp even with
non-degenerate coverage. Let
$\rho_{\bm{x}} = (1-\gamma)\delta_{\tau_1}+\gamma\delta_{\tau_2}$ for
$\gamma\in(0,1)$, and let
$P_{\text{test}} =
(1-\gamma-\eta)\delta_{\tau_1}+\gamma\delta_{\tau_2}+\eta\delta_{\tau_3}$
with $\tau_3\notin\{\tau_1,\tau_2\}$ and $\eta\le 1-\gamma$. Construct $\pi$
so that its answer marginals on $\tau_1$ and $\tau_2$ are identical (hence it
is $0$-C2-feasible on the support of $\rho_{\bm{x}}$), with
$\mathcal{R}(\pi,\tau_1;\bm{x})
=\mathcal{R}(\pi,\tau_2;\bm{x})=R_{\max}$, and let
$\mathcal{R}(\pi,\tau_3;\bm{x})=R_{\min}$ outside the coverage support. Then
$\mathrm{TV}(P_{\text{test}},\rho_{\bm{x}})=\eta$ and the left-hand side of
Eq.~\eqref{eq:coverage_transfer} equals $\eta\Delta_R$.

\section{Embedding-Space Adversary Details}
\label{sec:appendix_adv}

The main adversary sampler $q_\phi$ perturbs the instruction in embedding space: it
adds a bounded perturbation $\bm{\delta}$ to the instruction-token embeddings,
$\tilde{\bm{e}}_{\mathcal{I}}(\bm{x})
= \bm{e}_{\mathcal{I}}(\bm{x}) + \bm{\delta}$ with
$\|\bm{\delta}\|_2 \le \epsilon_{\mathrm{emb}}$, and minimizes the policy's
expected DTR reward within this instruction-embedding ball. In the
projected-gradient
implementation below, $q_\phi$ is the point mass at the final perturbation;
with a stochastic perturbation head, expectations over $q_\phi$ are over
reparameterized perturbation samples. We write $\tau_\phi(\bm{x})$ for the
prompt carrying the perturbed instruction embedding. The parser/reward keeps
the original surface delimiter for embedding perturbations,
$d_{\tau_\phi}=d_{\bm{x}}$.

\paragraph{Estimator.} For $G>1$, treating the reward evaluation as a stop-gradient with
respect to the perturbation, the score-function gradient of the negative reward
with respect to the directly optimized instruction perturbation $\bm{\delta}$
(or a reparameterized perturbation sample, after the chain rule to $\phi$) is
estimated by
\begin{equation}
  \hat{\bm{g}} \approx \frac{1}{G}\sum_{i=1}^{G}
    (\bar R_{-i}-R_i)\,
    \nabla_{\bm{\delta}}\log\pi_\theta(\bm{y}_i\mid\tau_\phi(\bm{x})),
  \label{eq:ggds_reinforce}
\end{equation}
where $R_i = R_{\mathrm{DTR}}(\bm{y}_i,\tau_\phi(\bm{x}))$ and
$\bar R_{-i} = \frac{1}{G-1}\sum_{j\ne i}R_j$ is a leave-one-out baseline
independent of $\bm{y}_i$ conditional on the current perturbation. The sign uses
$\bar R_{-i}-R_i$ because the
adversary ascends the negative reward. If the reward function is differentiated
directly through $\tau_\phi$, an additional pathwise term
$-\nabla_{\bm{\delta}}R_{\mathrm{DTR}}$ appears; our implementation treats the
verifier output as non-differentiable and uses the score-function term as the
attack direction.

\paragraph{Objective.} The adversary maximizes the negation of the policy's
expected DTR reward over the $\epsilon_{\mathrm{emb}}$-ball:
\begin{equation}
\begin{aligned}
  \max_\phi\;
  &\mathbb{E}_{\bm{x}\sim\mathcal{D}}\!
    \Bigl[
      -\,\mathbb{E}_{\tau\sim q_\phi(\cdot\mid\bm{x})}\,
       \mathbb{E}_{\bm{y}\sim\pi_\theta(\cdot\mid\tau(\bm{x}))} \
       R_{\mathrm{DTR}}(\bm{y},\tau(\bm{x}))
    \Bigr],
  \label{eq:adv_obj}
\end{aligned}
\end{equation}
where $\tau\sim q_\phi(\cdot\mid\bm{x})$ denotes sampling a bounded
perturbation $\bm{\delta}$ and applying it to the instruction embedding.

\paragraph{Perturbation set.}
The main perturbation acts on the instruction-token embedding block. We use an
$\ell_2$ trust region of radius $\epsilon_{\mathrm{emb}}$ and
project $\bm{\delta}$ back onto
$\{\|\bm{\delta}\|_2\le\epsilon_{\mathrm{emb}}\}$ after each step. An
optional entropy regularizer on a stochastic $q_\phi$ (a perturbation head
with learned mean and diagonal covariance) keeps the adversary from collapsing
onto a single direction, so the policy can be trained against a spread of
worst-case perturbations rather than one fixed attack. The image content,
question content, and answer target are never modified; only the instruction
embedding block is perturbed in the main method.

\paragraph{Algorithm.}
At each adversarial step on input $\bm{x}$ with current policy $\pi_\theta$:

\begin{algorithm}[t]
  \caption{Embedding-space adversary one-step (per prompt $\bm{x}$)}
  \footnotesize
  \begin{algorithmic}[1]
    \STATE \textbf{Input:} prompt $\bm{x}$, radius $\epsilon_{\mathrm{emb}}$, ascent steps $K_{\mathrm{adv}}$, step size $\eta_{\mathrm{adv}}$, adversary group size $G_{\mathrm{adv}}$, current policy $\pi_\theta$.
    \STATE Initialize $\bm{\delta} \leftarrow \bm{0}$.
    \FOR{$j = 1, \dots, K_{\mathrm{adv}}$}
      \STATE Sample a rollout group $\{\bm{y}_i\}_{i=1}^{G_{\mathrm{adv}}} \sim \pi_\theta(\cdot \mid \tau_\phi(\bm{x}))$ at the current perturbation.
      \STATE Estimate $\hat{\bm{g}}$ via Eq.~\eqref{eq:ggds_reinforce} (REINFORCE on $-R_{\mathrm{DTR}}$).
      \STATE Ascend and project: $\bm{\delta} \leftarrow \Pi_{\|\cdot\|_2\le\epsilon_{\mathrm{emb}}}\!\bigl(\bm{\delta} + \eta_{\mathrm{adv}}\,\hat{\bm{g}}\bigr)$.
    \ENDFOR
    \STATE \textbf{Return} perturbed prompt $\tau_\phi(\bm{x})$ with instruction embedding $\bm{e}_{\mathcal{I}}(\bm{x})+\bm{\delta}$.
  \end{algorithmic}
\end{algorithm}

\paragraph{Hyperparameters.}
We set the trust-region radius $\epsilon_{\mathrm{emb}}$ to $10\%$ of the median
instruction-embedding norm, ascent steps $K_{\mathrm{adv}} = 4$, and a rollout
size $G = 4$ for the gradient estimate (a smaller group than the training group
$G = 16$, to keep adversary cost bounded). The adversary update is performed
every $4$ policy updates to amortize cost.

\paragraph{Why embedding-space.}
Perturbing the instruction embedding is a differentiable, model-agnostic surrogate
for the worst-case rewording in $\llbracket\bm{x}\rrbracket$: it needs no
candidate enumeration and directly models continuous instruction-embedding variation,
at the cost that an arbitrary $\bm{\delta}$ need not decode to a literal token
sequence. A discrete,
on-manifold variant that
searches a semantics-preserving substitution pool with gradient guidance is naturally an extension that we leave to the future work.

\paragraph{Compute cost.}
Per prompt per adversary step, the adversary requires: (a) one rollout of size
$G$ to estimate the REINFORCE gradient $\hat{\bm{g}}$
(Eq.~\eqref{eq:ggds_reinforce}); if this group is sampled at the final
perturbation, it can be reused by the policy update, otherwise it is counted as
additional adversary sampling cost; (b) one backward pass to the
instruction-embedding block, $O(|\text{instruction}|\cdot d)$; repeated over
$K_{\mathrm{adv}}$ ascent steps. With $K_{\mathrm{adv}} = 4$ this is roughly
$K_{\mathrm{adv}}$ forward--backward-pass-equivalents per adversary update.
Running the adversary update once every 4 policy updates limits the measured
wall-clock overhead to about 25\% above plain GRPO on Qwen2.5-VL-7B.

\subsection{Image- vs instruction-side perturbation}
\label{sec:appendix_img_text}

Our main adversary perturbs the instruction (text) embedding only
(\mainsec{3.2}). Fig.~\ref{fig:perturb_ablation} ablates the
training-time perturbation modality (image, text, or both) against test-time
perturbations on Qwen2.5-VL-7B. The instruction-side (Text) adversary recovers
most of the robustness to prompt rewording---the Text and joint (Both)
columns---whereas robustness to image-side perturbation requires image-side
adversarial training, and jointly perturbing both inputs is only marginally
better than the text adversary on the rewording columns. We therefore adopt the
instruction-side adversary in the main method and report the image-side variant
here for completeness.

\begin{figure*}[t]
  \centering
  \includegraphics[width=0.8\textwidth]{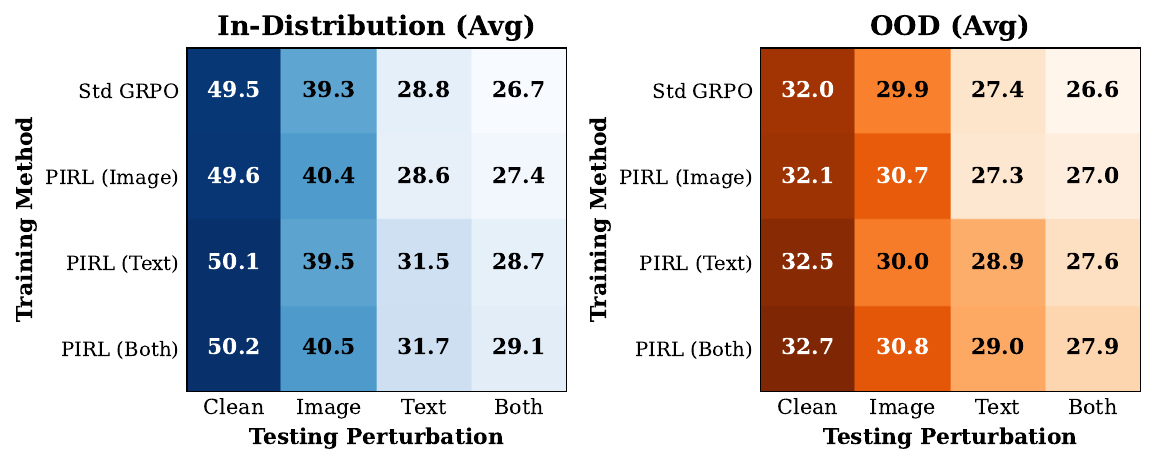}
  \caption{Ablation on training--testing perturbation combinations for
    Qwen2.5-VL-7B (in-distribution and OOD averages). Rows vary the adversarial
    perturbation applied during training (none for Std GRPO; image, text, or
    both for PIRL); columns vary the test-time perturbation, where ``Clean''
    applies none and ``Both'' applies joint image+text perturbation. The
    instruction-side (Text) adversary of our main method drives robustness to
    prompt rewording (Text and Both columns), whereas image robustness requires
    image-side adversarial training.}
  \label{fig:perturb_ablation}
\end{figure*}

\subsection{Full PIRL training loop}
\label{sec:appendix_alg}

Algorithm~\ref{alg:pirl} summarizes the complete PIRL training procedure. For
each task instance, full PIRL first samples
$\tau_0\sim P_{\mathrm{train}}$ from the five-template training pool, then
constructs an adversarial instruction perturbation around $\tau_0(\bm{x})$
through projected ascent on the negative DTR reward, and finally samples a
training rollout group under the resulting perturbed prompt. The no-MT ablation
uses $\tau_0=\mathrm{id}$. The corresponding DTR scores and DAN-normalized
advantages are used in the clipped policy objective of \maineq{10}, together
with the token-level consistency loss in \maineq{8} that regularizes the policy
distributions under the clean template-conditioned and adversarial prompts.
During the policy update, the generated adversarial prompt is treated as fixed.

As formalized in Definition~\ref{def:ie}, C2 promotes answer-level invariance across semantics-preserving prompt transformations by constraining the expected (f)-divergence between the answer marginals induced by transformation-dependent extractors under a non-degenerate coverage distribution. This formulation permits variations in prompt wording and required output format while encouraging the policy to preserve the same underlying semantic answer. As shown in Proposition~\ref{prop:c1c2_consequences}, sufficiently strong invariance control excludes format-dependent shortcut solutions within the covered transformation set. In practice, we optimize the length-normalized token-level consistency loss in \maineq{8} as a tractable differentiable surrogate for C2, rather than treating it as a formal certificate of answer-level invariance. The complete PIRL training procedure is presented in Algorithm~\ref{alg:pirl}.

\subsection{Instrumentation for the empirical \texorpdfstring{$\Phi_{\mathrm{wf}}$}{Phi-wf} trajectory}
\label{sec:appendix_ggds_metrics}

The hooks below produce a single-run diagnostic check of
Theorems~\ref{thm:snr} and~\ref{thm:dtr_decouples} without re-training. For
each rollout $\bm{y}_i$ in a
prompt group, log $V_f(\bm{y}_i, \mathcal{I})$ from the format checker and
the realized reward $R(\bm{y}_i, \bm{x})$ (binary $V$ for GRPO;
$R_t + \alpha_{\mathrm{rew}} R_s$ for
PIRL). At each training step, aggregate per-prompt-group estimates of the
conditional reward variances $\widehat{\mathrm{Var}}(R \mid V_f = v)$ for
$v \in \{0,1\}$ using Bessel-corrected sample variance when the bucket contains
at least two samples; buckets with fewer than two samples are pooled across
prompt groups at the same step before variance estimation, and the step's
$\widehat{\Phi}_{\mathrm{wf}}$ diagnostic is omitted if no pooled estimate is available for
each required realized bucket. Then form
$\widehat{\Phi}_{\mathrm{wf}} := \widehat{\mathbb{E}}[\widehat{\mathrm{Var}}(R\mid V_f)] / \widehat{\mathrm{Var}}(R)$.
Under the independence and approximately fixed-$p_c$ diagnostic model of
Theorem~\ref{thm:snr}, the GRPO trajectory should rise toward $1$ as
$\hat p_f \to 1$. Under the non-degeneracy assumption of
Theorem~\ref{thm:dtr_decouples}, PIRL has a $p_f$-independent numerator floor
$\alpha_{\mathrm{rew}}^{2}m$ for the reward-variance share; combining it with
Popoviciu's inequality gives the distribution-uniform constant in
Theorem~\ref{thm:dtr_decouples}, while
$\alpha_{\mathrm{rew}}^{2}m/\widehat{\mathrm{Var}}(R)$
is only a data-dependent diagnostic floor. These are diagnostic checks of the
theory's assumptions, not standalone proofs of training dynamics.

\begin{algorithm}[t]
  \caption{PIRL training pipeline}
  \label{alg:pirl}
  \footnotesize
  \begin{algorithmic}[1]
    \STATE \textbf{Input:} policy $\pi_\theta$, dataset $\mathcal{D}$, template distribution $P_{\mathrm{train}}$, radius $\epsilon_{\mathrm{emb}}$, ascent steps $K_{\mathrm{adv}}$, step size $\eta_{\mathrm{adv}}$, group sizes $G_{\mathrm{adv}},G_{\mathrm{train}}$, refresh cadence $C_{\mathrm{adv}}$, weight $\lambda$.
    \FOR{each adversary refresh (once per $C_{\mathrm{adv}}$ policy updates)}
      \STATE Set rollout policy $\theta_{\mathrm{old}}\leftarrow\theta$.
      \STATE Sample $\mathcal{B} = \{(\bm{x}, a^*)\} \sim \mathcal{D}$ and $\tau_0\sim P_{\mathrm{train}}$ per instance; form $\mathcal{B}_0=\{(\bm{x}_0,a^*):\bm{x}_0=\tau_0(\bm{x})\}$.
      \FOR{each $\bm{x}_0 \in \mathcal{B}_0$}
        \STATE Initialize $\bm{\delta} \leftarrow \bm{0}$.
        \STATE Let $\tau_\phi(\bm{x}_0)$ denote the prompt induced by the current $\bm{\delta}$.
        \FOR{$j = 1, \dots, K_{\mathrm{adv}}$}
          \STATE Sample $\{\bm{y}_i\}_{i=1}^{G_{\mathrm{adv}}} \sim \pi_{\theta_{\mathrm{old}}}(\cdot\mid \tau_\phi(\bm{x}_0))$; estimate $\hat{\bm{g}}$ via Eq.~\eqref{eq:ggds_reinforce}; ascend and project $\bm{\delta} \leftarrow \Pi_{\|\cdot\|_2\le\epsilon_{\mathrm{emb}}}(\bm{\delta} + \eta_{\mathrm{adv}}\,\hat{\bm{g}})$.
          \STATE Update $\tau_\phi(\bm{x}_0)$ to use instruction embedding $\bm{e}_{\mathcal{I}}(\bm{x}_0)+\bm{\delta}$.
        \ENDFOR
        \STATE $\tau_\phi(\bm{x}_0) \leftarrow$ prompt with instruction embedding $\bm{e}_{\mathcal{I}}(\bm{x}_0)+\bm{\delta}$; sample $\{\tilde{\bm{y}}_i\}_{i=1}^{G_{\mathrm{train}}} \sim \pi_{\theta_{\mathrm{old}}}(\cdot\mid \tau_\phi(\bm{x}_0))$.
        \STATE Compute trinary/semantic scores from Eqs.~(5) and~(6) in the main paper and DAN advantages $\hat A_i$ from \maineq{7}.
        \STATE Store the frozen adversarial prompt $\tau_{\bar\phi_{\bm{x}_0}}(\bm{x}_0) \leftarrow \tau_\phi(\bm{x}_0)$ and its rollout group.
      \ENDFOR
      \FOR{$c=1,\ldots,C_{\mathrm{adv}}$}
        \STATE Compute the batch consistency loss $|\mathcal{B}_0|^{-1}\sum_{\bm{x}_0\in\mathcal{B}_0}\mathcal{L}_{\mathrm{cons}}(\theta;\bm{x}_0,\bar\phi_{\bm{x}_0})$ via \maineq{8}, using teacher-forced prefixes in our implementation.
        \STATE Update $\pi_\theta$ via \maineq{10}, using the stored adversarial prompts and treating them as fixed.
      \ENDFOR
    \ENDFOR
  \end{algorithmic}
\end{algorithm}

\section{Dynamic Evaluation: Operator Set and Class-Decomposed Results}
\label{sec:appendix_dynamic_eval}

We implement dynamic evaluation by applying controlled transformations intended
to preserve the task label, building on Dynamic Multimodal Evaluation with
Vision-Language Bootstrapping~\citep{yang2025dynamicmmeval}. Unlike T-Stress,
D-Test is not restricted to prompt transformations in $\mathcal{T}$: image,
layout, and QA-format operations can change task difficulty and may occasionally
alter answer-relevant content. We therefore treat D-Test as a broader empirical
robustness stress test, not as a clean evaluation of the C2 equivalence class.

\paragraph{Operator pool.}
The D-Test applies the following mutations to input instances:
\begin{itemize}
  \item \textbf{Text:}
    rephrase (paraphrase or reorder wording without changing meaning);
    narrative distraction (add plausible but irrelevant sentences);
    option shuffle / equivalent replacement; format switch
    (QA $\leftrightarrow$ MC); choice expansion (e.g., ABCD $\to$ ABCDEF);
    answer negation (``None of the above'').
  \item \textbf{Image:}
    resize+padding (rescale with aspect-preserving padding);
    crop (center/random crop with ROI safeguards);
    rotation when the question is not orientation-sensitive;
    salt-and-pepper noise; color inversion when color is not the queried
    attribute; style transfer (lightweight filter-based style shift). These
    safeguards reduce, but do not eliminate, label-change risk.
\end{itemize}

\paragraph{Implementation.}
Our integration adds a dynamic/original branch to each per-benchmark loader in
the evaluation harness; the aggregate D-Test in
\maintable{2} averages over the full
operator pool.

\section{Supplementary Method Details}
\label{sec:appendix_supp_method}

\subsection{RLVR and GRPO Preliminaries}

We define the MLLM as a policy $\pi_\theta$ that generates a response
$\bm{y} = (y_1, \dots, y_T)$ conditioned on an input prompt $\bm{x}$. A verifier $V$
deterministically assigns a binary reward $r = V(\bm{y}, \bm{x}) \in \{0,1\}$, where
$r = 1$ indicates that the final answer satisfies task-specific correctness
constraints (e.g., template-based numeric matching). The RLVR objective
(e.g., GRPO) maximizes the expected verifiable reward:
\begin{equation}
  \max_{\theta} \;
  \mathbb{E}_{\bm{x} \sim \mathcal{D}}\,
  \mathbb{E}_{\bm{y} \sim \pi_\theta(\cdot \mid \bm{x})}
  \big[V(\bm{y},\bm{x})\big],
\end{equation}
where $\mathcal{D}$ denotes the empirical prompt distribution.

\paragraph{Group Relative Policy Optimization (GRPO).}
GRPO is a RLVR algorithm that optimizes policies using group-wise
relative rewards. Given a prompt $\bm{x}$, GRPO samples a group
$\bm{Y}=\{\bm{y}^{(i)}\}_{i=1}^G$ from the behavior policy
$\pi_{\theta_{\mathrm{old}}}$ and computes one normalized advantage per
response from the group's verifier outcomes. For
$\bm{y}^{(i)}=(y^{(i)}_1,\ldots,y^{(i)}_{T_i})$, the clipped surrogate is
token-level, as in PPO~\citep{schulman2017proximal}:
\begin{equation}
\begin{aligned}
  q_{\mathrm{old}}^G(\cdot\mid\bm{x})
  &:=
  \pi_{\theta_{\mathrm{old}}}^{\otimes G}(\cdot\mid\bm{x}),\\
  w_{i,t}(\theta)
  &:=
  \frac{\pi_\theta(y_t^{(i)}\mid\bm{x},\bm{y}_{<t}^{(i)})}
       {\pi_{\theta_{\mathrm{old}}}(y_t^{(i)}
          \mid\bm{x},\bm{y}_{<t}^{(i)})},\\
  c_{i,t}(\theta)
  &:=
  \mathrm{clip}\!\left(
    w_{i,t}(\theta), 1-\epsilon_{\mathrm{low}},
    1+\epsilon_{\mathrm{high}}
  \right),\\
  M_{i,t}(\theta)
  &:=
  \min\!\left(w_{i,t}(\theta) A^{(i)},
              c_{i,t}(\theta) A^{(i)}\right),\\
  d^{\mathrm{ref}}_{i,t}(\theta)
  &:=
  D_{\mathrm{KL}}\!\left(
    \pi_\theta(\cdot\mid\bm{x},\bm{y}_{<t}^{(i)})
    \,\middle\|\,
    \pi_{\mathrm{ref}}(\cdot\mid\bm{x},\bm{y}_{<t}^{(i)})
  \right),\\
  B_\theta(\bm{x},\bm{Y})
  &:=
  \frac{1}{G}\sum_{i=1}^{G}\frac{1}{T_i}
  \sum_{t=1}^{T_i}
  \left[M_{i,t}(\theta)-\beta d^{\mathrm{ref}}_{i,t}(\theta)\right].
\end{aligned}
\end{equation}
\begin{equation}
\begin{aligned}
  \mathcal{J}_{\text{GRPO}}
  &=
  \mathbb{E}_{\bm{x}\sim\mathcal{D}}\,
  \mathbb{E}_{\bm{Y}\sim q_{\mathrm{old}}^G(\cdot\mid\bm{x})}
  \left[B_\theta(\bm{x},\bm{Y})\right].
\end{aligned}
\end{equation}
The reference policy appears separately in the optional KL penalty. Here
$\epsilon_{\mathrm{low}}$ and $\epsilon_{\mathrm{high}}$ are the lower and
upper clipping radii and $\beta$ is the KL strength; the expectation is over
$\bm{x}\sim\mathcal{D}$ and old-policy rollout groups. $A^{(i)}$ is the
group-relative advantage:
\begin{equation}
  A^{(i)}
    = \frac{r^{(i)} - \mu_r}{\sigma_r + \epsilon_{\mathrm{norm}}},
    \label{eq:grpo_advantage}
\end{equation}
\begin{equation}
  \mu_r
    = \frac{1}{G}\sum_{j=1}^{G} r^{(j)},
\end{equation}
\begin{equation}
  \sigma_r^2
    = \frac{1}{G}\sum_{j=1}^{G} (r^{(j)} - \mu_r)^2,
\end{equation}
with $\epsilon_{\mathrm{norm}} > 0$ a small constant for numerical stability.

\paragraph{Generalized loss for the prompt-invariance objective.}
As defined in \maineq{10}, we write
$\mathcal{L}_{\mathrm{G}}^{\mathrm{DAN}}(\pi_\theta, q; R_t,R_s)$ for the
clipped GRPO loss in which transformations are sampled from the adversarial
sampler $q$ for a fixed task instance $\bm{x}$; this $q$ is not the C2 coverage
distribution $\rho_{\bm{x}}$. Advantages are computed from the decomposed
rewards by \maineq{7}. Define
\begin{equation}
\begin{aligned}
w_{i,t}(\theta, \tau)
&:= \frac{\pi_\theta(y_t^{(i)}
          \mid\tau(\bm{x}),\bm{y}_{<t}^{(i)})}
        {\pi_{\theta_{\mathrm{old}}}(y_t^{(i)}
          \mid\tau(\bm{x}),\bm{y}_{<t}^{(i)})},\\
c_{i,t}(\theta,\tau)
&:= \mathrm{clip}\!\left(
      w_{i,t}(\theta,\tau), 1-\epsilon_{\mathrm{low}},
      1+\epsilon_{\mathrm{high}}
    \right),\\
d^{\mathrm{ref}}_{i,t}(\theta,\tau)
&:= D_{\mathrm{KL}}\!\left(
      \pi_\theta(\cdot\mid\tau(\bm{x}),\bm{y}_{<t}^{(i)})
      \,\middle\|\,
      \pi_{\mathrm{ref}}(\cdot\mid\tau(\bm{x}),\bm{y}_{<t}^{(i)})
    \right),\\
M_{i,t}(\theta,\tau)
&:= \min\!\left(w_{i,t}(\theta,\tau)\hat A_i,\,
               c_{i,t}(\theta,\tau)\hat A_i\right),\\
\bar M_{\theta}(\tau,\bm{Y})
&:= \frac{1}{G}\sum_{i=1}^{G}\frac{1}{T_i}
    \sum_{t=1}^{T_i}M_{i,t}(\theta,\tau),\\
\bar D_{\mathrm{ref}}(\theta,\tau,\bm{Y})
&:= \frac{1}{G}\sum_{i=1}^{G}\frac{1}{T_i}
    \sum_{t=1}^{T_i}d^{\mathrm{ref}}_{i,t}(\theta,\tau),\\
q_\tau^G
&:= \pi_{\theta_{\mathrm{old}}}^{\otimes G}
    (\cdot\mid\tau(\bm{x})).
\end{aligned}
\end{equation}
Write
$\mathcal{L}_{\mathrm{DAN}}
:=\mathcal{L}_{\mathrm{G}}^{\mathrm{DAN}}(\pi_\theta,q;R_t,R_s)$.
\begin{equation}
\begin{aligned}
\mathcal{L}_{\mathrm{DAN}}
  &= -\,\mathbb{E}_{\tau\sim q}\,
    \mathbb{E}_{\bm{Y}\sim q_\tau^G}\!
    \left[\bar M_{\theta}(\tau,\bm{Y})\right] + \beta\,\mathbb{E}_{\tau\sim q}\,
    \mathbb{E}_{\bm{Y}\sim q_\tau^G}\!
    \left[\bar D_{\mathrm{ref}}(\theta,\tau,\bm{Y})\right].
\end{aligned}
\end{equation}
$\hat A_i$ is the DAN-normalized advantage of \maineq{7} on
the pair $(R_t,R_s)$, conditional on the sampled minibatch used by
$\mathrm{Norm}_{\mathcal{B}}$. Thus the practical policy-gradient objective is not
ordinary GRPO on the scalar reward alone; DAN is how DTR is plugged into GRPO.
We do not isolate DAN in the ablation.

\subsection{Interpretation of DTR}

DTR combines a format-aware term and a semantic-score term additively,
$R = R_t + \alpha_{\mathrm{rew}} R_s$.
The trinary $R_t$ distinguishes correct-and-formatted, wrong-but-formatted,
and format-failure cases, rather than collapsing all non-$(1,1)$ outcomes to
the same $V=0$ value. The score $R_s$ varies within each $V_f$ bucket under
the non-degeneracy assumption; when it is aligned with correctness, it can rank
responses inside those buckets. A
response with correct semantic reasoning but format failure still attains a
total reward of $-1 + \alpha_{\mathrm{rew}} R_s$, which can remain negative but is strictly
greater than $-1$ whenever $R_s > 0$; when $R_s$ varies across such responses,
the reward signal contains score variation within the format-failure bucket,
which the binary verifier collapses to a single $V=0$ value.
Theorem~\ref{thm:dtr_decouples}
gives the corresponding reward-variance lower bound.

\section{Datasets, Licenses, and Reproducibility}
\label{sec:appendix_repro}

This section consolidates dataset statistics, artifact licenses and intended
use, the compute budget, full hyperparameters, and software versions, for
reproducibility and responsible-research reporting.

\subsection{Datasets and Statistics}
\label{sec:appendix_data_stats}
Table~\ref{tab:datasets} summarizes the benchmarks, their role
(training, in-distribution, or out-of-distribution evaluation), domain, and
evaluation-set size. Exam-VQA RL training draws prompts from the MMK12 pool
($15{,}616$ image--question pairs); each online-RL iteration samples $200$
prompts with $G=16$ rollouts per prompt over $10$ iterations. Medical and legal
policies are trained on Lingshu~\citep{xu2025lingshu} and
DUDE~\citep{vanlandeghem2023dude}, respectively. Exam evaluation-set sizes are
measured directly from our evaluation harness; medical and legal sizes are the
standard public test/validation splits.

\begin{table*}[t]
  \centering
  \footnotesize
  \caption{Datasets used, with role, domain, number of instances used, and
  license. MMK12 reports train/evaluation counts; for the other Train rows the
  count is the training subset used, and $\dagger$ marks a standard public
  evaluation split. Exam evaluation sizes are measured from our harness.
  Licenses are reported to the best of our knowledge as distributed by the
  original sources.}
  \label{tab:datasets}
  \setlength{\tabcolsep}{6pt}
  \begin{tabular}{@{}lllrl@{}}
    \toprule
    \textbf{Dataset} & \textbf{Role} & \textbf{Domain} & \textbf{\#Used} & \textbf{License (as distributed)} \\
    \midrule
    MMK12~\citep{meng2025mm}                 & Train / ID eval & Exam (math/science)   & $15{,}616/2{,}000$        & Research use \\
    MathVista~\citep{lu2024mathvista}        & ID eval         & Exam (math)           & $1{,}000$        & CC BY-SA 4.0 \\
    Olym-Phys~\citep{he2024olympiadbench}    & OOD eval        & Exam (physics)        & $451$            & MIT \\
    \midrule
    Lingshu~\citep{xu2025lingshu}            & Train           & Medical               & $44{,}050$              & Research use \\
    VQA-RAD~\citep{lau2018dataset}           & ID eval         & Medical (radiology)   & $451^{\dagger}$  & CC0 1.0 \\
    PathVQA~\citep{he2020pathvqa}            & ID eval         & Medical (pathology)   & $6{,}761^{\dagger}$ & MIT \\
    GMAI-MMBench~\citep{chen2024gmaimmbench} & OOD eval        & Medical               & $5{,}000^{\dagger}$ & CC BY-NC-SA 4.0 \\
    \midrule
    DUDE~\citep{vanlandeghem2023dude}        & Train           & Legal / document      & $5{,}000$             & CC BY 4.0 \\
    DocVQA~\citep{mathew2021docvqa}          & OOD eval        & Legal / document      & $5{,}349^{\dagger}$ & Research use \\
    \bottomrule
  \end{tabular}
\end{table*}

\subsection{Licenses and Intended Use}
\label{sec:appendix_licenses}
Both base models, Qwen2.5-VL-7B-Instruct and Qwen3-VL-8B-Instruct, are released
under Apache-2.0. Dataset licenses are listed in Table~\ref{tab:datasets}. All
artifacts are used for non-commercial academic research consistent with their
intended research use; benchmarks accessed under research-only terms (medical and
legal VQA) are used solely in a research context and are not redistributed.
The artifacts we create---trained policy checkpoints and dynamic-evaluation
operators---are intended for research use only. Any release will carry its own
license and notices; users remain responsible for the terms of the source
datasets, which we do not redistribute.

\subsection{Models, Compute, and Infrastructure}
\label{sec:appendix_compute}
We train two instruction-tuned multimodal policies, Qwen2.5-VL-7B-Instruct and
Qwen3-VL-8B-Instruct ($7$--$8$B parameters each, including the
vision encoder). Training runs on $16\times$ NVIDIA H100-80GB GPUs with DeepSpeed
ZeRO-3 and bf16 mixed precision; policy rollouts are served with vLLM. The
embedding-space adversary adds about $25\%$ wall-clock over plain GRPO
(Section~\ref{sec:appendix_adv}); the total budget per trained policy is on the
order of a few hundred H100 GPU-hours for the $10$-iteration online-RL loop.

\subsection{Training Hyperparameters}
\label{sec:appendix_hparams}
Table~\ref{tab:hparams} lists the policy-optimization, sampling, and reward
settings. The trinary reward $R_t$ is combined with the semantic score $R_s$ at
weight $\alpha_{\mathrm{rew}}=0.01$; the consistency penalty (the
adversarial-to-clean token KL in \mainsec{3.2}) has
weight $\lambda=1.0$, and is distinct from the reference-KL coefficient, which we
set to $\beta=0$; we instead rely on the asymmetric $0.2/0.28$ clipping rule
to regularize the policy update, without claiming a hard trust-region bound.
The adversary trust-region radius and ascent schedule are given in
Section~\ref{sec:appendix_adv}. We did not run an extensive hyperparameter search:
optimizer and clip settings follow GRPO's settings, and $\alpha_{\mathrm{rew}}$,
$\lambda$, and the adversary radius were set by small manual sweeps on the Exam
validation split.

\begin{table*}[t]
  \centering
  \footnotesize
  \caption{Training, rollout, and reward hyperparameters (GRPO with DTR/DAN and
  the embedding-space adversary). Adversary radius and ascent cadence are in
  Section~\ref{sec:appendix_adv}.}
  \label{tab:hparams}
  \setlength{\tabcolsep}{6pt}
  \begin{tabular}{@{}llll@{}}
    \toprule
    \textbf{Hyperparameter} & \textbf{Value} & \textbf{Hyperparameter} & \textbf{Value} \\
    \midrule
    Policy optimizer        & GRPO              & Rollout group size $G$         & $16$ \\
    Learning rate           & $2\times10^{-6}$  & MM-projector LR                & $1\times10^{-5}$ \\
    Optimizer               & AdamW             & $(\beta_1,\beta_2,\epsilon)$   & $(0.9,0.999,10^{-8})$ \\
    Weight decay            & $0.01$            & Max grad norm                  & $1.0$ \\
    Clip range (low/high)   & $0.2 / 0.28$      & Reference-KL coef.\ $\beta$     & $0$ \\
    Online-RL iterations    & $10$              & Prompts per iteration          & $200$ \\
    Policy steps per iter.  & $\le 50$          & Per-device batch               & $2$ \\
    Grad.\ accumulation     & $2$               & Max sequence length            & $8192$ \\
    Max new tokens (rollout)& $4096$            & Precision                      & bf16 \\
    Rollout temperature     & $0.9$             & Rollout top-$p$                & $0.99$ \\
    $\alpha_{\mathrm{rew}}$ (semantic) & $0.01$  & $\lambda$ (consistency)        & $1.0$ \\
    Adversary LR $\eta_{\mathrm{adv}}$ & $5\times10^{-5}$ & ZeRO stage          & $3$ \\
    Tuned modules           & LLM\,+\,MLP       & Frozen modules                 & vision enc. \\
    Random seed             & $42/43/44$              & Eval decoding                  & greedy ($T{=}0$) \\
    \bottomrule
  \end{tabular}
\end{table*}

\subsection{Software and Packages}
\label{sec:appendix_software}
Training uses a custom Qwen-VL RL framework (PyTorch~2.7, Transformers~4.57.x,
DeepSpeed~0.16, FlashAttention~2.7.4) with vLLM for distributed rollout
generation. Evaluation uses our improved MedEvalKit harness with vLLM and
Transformers~4.57.x. Its scoring stack includes \texttt{nltk}~3.9.1,
\texttt{rouge}~1.0.1, \texttt{mathruler}~0.1.0, and
\texttt{qwen-vl-\allowbreak utils}: exact-match and numeric answers are scored with
\texttt{mathruler}, and open-ended answers are graded by an LLM judge
(GPT-4.1-nano) under greedy decoding. Unless otherwise noted, evaluation
generation uses temperature $0$, top-$p$ $0.01$, at most $1024$ new tokens, and
pass@$1$.

\section{Potential Risks}
\label{sec:appendix_risks}
Our method centers on an adversary that perturbs instruction embeddings to
surface worst-case rewordings, together with a suite of controlled input
operators for dynamic evaluation. This machinery is dual-use: rather than
hardening a model, an adversary could in principle repurpose the embedding-space
search or the operator suite to \emph{craft} prompt perturbations that degrade a
deployed model, or to probe for inputs on which it is brittle.

We mitigate this risk in several ways. First, the technique is fundamentally
defensive: its objective is to \emph{reduce} the prompt-induced performance gap,
and the released recipe strengthens robustness rather than providing a turnkey
attack. Second, the main adversary operates in continuous embedding space, and
its perturbations need not decode to literal, transferable text prompts
(\mainsec{3.2}; Limitations), so it is not directly usable as a
black-box textual attack. Third, the dynamic-evaluation operators are controlled
stress transformations meant for measuring robustness, not for generating
misleading or harmful content; as noted in
Appendix~\ref{sec:appendix_dynamic_eval}, they are not all certified members of
the semantic-equivalence class. We therefore release code and
operators for research use only, documented as evaluation and
robustness-training tools, and we recommend pairing any deployment with standard
input validation and robustness monitoring.

\begin{table}[h]
  \centering
  \caption{Pass@8 accuracy under standard evaluation (\%). Cited in \mainsec{4.6}.}
  \label{tab:pass8_results}
  \resizebox{0.65\linewidth}{!}{
    \begin{tabular}{lc|ccc}
      \toprule
      \textbf{Benchmark} & \textbf{OOD} & \textbf{Qwen2.5-VL-7B} & \textbf{GRPO} & \textbf{PIRL (Ours)} \\
      \midrule
      VQA\_RAD & $\times$ & 76.99 & 75.66 & \textbf{77.43} \\
      PathVQA & $\times$ & 58.81 & 58.22 & \textbf{58.86} \\
      GMAI-MMbench & $\checkmark$ & 44.85 & 42.39 & \textbf{44.90} \\
      \bottomrule
    \end{tabular}
  }
\end{table}

Finally, robustness is not correctness: a prompt-stable model can still be
confidently wrong. In the high-stakes medical and legal domains studied here,
PIRL-trained policies are intended to support, not replace, expert human
judgment, and should not be deployed as autonomous decision-makers.

\section{Pass@8 Results}
\label{sec:appendix_pass8}

\begin{figure}[H]
  \centering
  \includegraphics[width=0.6\linewidth]{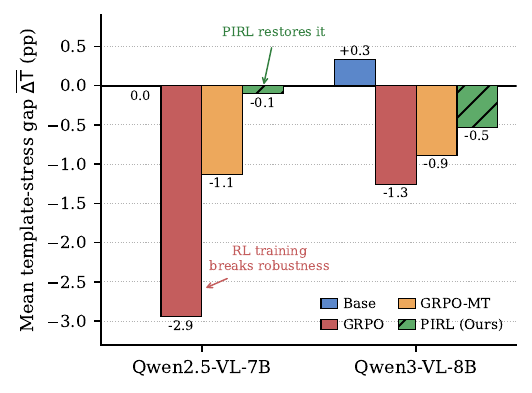}
  \caption{\textbf{Mean template-stress gap} $\overline{\Delta\mathrm{T}}$
  (over 7 benchmarks; closer to $0$ is more robust), recomputed from
  Table 1. Single-template GRPO
  sharply degrades robustness (Qwen2.5: $\approx\!0\!\to\!-2.9$ pp); GRPO-MT
  partly mitigates ($-1.1$); PIRL restores it to base level ($\approx\!0$)
  under the full PIRL configuration. The effect is milder on the more robust
  Qwen3-VL-8B.}
  \label{fig:mean_tstress_gap}
\end{figure}

\par\bigskip

\section{Supplementary Discussion}
\label{sec:appendix_discussion}

This section provides additional discussion to complement the findings in the
main content, offering nuanced interpretations of the prompt-robustness gap.

\begin{itemize}
\item \textbf{RLVR optimization can systematically impair prompt-level robustness.}
While RL-tuned models (e.g., Qwen2.5-VL) can outperform their base
versions on static VQA benchmarks~\citep{shao2024deepseekmath}, they can be
more fragile under distribution shift. A standard GRPO-tuned model on the Lingshu
medical VQA dataset~\citep{xu2025lingshu} drops about $10\%$ under dynamic evaluation, a larger degradation than the non-RL base model. This is consistent with the mechanism in
Proposition~\ref{prop:nonid_main}: unconstrained optimization on
$P_{\text{train}}$ can allow the policy to occupy a non-identifiable corner of
the policy space that under-performs on the rest of $\llbracket \bm{x}\rrbracket$.

\item \textbf{Failure can be amplified under rigorous stress testing.}
Fragility is amplified under stressed evaluation settings, where the prompt
distribution during test differs from training
(\maintable{1}). The test-mass scaling in
Proposition~\ref{prop:nonid_main} is consistent with this pattern when the
stress distribution places substantial mass on separated unseen prompt formats.

\item \textbf{Reasoning shortcuts vs.\ latent capacity.}
Our analysis suggests RLVR optimization can favor a narrow slice of the prompt
manifold that maximizes reward efficiency. In the formal model, this corresponds
to an out-of-distribution generalization failure: $P_{\text{train}}$
is concentrated on a specific template, while $P_{\text{test}}$
spreads mass over unseen prompt transformations. The policy overfits to prompt syntax rather than learning
the underlying reasoning ability~\citep{yue2025does,lu2025r}.
\end{itemize}

\end{document}